\definecolor{greyC}{RGB}{180,180,180}
\definecolor{greyL}{RGB}{235,235,235}
\definecolor{citeColor}{RGB}{0,20,115}
\definecolor{shadecolor}{rgb}{0.92,0.92,0.92}
\definecolor{shadecolor}{rgb}{0.92,0.92,0.92}
\theoremstyle{plain}
\newtheorem{theorem}{Theorem}[section]
\newtheorem{lemma}[theorem]{Lemma}
\theoremstyle{definition}
\newtheorem{assumption}[theorem]{Assumption}
\theoremstyle{remark}
\title{Diversified Outlier Exposure for Out-of-Distribution Detection via Informative Extrapolation}
\author{\\
\vspace{-8mm}
\textbf{Jianing Zhu}$^{1}$ \quad \textbf{Geng Yu}$^{2}$ \quad \textbf{Jiangchao Yao}$^{2,3}$ \quad \textbf{Tongliang Liu}$^{4}$ \\
\textbf{Gang Niu}$^{5}$ \quad \textbf{Masashi Sugiyama}$^{5,6}$ \quad \textbf{Bo Han}$^{1,5} \thanks{Correspondence to Bo Han (bhanml@comp.hkbu.edu.hk).}$\\\vspace{2mm}
$^{1}$Hong Kong Baptist University \quad
$^{2}$CMIC, Shanghai Jiao Tong University \\
$^{3}$Shanghai AI Laboratory\quad
$^{4}$Sydney AI Centre, The University of Sydney \\
$^{5}$RIKEN Center for Advanced Intelligence Project \quad
$^{6}$The University of Tokyo \\\vspace{2mm}
\textnormal{\{csjnzhu, bhanml\}@comp.hkbu.edu.hk}\\
\textnormal{\{warriors30, sunarker\}@sjtu.edu.cn} \quad \textnormal{tongliang.liu@sydney.edu.au}\\
\textnormal{gang.niu.ml@gmail.com} \quad
\textnormal{sugi@k.u-tokyo.ac.jp}
}
\renewcommand*{\@fnsymbol}[1]{\ensuremath{\ifcase#1\or \dagger\or \ddagger\or
   \mathsection\or \mathparagraph\or \|\or **\or \dagger\dagger
   \or \ddagger\ddagger \else\@ctrerr\fi}}
\begin{document}

\maketitle

\begin{abstract}
  Out-of-distribution (OOD) detection is important for deploying reliable machine learning models on real-world applications. Recent advances in outlier exposure have shown promising results on OOD detection via fine-tuning model with informatively sampled auxiliary outliers. However, previous methods assume that the collected outliers can be sufficiently large and representative to cover the boundary between ID and OOD data, which might be impractical and challenging. In this work, we propose a novel framework, namely, \textit{Diversified Outlier Exposure (DivOE)}, for effective OOD detection via informative extrapolation based on the given auxiliary outliers. 
  Specifically, DivOE introduces a new learning objective, which diversifies the auxiliary distribution by explicitly synthesizing more informative outliers for extrapolation during training. 
  It leverages a multi-step optimization method to generate novel outliers beyond the original ones, which is compatible with many variants of outlier exposure. Extensive experiments and analyses have been conducted to characterize and demonstrate the effectiveness of the proposed DivOE. The code is publicly available at: \url{https://github.com/tmlr-group/DivOE}.
\end{abstract}

\section{Introduction}

%\textbf{Introduction of OOD detection.}

Out-of-distribution (OOD) detection~\citep{Nguyen_2015_CVPR,hendrycks17baseline} gains increasing attention in deploying machine learning models into open-world scenarios, 
% concerning \jc{the false prediction for samples out of the pre-defined label space~\citep{yang2021generalized}}. 
as deep learning systems are expected to conduct reliable predictions on in-distribution (ID) data while identifying OOD inputs~\citep{bommasani2021opportunities,hendrycks2022scaling}. It becomes more critical in safety-critical applications like financial or medical intelligence, where the false prediction for samples out of pre-defined label space can sometimes be a disaster.
% a reliable model should have the capability to distinguish those samples having different label spaces (e.g. animals) instead of giving a prediction based on existing classes (e.g., finance products or disease). 
Extensive explorations~\citep{hendrycks2018deep,liu2020energy,Tack20CSI,MohseniPYW20,SehwagCM21,yang2021generalized,yangopenood2022} in recent years has been contributed to improving the model ability for OOD detection.

%\textbf{Previous Problem in OE.}

Compared with post-hoc methods~\citep{hendrycks17baseline,LiangLS18,liu2020energy,huang2021importance,SunM0L22} which designs different score functions for OOD uncertainty estimation, Outlier Exposure (OE)~\citep{hendrycks2018deep} is another kind of method owning better performance improvement on OOD detection~\citep{MohseniPYW20,SehwagCM21,ming2022poem,katzsamuels2022training,wang2023outofdistribution}. The latter engages surrogate OOD data during training and regularizes the model via fine-tuning those auxiliary outliers. The conceptual idea is to learn the knowledge from auxiliary outliers for effectively identifying OOD inputs. Under such a learning framework, there is an intuitive gap between the surrogate OOD data and the unseen OOD inputs~\citep{fang2022learnable,ming2022poem} as illustrated in Figure~\ref{fig1:illustration}. It will be a fundamental problem for boosting the model's discriminative capability on the OOD inputs. Given the finite auxiliary outliers, it naturally motivates the following critical research question: \textit{How could we utilize the given outliers for effective OOD detection if the auxiliary outliers are not informative enough}? 

Addressing the essential distribution gap between surrogate OOD data and the unseen OOD inputs remains challenging~\citep{hendrycks2018deep,fang2022learnable,du2022vos,yang2021generalized}, as it is hard to know the prior knowledge of potential OOD inputs would be encountered at inference stage~\citep{hendrycks2022scaling}, and intentionally collect them. As the unseen OOD inputs in the real-world scenarios are complex~\citep{cimpoi2014describing,yu2015lsun,yangopenood2022}, the performance of OE-based methods is heavily affected by the given auxiliary outliers (as empirically presented in Figure~\ref{fig2:motivation}). 
%it is even impractical to know whether the auxiliary outliers are informative enough to shape the decision area of ID and OOD data. 
Given the finite auxiliary outliers, the surrogate OOD data are expected to have less discrepancy~\citep{hendrycks2018deep,fang2022learnable} with the unseen OOD inputs, especially for the ones that are close to the decision boundary. Therefore,
%Assuming that the overall class space (including the known classes and unknown classes) is finite, 
a potential idea is to expand the given outliers to cover more informative distributions and better generalize to the unseen OOD inputs.

\begin{figure}
  \centering
  \includegraphics[scale=0.116]{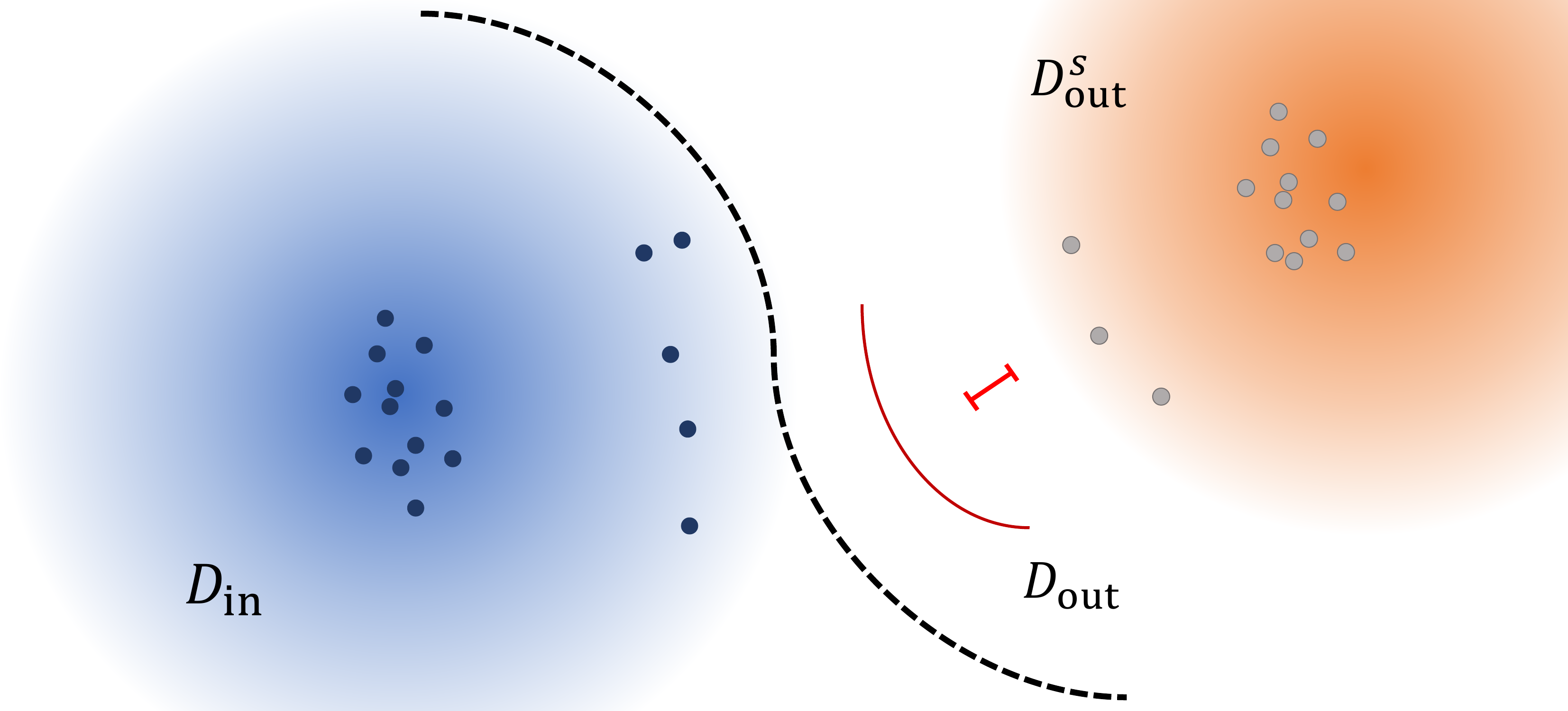}
  \hspace{0.01in}
  \includegraphics[scale=0.12]{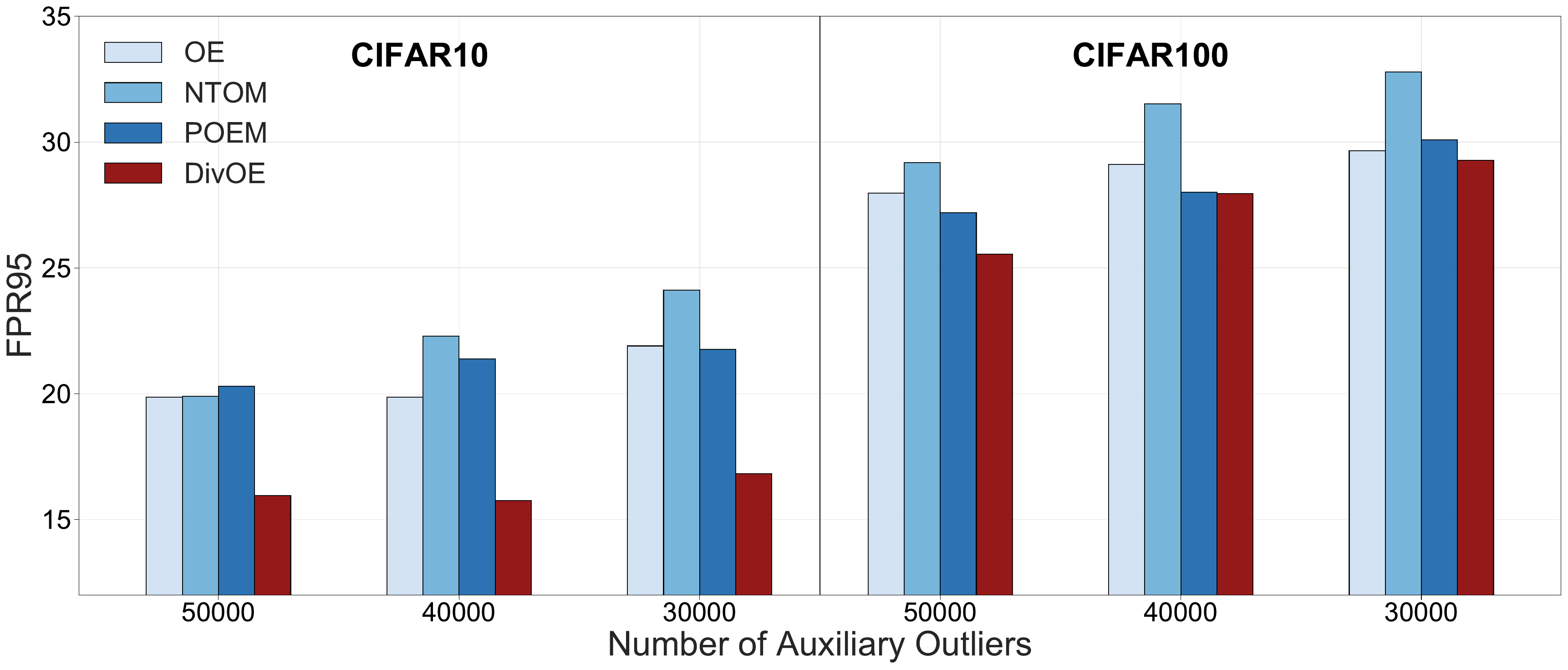}
  % \includegraphics[scale=0.12]{}
  % \includegraphics[scale=0.12]{}
  %\vspace{1mm}
  \caption{Illustration about the research problem on auxiliary outliers studied in our work for OE. Left panel: the general assumption on the auxiliary outliers, in which the surrogate OOD distribution $\mathcal{D}^s_\text{out}$ can not always be broad enough to well represent the unseen OOD distribution $\mathcal{D}_\text{out}$ since it is impractical or even infeasible to accurately pre-define and collect those boundary outliers (especially those close to $\mathcal{D}_\text{in}$); Right panel: an empirical verification on OOD detection performance (evaluated by FPR95, the lower is the better) with the gradually decreased number of given auxiliary outliers. Our proposed DivOE can perform better via extrapolation than previous sampling-based methods, given different numbers of auxiliary outliers. More experimental details are provided in Appendix~\ref{app:exp_allapp}.
  }
  \label{fig1:illustration}
  \vspace{-4mm}
\end{figure}

%\textbf{Proposed Method.}

Based on the previous analysis, we propose a new learning framework, i.e., \textit{\underline{Div}sified \underline{O}utlier \underline{E}xposure} (DivOE), to alleviate the pessimism of limited informative auxiliary outliers. At the high level, we aim to diversify the current distribution represented by the surrogate OOD data (e.g., Figure~\ref{fig2:motivation}). In detail, we introduce a novel learning objective (i.e., Eq.~(\ref{eq:con_obj})) that conducts informative extrapolation based on the given auxiliary outliers. Through a multi-step optimization target for generating a fraction of more informative outliers, DivOE can adaptively extrapolate and learn beyond the original OOD distribution. It realizes expanding the overall surrogate distribution of OOD data toward a broader coverage by maximizing the difference between the extrapolated one with its original counterpart. 

%\textbf{Experimental results and short summary.}

We conduct extensive experiments to characterize and present our proposed methods. We have verified its effectiveness with a series of OOD detection benchmarks mainly on two common ID datasets, i.e., CIFAR-10 and CIFAR-100, and also demonstrated its scalability using the ImageNet dataset with the large-scaled auxiliary outliers sampled from ImageNet21K. Under the various evaluations, our DivOE via informative extrapolation, can obtain the better OOD discriminative capability for the models and consistently achieve the lower averaged FPR95 compared with different baselines. Finally, a range of ablation studies from various aspects of the learning framework and further discussions from different perspectives are provided. Our main contributions are summarized as follows,

\begin{itemize}
    \item Conceptually, we study a more general and practical research setting in outlier exposure for OOD detection, considering the auxiliary outliers having limited information that can not well represent the whole decision areas for ID and OOD data. (in Section~\ref{sec:motivation})
    \item Technically, we propose a novel learning framework, namely \textit{Diversified Outlier Exposure} (DivOE), for facilitating fine-tuning with auxiliary outliers, which conducts informative extrapolation to adaptatively diversify outlier exposure. (in Sections~\ref{sec:diversified_investigate} and~\ref{sec:realization})
    \item Empirically, extensive explorations from different perspectives are conducted to verify the effectiveness of DivOE in improving OOD detection performance, and we perform various ablations or further discussions to provide a thorough understanding. (in Section~\ref{sec:experiment})
\end{itemize}

\section{Background}

In this section, we briefly introduce the preliminaries and related work in OOD detection.

\subsection{Preliminaries}

We consider multi-class classification as the original training task~\citep{Nguyen_2015_CVPR}, where $\mathcal{X}\subset\mathbb{R}^d$ denotes the input space and $\mathcal{Y}=\{1,\ldots, C\}$ denotes the label space. A reliable classifier should be able to figure out the OOD input, which can be considered a binary classification problem. Given $\mathcal{P}$, the distribution over $\mathcal{X}\times\mathcal{Y}$, we consider $\mathcal{D}_\text{in}$ as the marginal distribution of $\mathcal{P}$ for $\mathcal{X}$, namely, the distribution of ID data. At test time, the environment can present a distribution $\mathcal{D}_\text{out}$ over $\mathcal{X}$ of OOD data. In general, the OOD distribution $\mathcal{D}_\text{out}$ is defined as an irrelevant distribution of which the label set has no intersection with $\mathcal{Y}$ and thus should not be predicted by the model $f(\cdot)$. A decision model $g(\cdot)$ can be made with the threshold $\lambda$:
\begin{equation}
    g_{\lambda}(x;f)=\left \{
    \begin{aligned}
    &\text{ID} && S(x; f)\geq\lambda\\
    &\text{OOD} && S(x; f)<\lambda
    \end{aligned},
    \right.
\end{equation}
Building upon the model ${f}\in\mathcal{H}:\mathcal{X}\rightarrow\mathbb{R}^c$ trained on ID data with the logit outputs, the goal of decision is to utilize the scoring function $S:\mathcal{X}\rightarrow \mathbb{R}$ to distinguish the inputs of $\mathcal{D}_\text{in}$ from that of $\mathcal{D}_\text{out}$ by $S(x; f)$. If the score value is larger than the threshold $\lambda$, the associated input $x$ is classified as ID and vice versa. We consider several representative scoring functions designed for OOD detection in our exploration, e.g., MSP~\citep{hendrycks17baseline}, ODIN~\citep{LiangLS18}, and Energy~\citep{liu2020energy}. More detailed definitions are provided in Appendix~\ref{app:baseline_info}.

To mitigate the issue of over-confident predictions for~\citep{hendrycks17baseline,liu2020energy} some OOD data,  another line of research directions~\citep{hendrycks2018deep, Tack20CSI} utilize the auxiliary unlabeled dataset to regularize the model behavior. Among them, one representative baseline is outlier exposure (OE)~\citep{hendrycks2018deep}. OE can further improve the detection performance by making the model $f(\cdot)$ finetuned from a surrogate OOD distribution $\mathcal{D}^\text{s}_\text{out}$, and its corresponding learning objective is defined as follows,
\begin{equation}
    {\mathcal{L}}_\text{OE}=\mathbb{E}_{\mathcal{D}_\text{in}}\left[\ell_\text{CE}(f(x),y)\right]
     + \lambda \mathbb{E}_{\mathcal{D}^\text{s}_\text{out}}\left[\ell_\text{OE}(f(x))\right], \label{eq: oe}
\end{equation}
where $\lambda$ is the balancing parameter,  $\ell_\text{CE}(\cdot)$ is the Cross-Entropy (CE) loss, and $\ell_\text{OE}(\cdot)$ is the Kullback-Leibler divergence to the uniform distribution, which can be written as $\ell_\text{OE}(h(\boldsymbol{x}))=-\sum_k \texttt{softmax}_k~f(x) / C$, where $\texttt{softmax}_k (\cdot)$ denotes the $k$-th element of a softmax output. The OE loss $\ell_\text{OE}(\cdot)$ is designed for model regularization, encouraging the model to learn knowledge from the surrogate OOD inputs and return low-confident predictions~\citep{hendrycks17baseline}.

\subsection{Related Work}

\textbf{OOD Detection.} \citep{hendrycks17baseline} formally benchmarks the OOD detection problem, proposing to use softmax prediction probability as a conventional baseline method. Subsequent works~\citep{sun2021react} keep focusing on designing post-hoc metrics to distinguish ID samples from OOD samples, among which ODIN \citep{LiangLS18} introduces small perturbations into input images to facilitate the separation of softmax score, Mahalanobis distance-based confidence score \citep{10.5555/3327757.3327819} exploits the feature space by obtaining conditional Gaussian distributions, energy-based score \citep{liu2020energy} aligns better with the probability density. Except directly designing new score functions, some works~\citep{lin2021mood, SunM0L22, djurisic2023extremely} pay attention to various aspects to enhance the OOD detection such that LogitNorm \citep{wei2022logitnorm} produces confidence scores by training with a constant vector norm on the logits, DICE \citep{sun2022dice} reduces the variance of the output distribution by leveraging the model sparsification, and ASH \citep{djurisic2023extremely} explores the manipulation of feature representation to enhance OOD detection. In addition, \cite{zhu2023unleashing} improves detection performance by investigating the quality of ID data.

\textbf{OOD Detection with Auxiliary Outliers.}  Another promising direction toward OOD detection involves the auxiliary outliers for model regularization. 
On the one hand, some works explore generating virtual outliers such that VOS \citep{du2022vos} and NPOS~\citep{tao2023nonparametric} regularize the decision boundary by adaptively sampling virtual outliers from the low-likelihood region. On the other hand, other works exploit information from natural outliers, such that outlier exposure is introduced by \citep{hendrycks2018deep}, given that real OOD data are available in enormous quantities. %\citep{DBLP:conf/iccv/YuA19} train an additional "head" and maximizes the discrepancy of decision boundaries of the two heads to detect OOD samples. 
Energy-bounded learning \citep{liu2020energy} fine-tunes the neural network to widen the energy gap by adding an energy loss term to the objective. Recently, some works highlight the importance of sampling strategy, such that NTOM \citep{chen2021atom} greedily utilizes informative auxiliary data to tighten the decision boundary for OOD detection, and POEM \citep{ming2022poem} adopts Thompson sampling to contour the decision boundary precisely. The performance of training with outliers is usually superior to that without outliers, as shown in many other works~\citep{fort2021exploring, chen2021atom,salehi2021unified, wei2022logitnorm, du2022vos, wang2023outofdistribution,sun2023when}.

\begin{figure}
  \centering
  \includegraphics[scale=0.125]{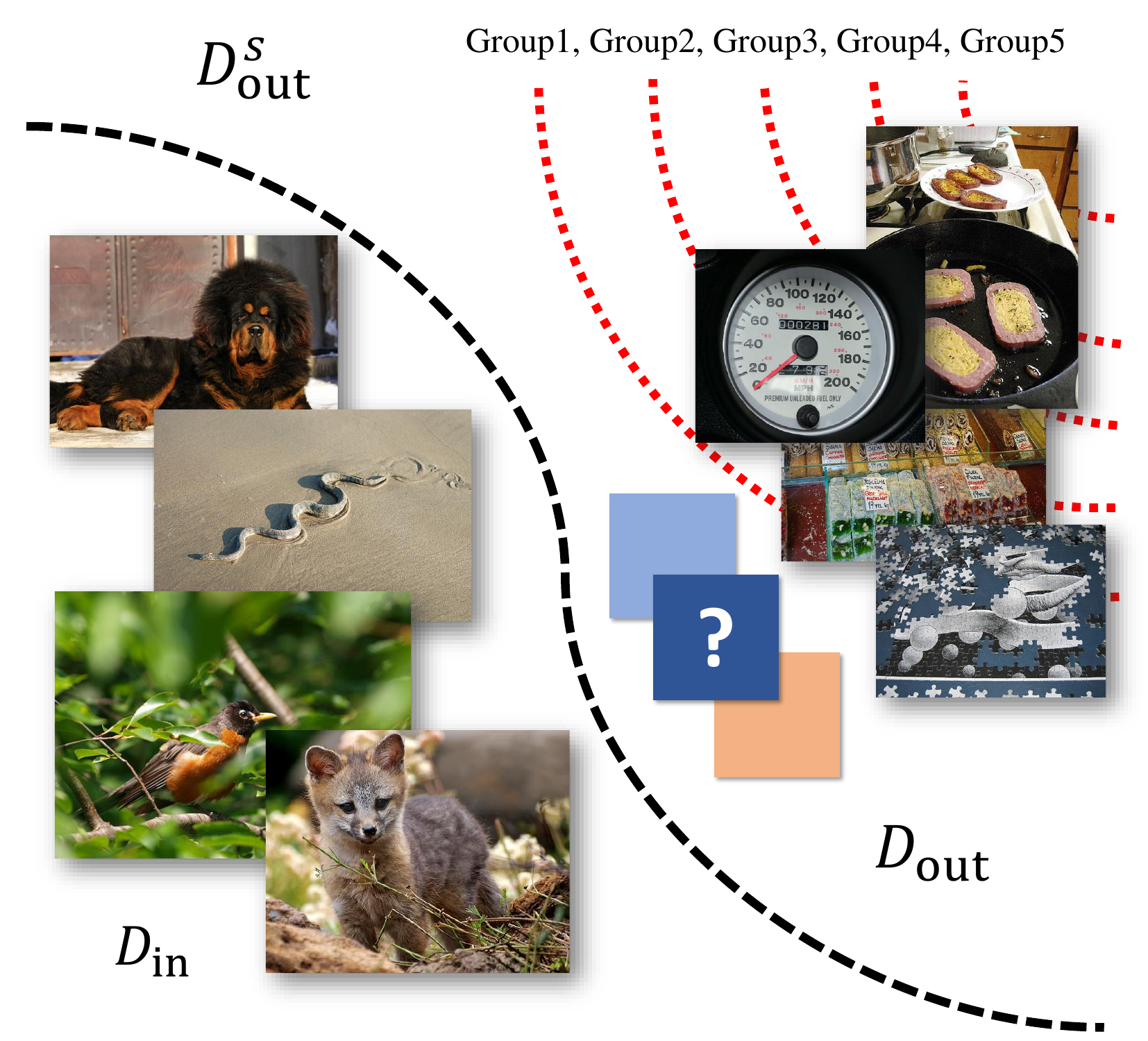}
  \includegraphics[scale=0.14]{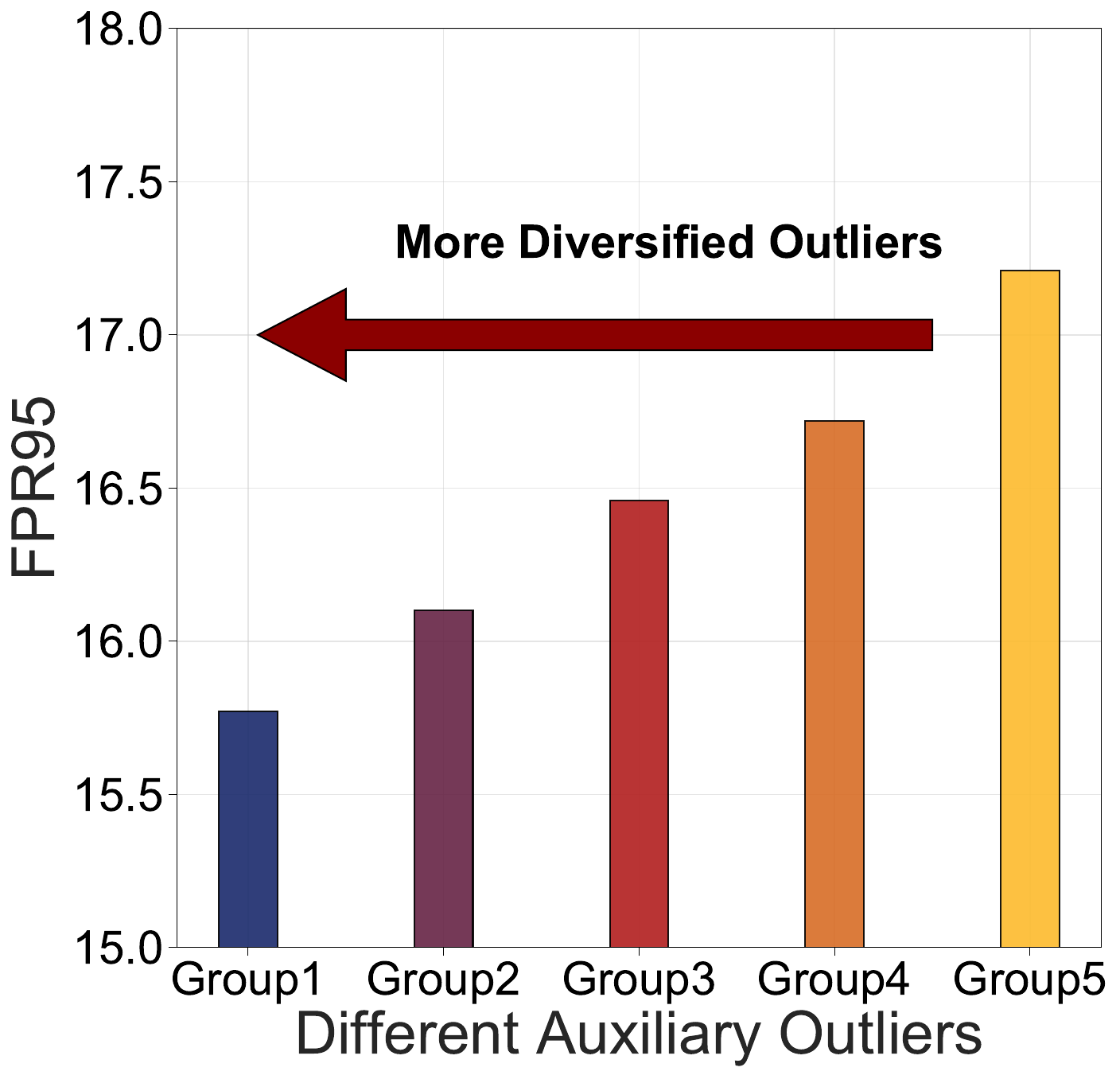}
  \includegraphics[scale=0.14]{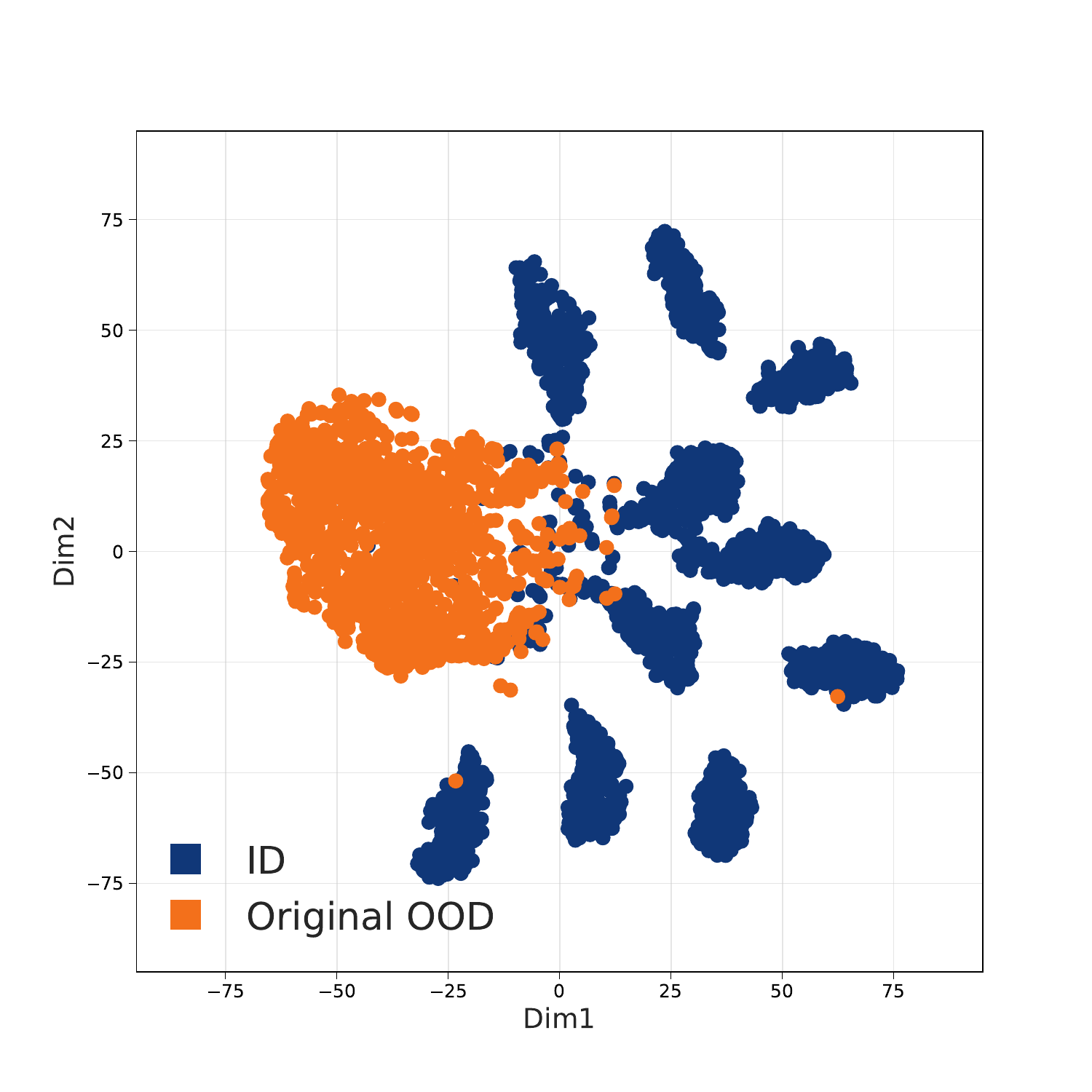}
  \hspace{-0.15in}
  \includegraphics[scale=0.14]{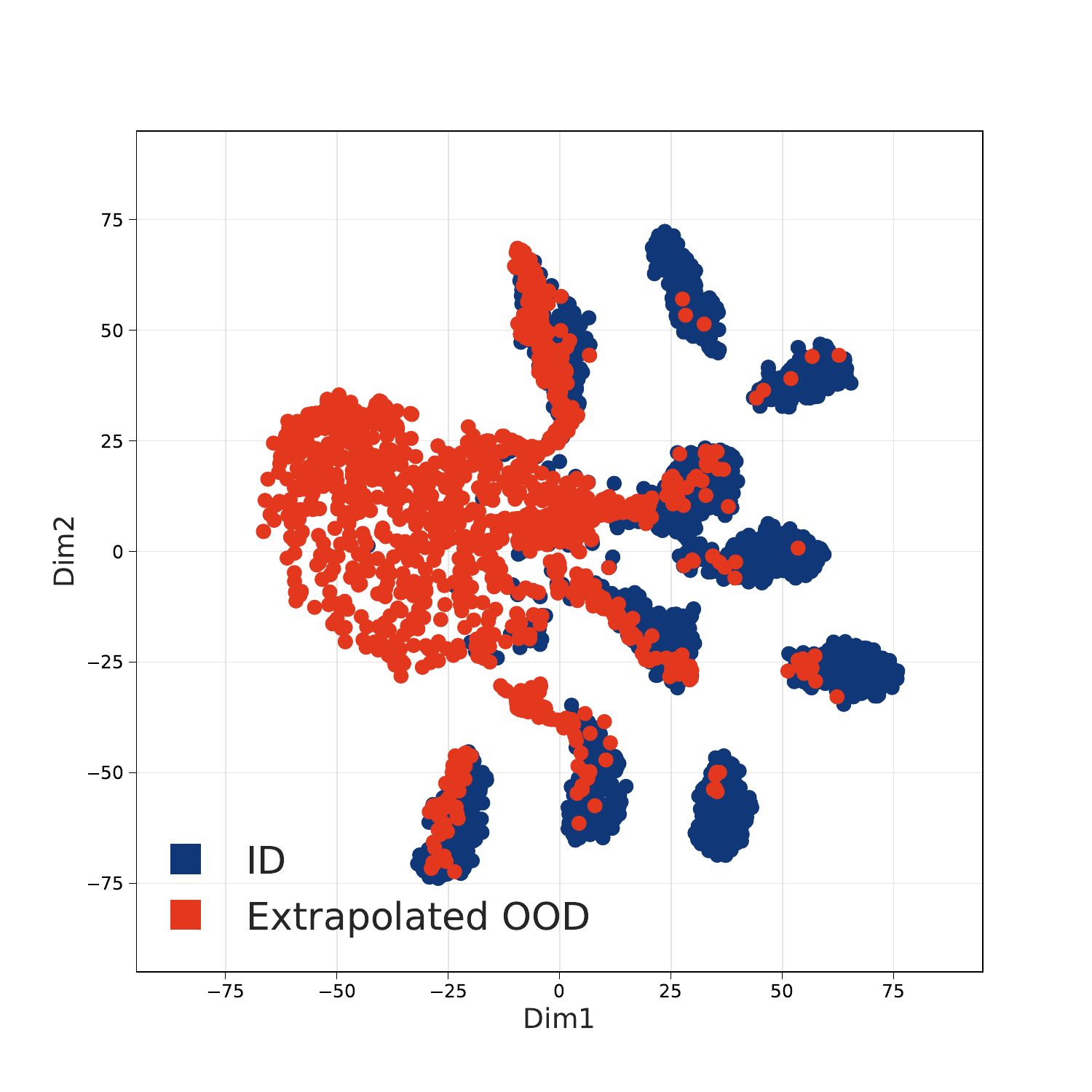}
  \caption{Empirical demonstration about informative outliers in OE and the diversification effect of the proposed DivOE. 
  In the two left panels, we illustrate auxiliary outliers in different informative levels and divide them into five groups according to the degree of diversity. Empirically, we show that the OOD detection performance of OE can be benefited by the most diversified Group1 among these trials. We leave the experimental details in Appendix~\ref{app:exp_allapp} for reference. In the two right panels, we compare the TSNE visualization of the original outliers with our extrapolated outliers in DivOE.}
  \label{fig2:motivation}
  \vspace{-4mm}
\end{figure}

\section{Method: Diversified Outiler Exposure}
\label{sec:method} 

In this section, we introduce our new framework, i.e., \textit{Diversified Outlier Exposure} (DivOE), which conducts informative extrapolation during fine-tuning with auxiliary outliers. First, we present and discuss the critical motivation that inspires our method (Section~\ref{sec:motivation}). Second, we introduce its newly derived learning objective and explain the underlying implications (Section~\ref{sec:diversified_investigate}). Lastly, we present the algorithmic realization of DivOE and discuss its compatibility with other methods (Section~\ref{sec:realization}).

\subsection{Motivation}
\label{sec:motivation}

First, collecting OOD samples near the boundary still remains challenging as we can hardly know which kind of samples are truly located in the decision boundary between ID and OOD space. Second, as empirically shown in the left-middle panel of Figure~\ref{fig2:motivation}, if the auxiliary outliers are not sufficiently informative (e.g., less diversified), the performance of outlier exposure will be limited due to the under-represented OOD distributions, which  can be reflected by the higher FPR95 score (indicating a higher error on OOD detection). Thus, it naturally arises the following research question,

\begin{quote}
\textit{How could we utilize the given outliers for effective OOD detection if the auxiliary outliers are not informative enough to represent the unseen OOD distribution?}
\end{quote}
% More specifically, the surrogate OOD data may not well represent the unseen OOD inputs that occurred in test-time. 
Especially for those OOD inputs that are close to the decision boundary, the auxiliary outliers may not well characterize such a broad distributional area~\citep{fang2022learnable,wang2023outofdistribution,zhang2023mixture}. Therefore, a new mechanism is required to diversify the outlier exposure by exploring more potential OOD distribution for effective OOD detection. As in the right two panels of Figure~\ref{fig2:motivation}, DivOE realizes this expectation by explicitly extrapolating the OOD data engaged during training.

\subsection{DivOE via Informative Extrapolation}
\label{sec:diversified_investigate}

As aforementioned, the OE paradigm heavily relies on the auxiliary outliers sampled from the surrogate OOD distribution. Given the auxiliary outliers that are not informative enough to represent the unseen OOD data, the model is expected to learn beyond the current surrogate OOD distribution. One conceptual idea to achieve this goal is to extrapolate based on the current surrogate distribution. Under such a concurrent learning paradigm, the model can be regularized by the original surrogate OOD distribution and simultaneously generalize beyond the given auxiliary outliers for OOD detection. To this intuition, we consider an adaptively evolved version of the surrogate OOD distribution $\mathcal{D}^{s}_\text{out}$ under the learning framework of OE as follows,
\begin{equation}\label{eq:obj}        {\mathcal{L}}^{*}_\text{OE}=\mathbb{E}_{\mathcal{D}_\text{in}}\left[\ell_\text{CE}(f(x),y)\right]
     + \lambda \mathbb{E}_{\mathcal{D}^\text{e}_\text{out}}\left[\ell_\text{OE}(f(\tilde{X}))\right],
\end{equation}
where $\mathcal{D}^{e}_\text{out}$ indicates the extrapolated surrogate distribution during training and $\tilde{X}$ indicates the newly manipulated samples that are different from those in the original objective (i.e., Eq.~(\ref{eq: oe})) of OE. In the above learning objective, the left part is for the original classification task, and the right part is for the newly proposed informative extrapolation in our DivOE. With the conceptual target for diversifying the current surrogate OOD distribution, we consider reformulating the specific objective as, 
\begin{equation}\label{eq:con_obj}
\mathbb{E}_{\mathcal{D}^\text{e}_\text{out}}\left[\ell_\text{OE}(f(\tilde{X}))\right] = (1-\beta)\ell_\text{OE}(f;\mathcal{D}^{s}_\text{out}) + \beta \underbrace{\max_{\Delta}\left[ \ell_\text{OE} (f;\mathcal{D}^{s+\Delta}_\text{out})-\ell_\text{OE}(f;\mathcal{D}^{s}_\text{out})\right]}_{\text{Informative Extrapolation}},
\end{equation}
where $\beta$ indicates a balancing factor for controlling the extrapolation ratio and $\mathcal{D}^{s+\Delta}_\text{out}$ represents the synthesized distribution with a manipulation $\Delta$. Intuitively, the overall objective defined in Eq.~(\ref{eq:con_obj}) not only learns from the original surrogate OOD distribution but also generalizes to the different OOD distributions via the maximization part. The underlying insight is to expand the surrogate OOD distribution towards a more diversified one. The informative extrapolation is realized by the maximization part that maximizes the differences between the original surrogate OOD distribution and the generated one. Note that, Eq.~(\ref{eq:con_obj}) corresponds to an expectation formulation based on the whole population for diversifying outlier exposure, which is prohibitively expensive in realization. To achieve that more efficiently, we introduce an instance-level version for practical realization, and its optimization target can be formulated as the following hybrid loss,
\begin{equation}\label{eq:hybird_obj}
    \ell_\text{OE}(f(\tilde{X})) = \frac{1}{n-rn}\underbrace{\sum_{n-rn}\ell_\text{OE}(f(x))}_{\text{Original Distribution}}+\frac{1}{rn}\underbrace{\sum_{rn}\max_{\delta}\ell_\text{OE}(f(x+\delta))}_{\text{Informative Extrapolation}},
\end{equation}
where $n$ indicates the sample number of auxiliary outliers and $r$ indicates the extrapolation ratio that corresponds to $\beta$ in Eq.~(\ref{eq:con_obj}). Except for the original loss on learning with the given outliers, the latter part of Eq.~(\ref{eq:hybird_obj}) defines a surrogate optimization part for realizing the informative extrapolation. 

Here we adopt instance-level maximization for manipulating the expected OOD samples based on the original one, in which the specific $\delta$ is obtained by the following optimization objective, 
\begin{equation}\label{eq:max_obj}
    \delta = \arg\max_{\delta}\ell_\text{OE}(f(x+\delta)), 
\end{equation}
Intuitively, DivOE will generate new outliers closer to the decision boundary 
% between ID and OOD decision areas 
to expand the coverage area of auxiliary outliers, and simultaneously be more informative to shape the decision area between ID and OOD data. The specific optimization process of $\delta$ is characterized as,
\begin{equation}\label{eq:score_obj}
    \delta \leftarrow \delta + \alpha\text{sign}(\nabla_{\delta}\ell_\text{OE}(x+\delta)),\quad \text{sign}(\nabla_{\delta}\ell_\text{OE}(x+\delta)) \leftarrow \text{sign}(\nabla_{\delta}\ell^{S(\cdot;f)}_\text{OE}(x+\delta)),
\end{equation}
where $\text{sign}$ is the function that extracts the sign of the tensor elements and $S(\cdot;f)$ indicates the general OOD score functions. The optimization process also indicates that our extrapolation framework is general and can adopt different score functions to generate targeted outliers. Here we also provide the theoretical implication of our proposed objective based on previous work~\citep{ming2022poem}.

\begin{figure}[t]
  \centering
  \includegraphics[scale=0.16]{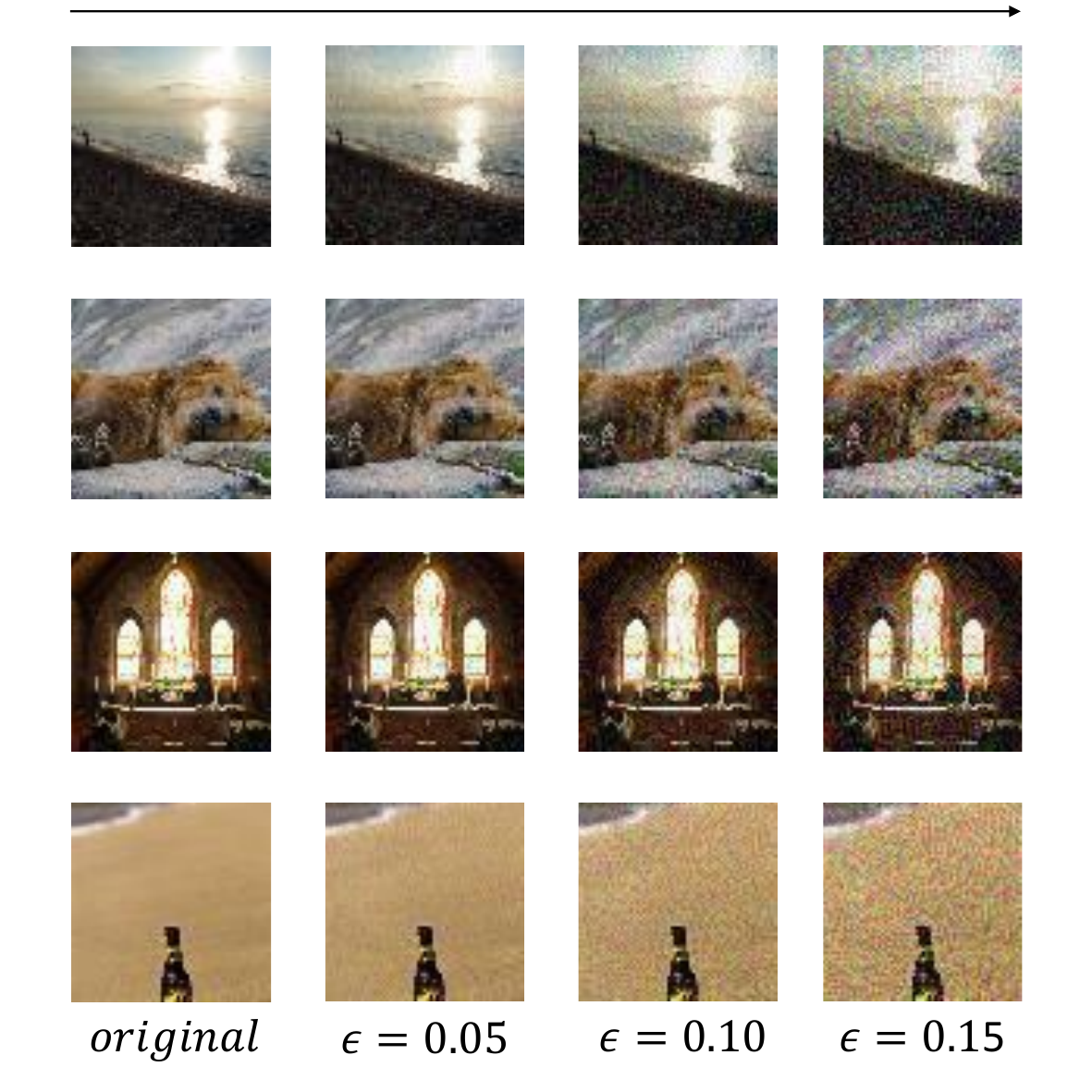}
  \includegraphics[scale=0.14]{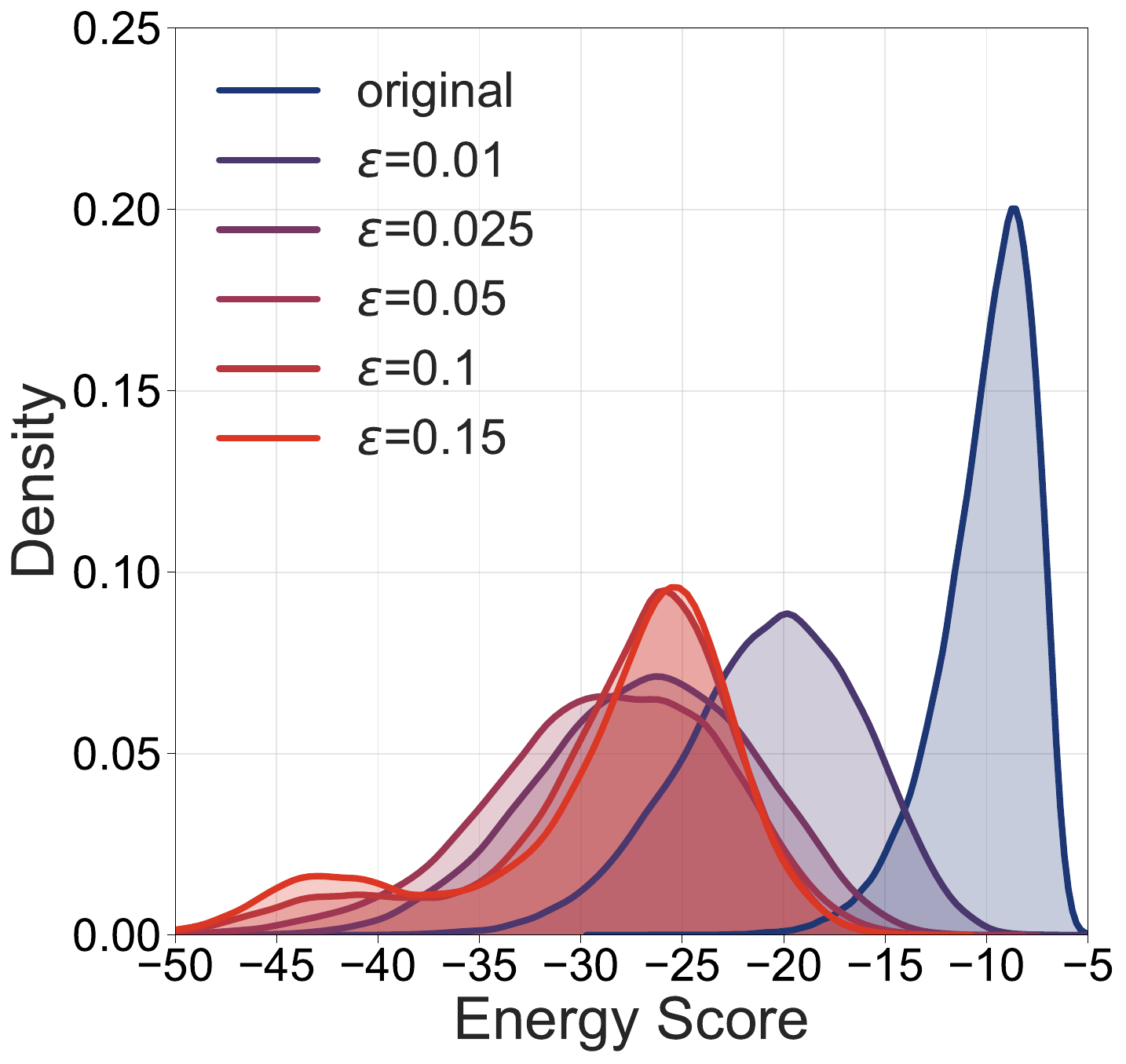}
  \includegraphics[scale=0.14]{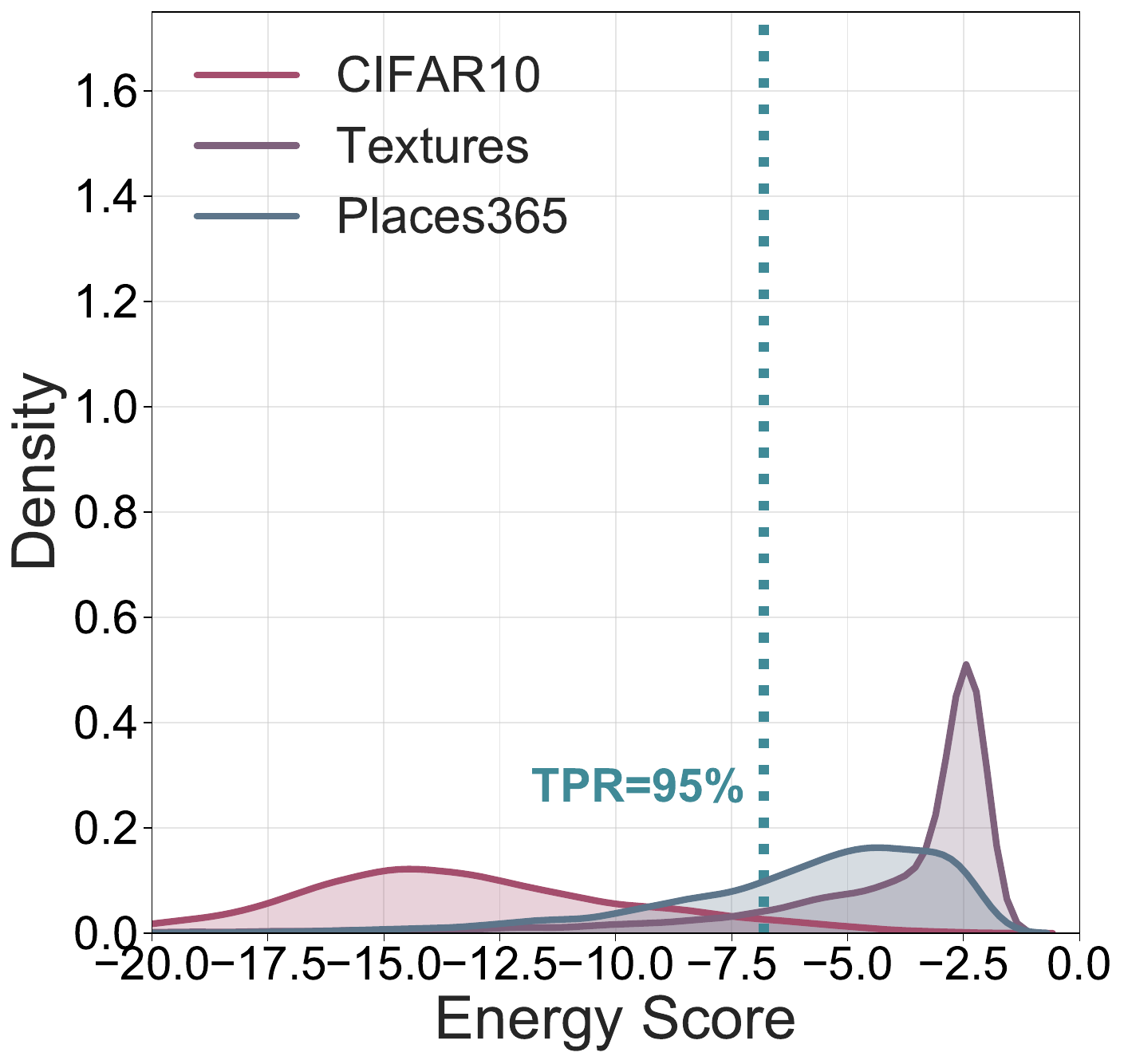}
  \includegraphics[scale=0.14]{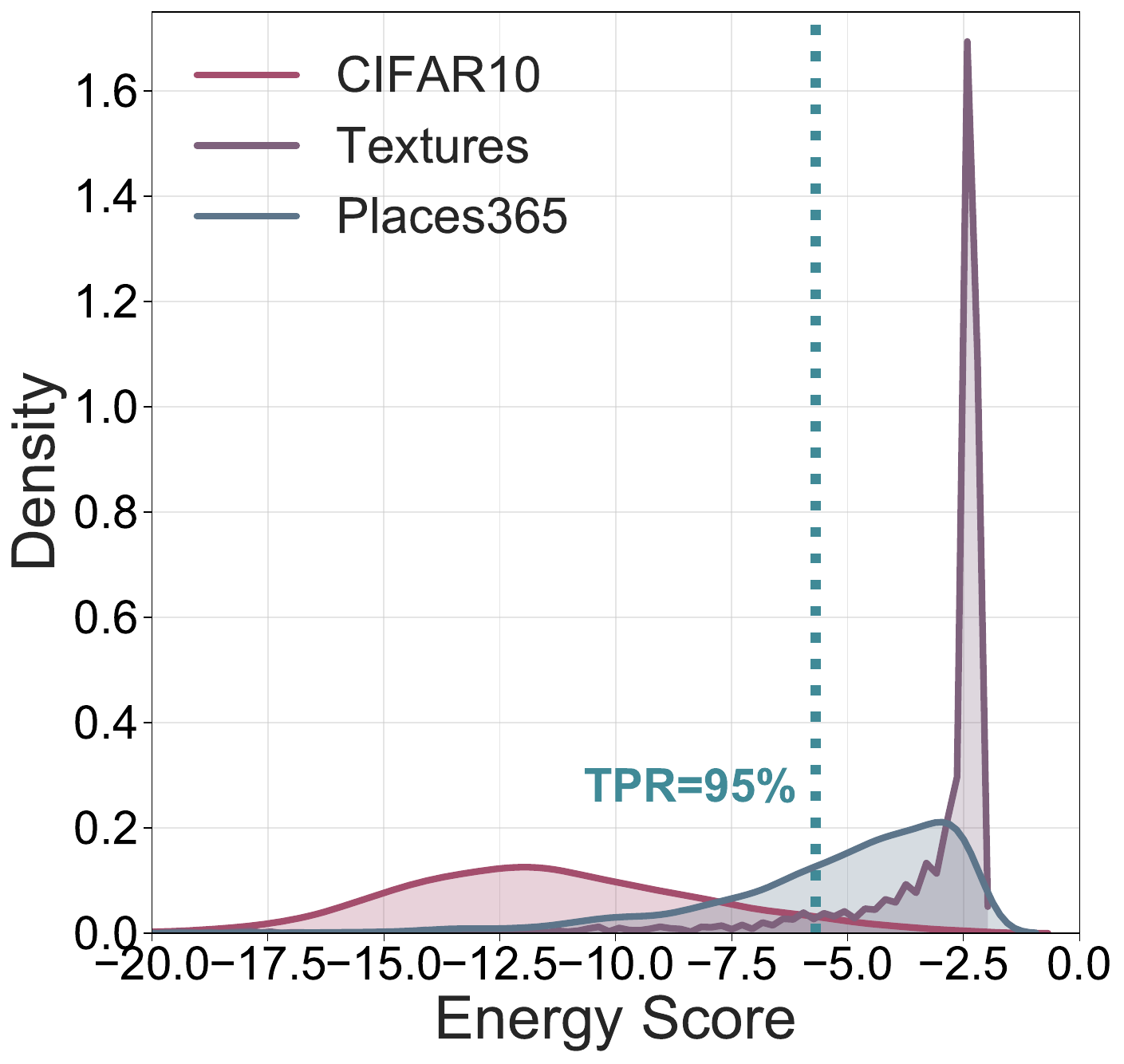}
  \caption{Empirical exploration of informative extrapolation. Left panel: visualization of extrapolated outliers w.r.t different diversified strength $\epsilon \in [0,0.15]$; Left-middle panel: the corresponding density of the energy score distributions on the different extrapolated outliers; Right-middle panel: comparison of ID/OOD distributions based on Energy score of the OE baseline; Right panel: comparison of ID/OOD distributions based on Energy score using the model fine-tuned by the proposed DivOE.}
  \label{fig3:method_effect}
  \vspace{-2mm}
\end{figure}

\begin{theorem}
    Given a simple Gaussian mixture model in binary classification with the hypothesis class $\mathcal{H}={\text{sign}(\theta^Tx), \theta\in \mathbb{R}^d}$. There exists constant $\alpha$, and $\sigma(\epsilon)$ that satisfy,
    \begin{equation}
        \frac{\mu^T\theta^*_{n_1,n_2}}{\sigma||\theta^*_{n_1,n_2}||} \geq \frac{||\mu||^2-\sigma^{\frac{1}{2}}||\mu||^{\frac{3}{2}}-\frac{\sigma^2(\alpha-\tau(\epsilon))}{2}}{2\sqrt{\frac{\sigma^2}{n}(d+\frac{1}{\sigma}) + ||\mu||^2}}.
    \end{equation}
\end{theorem}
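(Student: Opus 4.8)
The plan is to adapt the Gaussian-mixture analysis of \citet{ming2022poem} to the DivOE objective in Eqs.~(\ref{eq:hybird_obj})--(\ref{eq:max_obj}), tracking how the informative-extrapolation term changes the finite-sample estimator. First I would instantiate the toy model: let the ID class-conditional be $\mathcal{N}(y\mu,\sigma^2 I)$ for $y\in\{\pm 1\}$, draw $n_1$ labelled ID points and $n_2$ surrogate outliers, and replace an $r$-fraction of those outliers by their extrapolated versions $x+\delta$ with $\delta$ obtained from the inner maximization of $\ell_\text{OE}(f(x+\delta))$. In the linear-Gaussian regime this inner maximizer is explicit --- it shifts $x$ by a bounded amount along $\pm\theta$ so as to push $\theta^\top(x+\delta)$ toward $0$ --- so the extrapolated outliers act as a deterministic, $\theta$-aligned perturbation whose net effect I would summarize by the scalar $\tau(\epsilon)$.

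Second, I would write $\theta^*_{n_1,n_2}$ as the minimizer of the resulting empirical objective and reduce it to a mean-difference-type estimator of the form $\theta^*_{n_1,n_2}=\bar c\,\mu + \xi$, where $\xi$ is zero-mean Gaussian noise of scale $\sigma/\sqrt n$ and the scalar $\bar c$ collects the contribution of the ID cross-entropy term together with the (extrapolated) outlier regularizer, so that the $\tau(\epsilon)$-dependence enters through $\bar c$. Splitting the numerator as $\mu^\top\theta^*_{n_1,n_2}=\bar c\|\mu\|^2+\mu^\top\xi$, bounding $\mu^\top\xi$ by a scalar Gaussian tail of order $\|\mu\|\sigma/\sqrt n$, and lower-bounding $\bar c$ in terms of $\|\mu\|$ and $\alpha-\tau(\epsilon)$, I expect to recover the numerator $\|\mu\|^2-\sigma^{1/2}\|\mu\|^{3/2}-\tfrac{\sigma^2(\alpha-\tau(\epsilon))}{2}$ after absorbing lower-order terms into the $\sigma^{1/2}\|\mu\|^{3/2}$ slack.

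Third, for the denominator I would use $\|\theta^*_{n_1,n_2}\|^2\le \bar c^2\|\mu\|^2+\|\xi\|^2$ and control $\|\xi\|^2$ with a $\chi^2$ (sub-exponential) concentration inequality, which on a high-probability event gives $\|\xi\|^2\lesssim \tfrac{\sigma^2}{n}\bigl(d+\tfrac1\sigma\bigr)$; combining this with $\bar c\approx 1$ produces the stated denominator $2\sqrt{\tfrac{\sigma^2}{n}(d+\tfrac1\sigma)+\|\mu\|^2}$. Taking $\alpha$ to be the universal constant for which all the invoked concentration events hold simultaneously, and defining $\sigma(\epsilon)$ as the largest noise level keeping the numerator positive, then yields the claimed inequality.

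The step I expect to be the main obstacle is the data-dependence of $\delta$: since $\delta$ is chosen adversarially from the current parameters, the extrapolated outliers are correlated with $\theta$ and cannot be concentrated naively. I would handle this either by exploiting the closed form of the inner maximizer in the linear-Gaussian case (turning it into a deterministic shift captured by $\tau(\epsilon)$) or by a uniform bound over the perturbation ball; matching the exact algebraic constants --- particularly the $\tfrac12$ coefficient on $\sigma^2(\alpha-\tau(\epsilon))$ and the $\sigma^{1/2}\|\mu\|^{3/2}$ correction --- to the form derived in \citet{ming2022poem} is then careful but routine bookkeeping.
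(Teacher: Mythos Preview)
Your proposal differs from the paper's argument in two structural ways that are worth flagging.

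First, the paper does \emph{not} obtain $\theta^*_{n_1,n_2}$ as the minimizer of the empirical DivOE objective. It is defined explicitly as the sample-mean-difference classifier
\[
\theta^*_{n_1,n_2}=\frac{1}{n_1+n_2}\Big(\sum_{i=1}^{n_1}x_i^1-\sum_{i=1}^{n_2}x_i^2\Big),
\]
and then decomposed as $\theta^*_{n_1,n_2}=\mu+\frac{n_1}{n_1+n_2}\theta_1+\frac{n_2}{n_1+n_2}\theta_2$ with $\theta_1=\frac{1}{n_1}\sum x_i^1-\mu$ and $\theta_2=-\frac{1}{n_2}\sum x_i^2-\mu$. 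The signal coefficient in front of $\mu$ is exactly $1$, not a data-dependent $\bar c$; so $\tau(\epsilon)$ does \emph{not} enter through a multiplicative factor on $\|\mu\|^2$ as you anticipate, but rather through the additive bound on $|\mu^\top\theta_2|$.

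Second, the paper does not analyze the inner maximization at all. The whole effect of extrapolation is encoded in an \emph{assumption} (their Assumption~B.1): the extrapolated outliers are postulated to satisfy the boundary-score constraint $\sum_i f_{\text{outlier}}(x_i)\le(\alpha-\tau(\epsilon))n$, which in the GMM reduces (Lemma~B.2) to $\sum_i |2x_i^\top\mu|\le n\sigma^2(\alpha-\tau(\epsilon))$. From this one reads off directly $|\mu^\top\theta_2|\le \|\mu\|^2+\tfrac{\sigma^2(\alpha-\tau(\epsilon))}{2}$, which is exactly where the $\tfrac{1}{2}\sigma^2(\alpha-\tau(\epsilon))$ term in the numerator comes from. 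The $\sigma^{1/2}\|\mu\|^{3/2}$ correction is nothing more than the scalar Gaussian tail bound $|\mu^\top\theta_1|/\|\mu\|\le(\sigma\|\mu\|)^{1/2}$, and the denominator comes from the $\chi^2$ bound $\|\theta_j\|^2\le\frac{\sigma^2}{n}(d+\frac{1}{\sigma})$ applied to both $\theta_1$ and (as a truncated distribution) $\theta_2$.

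So your plan to derive $\tau(\epsilon)$ from the closed-form inner maximizer, and your identified obstacle about the data-dependence of $\delta$, are both more ambitious than what the paper actually does: the paper sidesteps that difficulty entirely by assumption. Your route could in principle give a more honest derivation of $\tau(\epsilon)$, but it is not needed to reproduce the stated theorem, and the constants you were worried about matching fall out mechanically once you use the paper's decomposition and Lemma~B.2 rather than the $\bar c\mu+\xi$ form.
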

From the perspective of sample complexity, we present the effects of our informative extrapolation. As $\text{FPR}(\theta^*_{n_1,n_2})=erf(\frac{\mu^T\theta^*_{n_1,n_2}}{\sigma||\theta^*_{n_1,n_2}||})$ is monotonically decreasing, the lower bound of $\frac{\mu^T\theta^*_{n_1,n_2}}{\sigma||\theta^*_{n_1,n_2}||}$ will increase with the constraint of $(\alpha-\tau(\epsilon))$ (which corresponds to our DivOE that extrapolates the outlier boundary towards ID data) decrease in our methods, the upper bound of $\text{FPR}(\theta^*_{n_1,n_2})$ will decrease. The above indicates the benefit of DivOE with the informative extrapolation on shaping boundary decision areas. The completed definition and the analysis can be referred to in Appendix~\ref{app:theo}. %\todo{Discussion?}

\subsection{Realization of DivOE}
\label{sec:realization}

In this part, we introduce the realization of the whole learning framework of DivOE in detail. Here we formally present the learning objective of DivOE via informative extrapolation as follows, 
\begin{equation}\label{eq:divoe_formal_obj}
    {\mathcal{L}}_\text{DivOE}=\frac{1}{m}\sum_{m}\ell_\text{CE}(f(x),y)
     + \lambda (\frac{1}{n-rn}\sum_{n-rn}\ell_\text{OE}(f(x))+\frac{1}{rn}\sum_{rn}\ell_\text{OE}(f(x+\delta))),
\end{equation}
where $m$ and $n$ indicate the sample numbers of ID data and auxiliary outliers respectively, other notations keep the same meaning as the previous equations. Concretely, the maximization part of DivOE adopts the multi-step optimization (e.g., projected gradient decent~\citep{bottou2012stochastic,madry2017towards}) to realize the targeted outlier synthetics. The specific process is defined as,
\begin{equation}
    \label{eq:perturbation_obj}
    {(x)}^{(t+1)} = \Pi_{\mathcal{B}[{x}^{(0)};\epsilon]} \big( {x}^{(t)} +\alpha \text{sign} (\nabla_{{x}^{(t)}} \ell_\text{OE}(f({x}^{(t)}))  )  \big ),
\end{equation}
where $t \in \mathbb{N}$, ${x}^{(t)}$ is the extrapolated OOD data at step $t$, and $\Pi_{\mathcal{B}[{x}^{(0);\epsilon}]}(\cdot)$ is the projection function that projects the new OOD data back into the $\epsilon$-ball centered at ${x}^{(0)}$ if necessary. Different from the conceptual idea of constraining the perturbation (for human-imperceptibility) in the literature on adversarial robustness~\citep{madry2017towards,Zhang_trades}, the projection operation in DivOE provides a quantification way to study the extrapolated distribution and its diversification. As the generation process starts from the given outliers, the semantical class (belongs to ID or OOD) is hard to change, as empirically shown in the left-most panel of Figure~\ref{fig3:method_effect} (we adopted a very large $\epsilon=0.15$). More exploration about this perspective will leave to Section~\ref{sec:ablation} for further discussion.

\begin{algorithm}[!ht]
%\footnotesize
   \caption{Diversified Outlier Exposure (DivOE) via Informative Extrapolation}
   \label{alg:divoe}
   {\bf Input:} given model: $\theta$, fine-tuning epochs : $T$, training samples of ID data: $x \sim \mathcal{D}^\text{s}_\text{in}$, training samples of auxiliary outliers: $x^{*} \sim \mathcal{D}^\text{s}_\text{out}$, extrapolated ratio of the original outliers: $r$, extrapolate epsilon: $\epsilon$, optimization step number: $k$, extrapolation pool: $\mathcal{O}(\tilde{X})$;\\
   {\bf Output:} fine-tuned model $\theta^T$;
\begin{algorithmic}[1]
    %\STATE \colorbox{shadecolor}{Initialize a popup score for every weight}
    %\STATE \begin{footnotesize}\texttt{// Comment}\end{footnotesize}\vspace{2mm}
    \FOR{epoch $= 1$, $\dots$, $T$}
    \FOR {mini-batch $=1$, $\dots$, $M$}
    \STATE Sample a mini-batch $\{(x_i, y_i) \}^{m}_{i=1}$ from $\mathcal{D}_\text{in}$, sample a mini-batch $\{(x^*_j)\}^{n}_{j=1}$ from $\mathcal{D}^{s}_\text{out}$
    \STATE Sample a sub-batch with ratio $r$ from the batch of outliers $\{(x^*_j)\}^{n}_{j=1}$
        \FOR{$k = 1$, $\dots$, $r*n$ (in parallel) }
         \STATE Obtain the extrapolated outliers $\tilde{x}^*_k$ by Eq.~(\ref{eq:perturbation_obj}) that following $\max\ell_\text{OE}(f(\tilde{x}^*_k))$
         \STATE Aggregate all the extrapolated samples with uniform or different $\epsilon$ in $\mathcal{O}(\tilde{X})$
         \ENDFOR
    \STATE $\mathbf{\theta} \gets \mathbf{\theta} - \eta \nabla_{\mathbf{\theta}} \bigg \{ \frac{1}{m}\sum\ell_\text{CE}(f(x),y)
     + \lambda (\frac{1}{n-rn}\sum\ell_\text{OE}(f(x^*))+\frac{1}{rn}\sum\ell_\text{OE}(f(\tilde{x}^*)))  \bigg \}$
  \ENDFOR
 \ENDFOR
\end{algorithmic}
\end{algorithm}

Here we summarize the whole procedure of the proposed DivOE in Algorithm~\ref{alg:divoe}, which consists of multi-round training iterations. To be specific, on each round of fine-tuning, DivOE first samples a mini-batch with a specific ratio $r\in[0,1]$ from the auxiliary outliers and conduct multi-step optimization following Eq.~(\ref{eq:perturbation_obj}) to generate the new outliers. With the ID training data, the original outliers, and the extrapolated outliers, DivOE then fine-tune the model with the loss defined in Eq.~(\ref{eq:divoe_formal_obj}).

In practical realization, for the specific extrapolation strength $\epsilon$, DivOE can also generate the extrapolation pool with multiple $\epsilon$ candidates within the extrapolated sub-batch. The specific definition of extrapolation pool is as follows,
\begin{equation}
    % \footnotesize
    % \small
    \label{eq:extra_pool}
    \mathcal{O}(\tilde{X}) = \{x^{*k_1}_{\epsilon_1},\dots,x^{*k_p}_{\epsilon_p}\}^{\sum_{i=1}^p k_i=rn}_{\epsilon_1,\dots,\epsilon_p \in [0,1]},
\end{equation}
where the extrapolation strength with its corresponding sample number can be flexibly decided in the extrapolated sub-batch with a total sample number equal to $r*n$, which may better facilitate the diversification target. As a primary exploration, we visualize the extrapolated data with its distribution quantified by Energy score~\citep{liu2020energy} in the left two panels in Figure~\ref{fig3:method_effect}. Intuitively, even with a large extrapolation strength with human-perceptible pixel noise added in the original outliers, it will not change its classification results in OOD detection. It can represent a more different OOD distribution compared with the given outliers as the distribution shifted gradually to the right.

\paragraph{Comparison and Compatibility.}  Compared with the conventional OE algorithm~\citep{hendrycks2018deep}, the critical difference behind DivOE is trying to extrapolate based on the given auxiliary outliers for diversifying outlier exposure. To be specific, it provides a general framework (under the guidance of the learning objective in Eq.~(\ref{eq:hybird_obj})) for modeling the informative unknown inputs beyond the original ones. The discriminative feature regularized by our DivOE can be utilized by those advanced post-hoc scoring functions~\citep{sun2021react,huang2021importance,sun2022dice,djurisic2023extremely}. For OE-based methods, the informative extrapolation introduced in DivOE is orthogonal to current tuning objectives~\citep {hendrycks17baseline,wang2023outofdistribution} and also compatible to those different data augmentation~\citep{zhang2017mixup,yun2019cutmix}, synthesizing~\citep{kong2021opengan,LeeLLS18}, and also sampling strategies~\citep{chen2021atom,ming2022poem} adopted based on either ID data~\citep{du2022vos,tao2023nonparametric} or auxiliary outliers.

\section{Experiments}
\label{sec:experiment}

In this section, we present the comprehensive verification of the proposed DivOE in the OOD detection scenario. First, we provide the experimental setups in detail (in Section~\ref{sec:setup}). Second, we provide the performance comparison of our DivOE with a series of post-hoc scoring functions and the OE-based methods with different strategies on the auxiliary outliers (in Section~\ref{sec:main_results}). Third, we conduct various ablation studies and further discussions to understand our DivOE (in Section~\ref{sec:ablation}).

\subsection{Setups}
\label{sec:setup}

Here we present several critical parts of experimental setups and leave more details in Appendix~\ref{app:exp_allapp}.

\paragraph{Datasets.} Following the common benchmarks used in previous work~\citep{liu2020energy,ming2022poem}, we adopt \verb+CIFAR-10+, \verb+CIFAR-100+ \citep{krizhevsky2009learning_cifar10} as our ID datasets. We use a series of different image datasets as the OOD datasets, namely \verb+Textures+ \citep{cimpoi2014describing}, \verb+Places365+ \citep{zhou2017places}, \verb+iSUN+ \citep{xu2015turkergaze}, \verb+LSUN_Crop+ \citep{yu2015lsun}, and \verb+LSUN_Resize+ \citep{yu2015lsun}. For the surrogate OOD data used in OE-based methods~\citep{hendrycks2018deep,chen2021atom,ming2022poem,zhang2023mixture}, we adopt Tiny-ImageNet~\citep{le2015tiny}. We also conduct the experiments on the ImageNet, where we utilize the large-scaled ImageNet21K~\citep{ridnik2021imagenet} as the surrogate OOD set like previous works~\citep{wang2023outofdistribution} for OE-based methods.

\paragraph{Evaluation metrics.} We employ the following three common metrics to evaluate the performance of OOD detection: (i) Area Under the Receiver Operating Characteristic curve (AUROC) \citep{inproceedings} can be interpreted as the probability for a positive sample to have a higher discriminating score than a negative sample \citep{fawcett2006introduction}; (ii) Area Under the Precision-Recall curve (AUPR) \citep{manning99foundations} is an ideal metric to adjust the extreme difference between positive and negative base rates; (iii) False Positive Rate (FPR) at $95\%$ True Positive Rate (TPR) \citep{LiangLS18} indicates the probability for a negative sample to be misclassified as positive when the true positive rate is at $95$\%. We also include in-distribution testing accuracy (ID-ACC) to reflect the preservation level of the performance for the original classification task on ID data.

\begin{table*}[t]
    \caption{Main Results ($\%$). Comparison with competitive OOD detection baselines. All methods are trained on the same backbone. Values are percentages. Bold numbers are superior results. $\uparrow$ indicates larger values are better, and $\downarrow$ indicates smaller values are better. (averaged by multiple trials)}
    %\vspace{-1mm}
    \centering
    \footnotesize
    \renewcommand\arraystretch{1.0}
    \resizebox{\textwidth}{!}{
    \begin{tabular}{c|l|cccccc}
        \toprule[1.5pt]
        $\mathcal{D}_\text{in}$ &  Method & FPR95$\downarrow$ & AUROC$\uparrow$ & AUPR$\uparrow$  & ID-ACC$\uparrow$ & w./w.o $\mathcal{D}_\text{aux}$ & Operation for $\mathcal{D}_\text{aux}$ \\
        \midrule[0.6pt]
        \multirow{11}*{\textbf{CIFAR-10}}
         & MSP~\citep{hendrycks17baseline} & 52.21 & 90.61 & 97.84  & 94.84 &  \\
         & ODIN~\citep{LiangLS18} & 32.84 & 90.67 & 97.29 & 94.84 & \\
         & Mahalanobis~\citep{10.5555/3327757.3327819}& 36.27 & 92.69 & 98.31 & 94.84 & \\
         & Energy~\citep{liu2020energy} & 32.82 & 91.99 & 97.86  & 94.84 &\\
         & ASH~\citep{djurisic2023extremely}  & 33.01 & 92.11 & 97.92 & 94.84 & \\
         \cmidrule{2-8}
         & OE~\citep{hendrycks2018deep} & 13.76$\pm$0.23 & 97.53$\pm$0.03 & 99.43$\pm$0.01 & 94.51$\pm$0.06 &$\checkmark$ & Vanilla\\
         & Energy (w. $\mathcal{D}_\text{aux}$)~\citep{liu2020energy} & 18.15$\pm$0.32 & 90.90$\pm$0.18 & 96.34$\pm$0.10 & 94.51$\pm$0.01 &$\checkmark$ & Vanilla\\
         & VOS~\citep{du2022vos} & 34.30$\pm$0.02 & 91.27$\pm$0.01 & 97.64$\pm$0.01 & 94.02$\pm$0.01 & $\checkmark$ & Virtually Synthesized\\
         & NTOM~\citep{chen2021atom}  &17.76$\pm$0.84 & 91.08$\pm$0.44 & 96.40$\pm$0.18 & \textbf{95.04$\pm$0.03} & $\checkmark$ & Greedy Sampled\\
         & POEM~\citep{ming2022poem}  & 17.81$\pm$0.78 & 90.96$\pm$0.37 & 96.36$\pm$0.15 & 94.90$\pm$0.04 & $\checkmark$ & Thompson Sampled\\
         & MixOE~\citep{zhang2023mixture} & 13.57$\pm$0.25 & 97.47$\pm$0.07 & 99.39$\pm$0.02 & 94.98$\pm$0.02 & $\checkmark$ & Mixup with ID Data \\
         %& DOE\citep{wang2023out} & 8.11 & 98.14 & 99.57 & 94.38 & $\checkmark$ \\
         & \textbf{DivOE} (Ours) & \textbf{11.66$\pm$0.04} & \textbf{97.82$\pm$0.04} & \textbf{99.48$\pm$0.01} & 94.66$\pm$0.12 & $\checkmark$ &  \textbf{Extrapolated}\\
        \midrule[0.6pt]
        \multirow{11}*{\textbf{CIFAR-100}}
         & MSP~\citep{hendrycks17baseline} & 79.71 & 76.13 & 94.07 & 75.95 & \\
         & ODIN~\citep{LiangLS18} & 63.29 & 82.44 & 95.49 & 75.95 & \\
         & Mahalanobis~\citep{10.5555/3327757.3327819}& 52.90 & 83.86 & 95.73 & 75.95 & \\
         & Energy~\citep{liu2020energy} & 71.93 & 80.27 & 95.00 & 75.95 &\\
          & ASH~\citep{djurisic2023extremely}  & 61.82 & 83.42 & 95.84 & 75.87 & \\
         \cmidrule{2-8}
         & OE~\citep{hendrycks2018deep} & 27.67$\pm$0.43 & 91.89$\pm$0.12 & 97.93$\pm$0.04 & 75.41$\pm$0.10 &$\checkmark$ & Vanilla\\
         & Energy (w. $\mathcal{D}_\text{aux}$)~\citep{liu2020energy} & 27.86$\pm$0.26 & 90.70$\pm$0.07 & 97.40$\pm$0.03 & 75.03$\pm$0.11 &$\checkmark$ & Vanilla\\
         & VOS~\citep{du2022vos} & 70.87$\pm$0.00 & 80.00$\pm$0.00 & 94.89$\pm$0.00 & \textbf{76.25$\pm$0.00} & $\checkmark$ & Virtually Synthesized\\
         & NTOM~\citep{chen2021atom}  & 27.58$\pm$0.88 & 91.08$\pm$0.34 & 97.52$\pm$0.10 & 75.41$\pm$0.10 & $\checkmark$ & Greedy Sampled\\
         & POEM~\citep{ming2022poem}  & 26.50$\pm$0.10 & 91.33$\pm$0.06 & 97.58$\pm$0.02 & 75.74$\pm$0.14 & $\checkmark$ & Thompson Sampled\\
         & MixOE~\citep{zhang2023mixture} & 35.26$\pm$0.80 & 90.31$\pm$0.16 & 97.58$\pm$0.04 & 75.74$\pm$0.04 & $\checkmark$ & Mixup with ID Data\\
         %& DOE\citep{wang2023out} & 28.38 & 93.39 & 98.44 & 75.14 & $\checkmark$ \\
         & \textbf{DivOE} (Ours) & \textbf{24.80$\pm$0.51} & \textbf{92.91$\pm$0.11} & \textbf{98.22$\pm$0.03} & 75.24$\pm$0.01 & $\checkmark$ &  \textbf{Extrapolated}\\
        \midrule[0.6pt]
        \multirow{11}*{\textbf{ImageNet}}
         & MSP~\citep{hendrycks17baseline} & 75.23 & 76.67 & 94.35 & 68.72 & \\
         & ODIN~\citep{LiangLS18} & 67.61 & 80.36 & 95.15 & 68.72 & \\
         & Mahalanobis~\citep{10.5555/3327757.3327819}& 83.41 & 58.47 & 87.72 & 68.72 & \\
         & Energy~\citep{liu2020energy} & 71.44 & 79.52 & 94.98 & 68.72 &\\
          & ASH~\citep{djurisic2023extremely}  & 73.51 & 77.03 & 93.43 & 68.70 & \\
         \cmidrule{2-8}
         & OE~\citep{hendrycks2018deep} & 61.94$\pm$0.03 & 81.58$\pm$0.01 & 95.49$\pm$0.00 & 75.90$\pm$0.00 & $\checkmark$ & Vanilla\\
         & Energy (w. $\mathcal{D}_\text{aux}$)~\citep{liu2020energy}& 66.94$\pm$0.01 & 79.93$\pm$0.02 & 94.90$\pm$0.00 & 74.19$\pm$0.08 &  $\checkmark$ & Vanilla\\
         & VOS~\citep{du2022vos} & 74.00$\pm$0.02 & 78.97$\pm$0.01 & 94.82$\pm$0.00 & 76.15$\pm$0.02 & $\checkmark$ & Virtually Synthesized\\
         & NTOM~\citep{chen2021atom}  & 66.32$\pm$0.24 & 80.14$\pm$0.08 & 94.95$\pm$0.02 & 74.26$\pm$0.01 & $\checkmark$ & Greedy Sampled\\
         & POEM~\citep{ming2022poem}  & 67.81$\pm$0.44 & 79.36$\pm$0.24 & 94.73$\pm$0.06 & 74.21$\pm$0.03 & $\checkmark$ & Thompson Sampled\\
         & MixOE~\citep{zhang2023mixture} & 70.12$\pm$0.26 & 78.85$\pm$0.05 & 94.85$\pm$0.02 & \textbf{76.18$\pm$0.01} & $\checkmark$ & Mixup with ID Data\\
         %& DOE\citep{wang2023out} & 28.38 & 93.39 & 98.44 & 75.14 & $\checkmark$ \\
         & \textbf{DivOE} (Ours) & \textbf{60.12$\pm$0.11} & \textbf{81.96$\pm$0.00} & \textbf{95.59$\pm$0.00} & 75.73$\pm$0.02 & $\checkmark$ &  \textbf{Extrapolated}\\
        \bottomrule[1.5pt]
    \end{tabular}
    }
    \label{tab:overall_ood}
    %\vspace{-2mm}
\end{table*}

% Post-hoc and Training with auxiliary outliers. Two categories.
\paragraph{OOD detection baselines.} We compare the proposed method with several competitive baselines in the two directions. Specifically, we adopt Maximum Softmax Probability (MSP) \citep{hendrycks17baseline}, ODIN \citep{LiangLS18}, Mahalanobis score \citep{10.5555/3327757.3327819}, and Energy score \citep{liu2020energy} as scoring function baselines; We adopt OE \citep{hendrycks2018deep}, Energy-bounded learning \citep{liu2020energy}, NTOM\citep{chen2021atom}, POEM \citep{ming2022poem} and MixOE \citep{zhang2023mixture} as baselines with outliers. For all scoring function methods, we assume the accessibility of well-trained models. For all methods involving outliers, we constrain all major experiments to a fine-tuning scenario, which is more practical in real cases. Different from training a dual-task model at the very beginning, equipping deployed models with OOD detection ability is a much more common circumstance. We leave more definitions and implementations in Appendix~\ref{app:baseline_info}.

\subsection{Main Results}
\label{sec:main_results}

In this part, we present the major performance comparison with some representative baseline methods for OOD detection to demonstrate the effectiveness of the proposed DivOE. Specifically, we consider several post-hoc methods of scoring functions as the performance reference based on the pre-trained model and some OE-based methods for specific comparison on fine-tuning with auxiliary outliers. Note that for all the experiments here, we utilize the whole auxiliary outliers (following the general setup of OE~\citep{hendrycks2018deep,liu2020energy} and ensuring the same number of outliers engaged in each method) instead of the reduced ones adopted in previous figures for illustrations.

In Table~\ref{tab:overall_ood}, we present the overall results using different methods for OOD detection. Since the OE-based methods engage the auxiliary outliers during training, the model will generally gain better empirical performance on OOD detection, reflected by the evaluation metrics. For those OE-based methods, we provide a specific comparison on their detailed operation for $\mathcal{D}_\text{aux}$ (e.g., $\mathcal{D}^{s}_\text{out}$) in the right of Table~\ref{tab:overall_ood}. Since VOS virtually synthesizes the outliers based on the ID data, it doesn't perform better than those using the real collected outliers. Among the rest, NTOM and POEM adopt different strategies for informative sampling, which obtain different levels of detection performance gains across different ID datasets. MixOE that mixups ID and surrogate OOD data does not perform well on some complexed tasks like CIFAR-100 and ImageNet, since the original algorithm is designed for fine-grained OOD detection. Without sacrificing much classification performance (i.e., ID-ACC) on ID data, our DivOE can consistently achieve better OOD detection performance across the three datasets, which verifies the effectiveness of our methods with the newly proposed informative extrapolation for outlier exposure. We also summarize the fine-grained results on OOD detection to Tables~\ref{tab:ood_bench_cifar_app} and~\ref{tab:ood_bench_cifar100_app} in Appendix~\ref{app:exp_allapp}, which provides a better understanding of the improvement. 

\begin{table*}[t]
    \caption{OOD detection performance on compatibility experiments. All methods are trained on the same backbone. $\uparrow$ indicates larger values are better, and $\downarrow$ indicates smaller values are better.}
    %\vspace{-1mm}
    \centering
    \footnotesize
    \resizebox{\textwidth}{!}{
    \begin{tabular}{lcccc|cccc}
        \toprule[1.5pt]
         \multirow{4}*{Method} & \multicolumn{8}{c}{$\mathcal{D}_\text{in}$} \\
         \cmidrule{2-9}
        ~ & \multicolumn{4}{c|}{\textbf{CIFAR-10}} & \multicolumn{4}{c}{\textbf{CIFAR-100}} \\
        %\cmidrule{2-9}
        ~ & FPR95$\downarrow$ & AUROC$\uparrow$ & AUPR$\uparrow$ & ID-ACC$\uparrow$ & FPR95$\downarrow$ & AUROC$\uparrow$ & AUPR$\uparrow$ & ID-ACC$\uparrow$ \\
        \midrule[0.6pt]
         %MSP  & 59.27 & 88.50 & 60.77 & 88.13 & 56.79 & 89.53 \\
         % ODIN  & 54.84 & 80.28 & 48.89 & 85.66 & 28.68 & 93.85 & 23.17 & 95.05 & 9.96 & 97.90 & 94.84 \\
         % Mahalanobis  & 16.99 & 96.69 & 69.96 & 82.54 & 31.62 & 94.44 & 32.41 & 94.70 & 30.50 & 95.06 & 94.84 \\
         % Energy  & 53.17 & 85.40 & 40.22 & 89.91 & 34.60 & 92.44 & 28.48 & 93.85 & 8.12 & 98.35 & 94.84 \\
         % GradNorm & 73.17 & 58.11 & 78.27 & 60.46 & 70.28 & 70.82 & 65.63 & 73.23 & 11.41 & 97.02 & 94.84  \\
         OE & 13.76$\pm$0.23 & 97.53$\pm$0.03 & 99.43$\pm$0.01 & 94.51$\pm$0.06 & 27.67$\pm$0.43 & 91.89$\pm$0.12  & 97.93$\pm$0.04 & \textbf{75.41$\pm$0.10} \\
         \textbf{DivOE} & \textbf{11.66$\pm$0.04} & \textbf{97.82$\pm$0.04} & \textbf{99.48$\pm$0.01} & \textbf{94.66$\pm$0.12} & \textbf{24.80$\pm$0.51} & \textbf{92.91$\pm$0.11} & \textbf{98.22$\pm$0.03} & 75.24$\pm$0.01 \\
         Energy (w. $\mathcal{D}_\text{aux}$) & 18.15$\pm$0.32 & 90.90$\pm$0.18 & 96.34$\pm$0.10 & 94.51$\pm$0.01 & 27.86$\pm$0.26 & 90.70$\pm$0.07 & 97.40$\pm$0.03 & 75.03$\pm$0.11 \\
         \textbf{Energy (w. $\mathcal{D}_\text{aux}$)+DivOE} & \textbf{16.03$\pm$0.35} & \textbf{92.23$\pm$0.10} & \textbf{96.87$\pm$0.03} & \textbf{94.53$\pm$0.05} & \textbf{24.71$\pm$0.29} & \textbf{92.04$\pm$0.19} & \textbf{97.82$\pm$0.08} & \textbf{75.25$\pm$0.11} \\
         NTOM & 17.76$\pm$0.84 & 91.08$\pm$0.44 & 96.40$\pm$0.18 & \textbf{95.04$\pm$0.03} & 27.58$\pm$0.88 & 91.08$\pm$0.34 & 97.52$\pm$0.10 & \textbf{75.41$\pm$0.10} \\
         \textbf{NTOM+DivOE} & \textbf{15.30$\pm$0.57} & \textbf{92.21$\pm$0.57} & \textbf{96.73$\pm$0.28} & 94.75$\pm$0.02 & \textbf{25.97$\pm$0.74} & \textbf{91.62$\pm$0.29} & \textbf{97.68$\pm$0.10} & 75.39$\pm$0.15 \\
         POEM  & 17.81$\pm$0.78 & 90.96$\pm$0.37 & 96.36$\pm$0.15 & \textbf{94.90$\pm$0.04} & 26.50$\pm$0.10 & 91.33$\pm$0.06 & 97.58$\pm$0.02 & \textbf{75.74$\pm$0.14} \\
         \textbf{POEM+DivOE} & \textbf{15.22$\pm$0.20} & \textbf{92.42$\pm$0.08} & \textbf{96.90$\pm$0.03} & 94.66$\pm$0.07 & \textbf{25.05$\pm$0.03} & \textbf{91.91$\pm$0.07} & \textbf{97.77$\pm$0.03} & 75.62$\pm$0.05 \\
         OECC & 17.65$\pm$0.06 & 96.68$\pm$0.03 & 99.19$\pm$0.01 & \textbf{94.50$\pm$0.07} & 28.51$\pm$0.11 & 91.08$\pm$0.08 & 97.70$\pm$0.03 & \textbf{74.95$\pm$0.30} \\
         \textbf{OECC+DivOE} & \textbf{14.18$\pm$0.04} & \textbf{97.26$\pm$0.07} & \textbf{99.32$\pm$0.03} & 94.49$\pm$0.15 & \textbf{26.13$\pm$0.39} & \textbf{92.07$\pm$0.14} & \textbf{97.97$\pm$0.05} & 74.74$\pm$0.36  \\
         DOE & 7.39$\pm$0.46 & 98.29$\pm$0.08 & 99.58$\pm$0.02 & 92.82$\pm$0.08 & 30.69$\pm$0.93 & 93.13$\pm$0.15 & 98.39$\pm$0.03 & 71.41$\pm$0.38 \\
         \textbf{DOE+DivOE} & \textbf{6.51$\pm$0.10} & \textbf{98.60$\pm$0.01} & \textbf{99.65$\pm$0.01} & \textbf{93.45$\pm$0.02} & \textbf{20.26$\pm$0.20} & \textbf{95.41$\pm$0.05} & \textbf{98.91$\pm$0.02} & \textbf{71.43$\pm$0.60}  \\
        \bottomrule[1.5pt]
    \end{tabular}
    }
    \label{tab:ood_comp}
    \vspace{-2mm}
\end{table*}

In Table~\ref{tab:ood_comp}, we report the results of compatibility experiments, in which we compare those OE-based methods with their variants, incorporating our DivOE for informative extrapolation. Specifically, we consider some of the previous OE-based methods and also add two related methods, e.g., OECC~\citep{PAPADOPOULOS2021138} and DOE~\citep{wang2023outofdistribution}, in the comparison. Although those methods can have different level of performance enhancement compared with the conventional OE baseline, We can find that our DivOE can consistently help them gain better OOD detection performance across three evaluation metrics, and keep the ID-ACC comparable with the original pre-trained model. As for DOE, it utilizes the model perturbation on implicit outlier synthesizing, which is also orthogonal and compatible with our DivOE for explicit extropolation. In this table, we can find the plain DivOE can achieve lower FPR95 on CIFAR-100 with DOE, while the performance of DOE is affected by different ID datasets. The above demonstrates the effectiveness and algorithmic robustness of our proposed method. In addition, we also provide further discussion on the differences in Appendix~\ref{app:closer_look_mixoe}.

\subsection{Ablation and Further Discussions}
\label{sec:ablation}

In this part, we conduct further explorations to provide a thorough understanding of our DivOE. For the extra results and discussions (e.g., impact and limitations), we leave more details in Appendix~\ref{app:exp_allapp}.

\paragraph{Importance of extrapolation ratio in diversifying the distribution.} 
As a critical aspect of our proposed learning objective of DivOE in Eq.~(\ref{eq:divoe_formal_obj}), the extrapolation ratio is introduced to control the manipulated portion of the given auxiliary outliers. DivOE aims to extrapolate based on the surrogate OOD distribution without total loss of the original representation. In Figure~\ref{fig4:a}, we show the performance by varying the ratio $r$. It is worth noting that adopting extrapolation on the full batch of data may even degrade the performance, indicating the vital role of the extrapolation ratio in the whole learning framework. It performs diversifying as the maximization part introduced in Eq.~(\ref{eq:hybird_obj}) instead of an overall augmentation. The results verify the rationality of pursuing diversified outliers instead of only most informative ones. We provide more comparison on the concepts of "diversified" and "informative" in Appendix~\ref{app:diversify_informative}, and leave more results in Appendix~\ref{app:exp_allapp} to further explain it.

\begin{figure}[t!]
  \centering
  \begin{center}
    \subfigure[Extrapolated Ratio]{
    \includegraphics[scale=0.14]{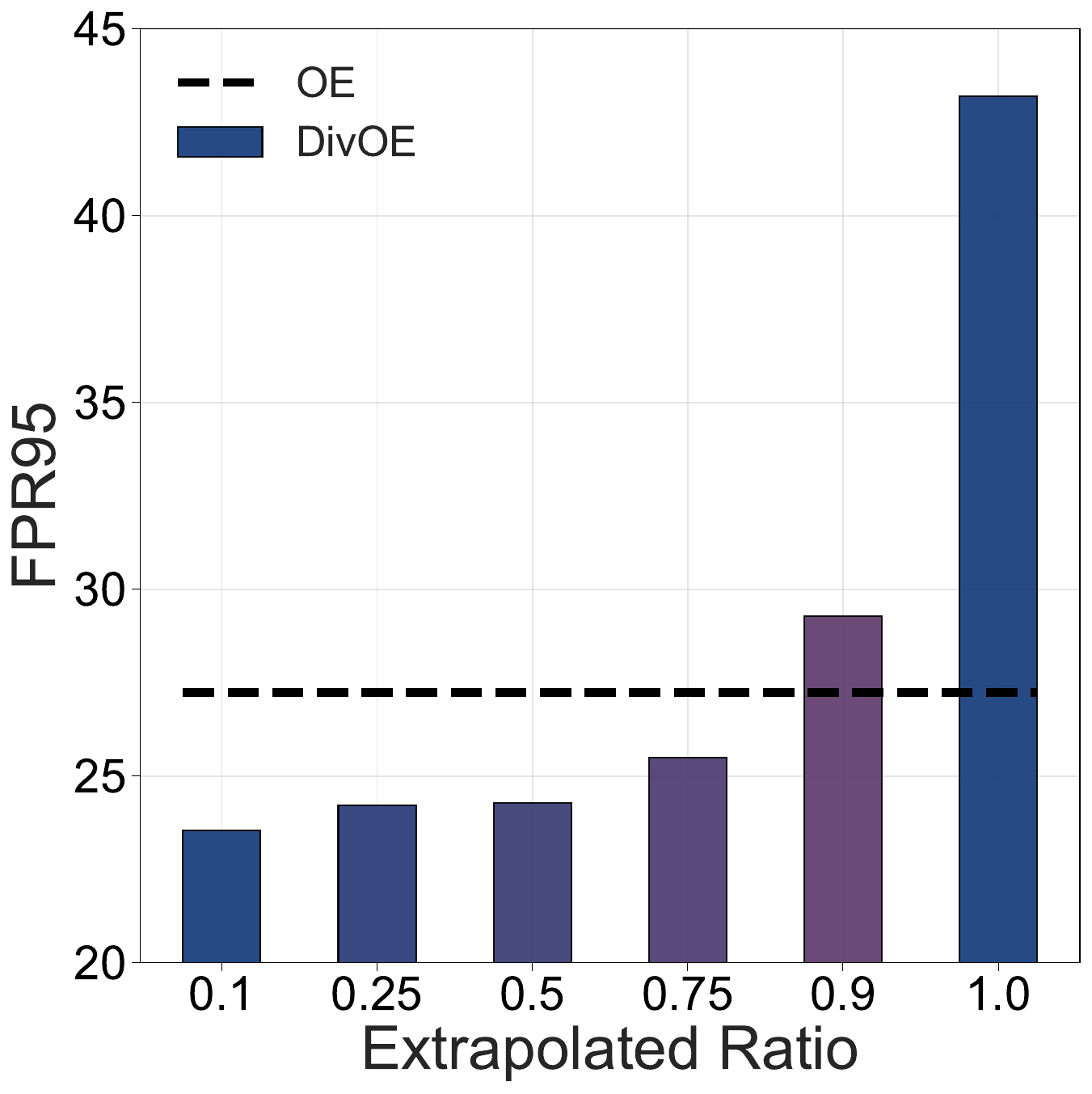}
    \label{fig4:a}
    }
    \subfigure[Diversified Strength]{
    \includegraphics[scale=0.14]{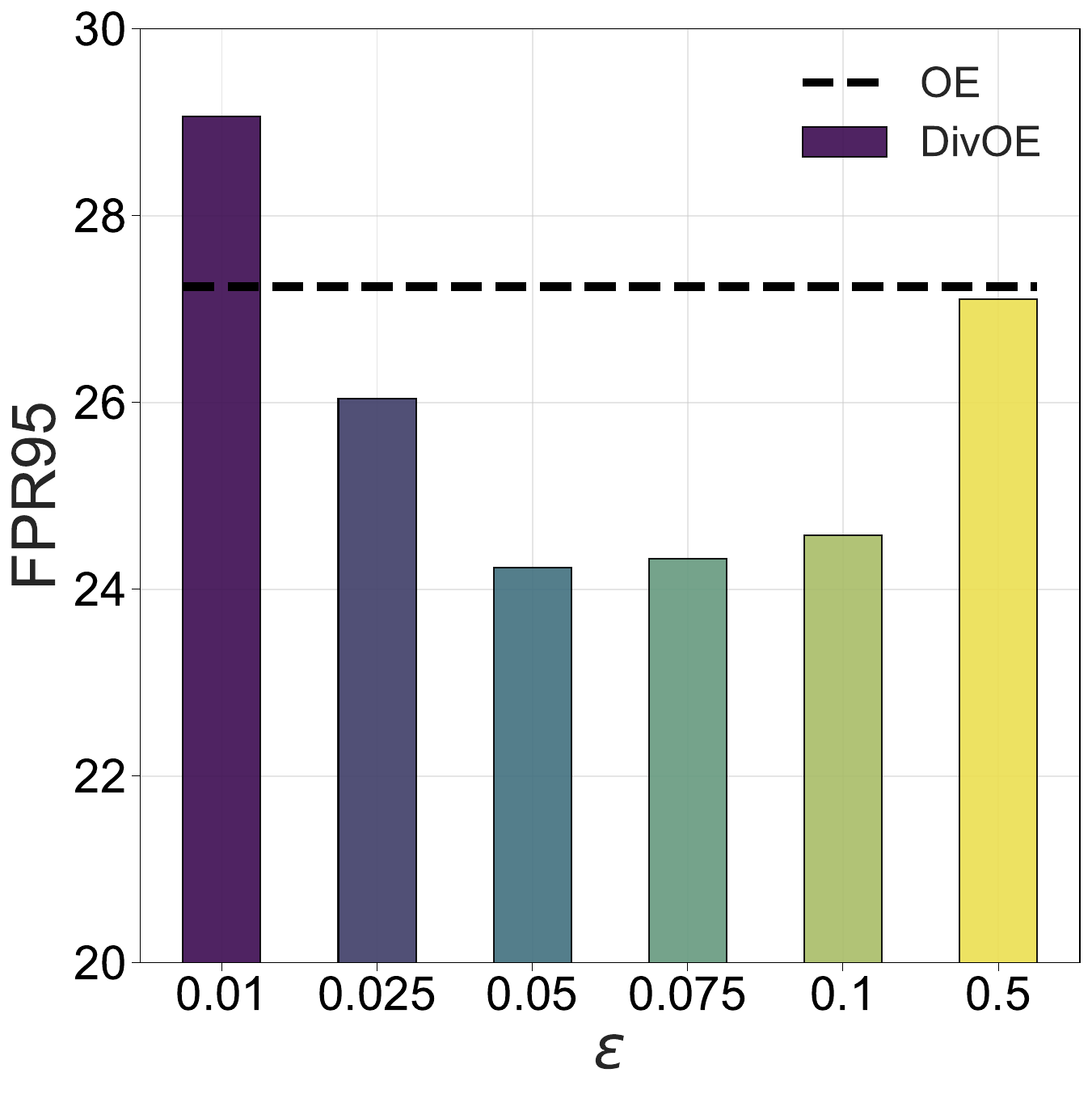}
    \label{fig4:b}
    }
    % \subfigure[Methods]{
    % \includegraphics[scale=0.11]{}
    % \label{fig4:d}
    % }
    \subfigure[Extrapolation Method]{
    \includegraphics[scale=0.14]{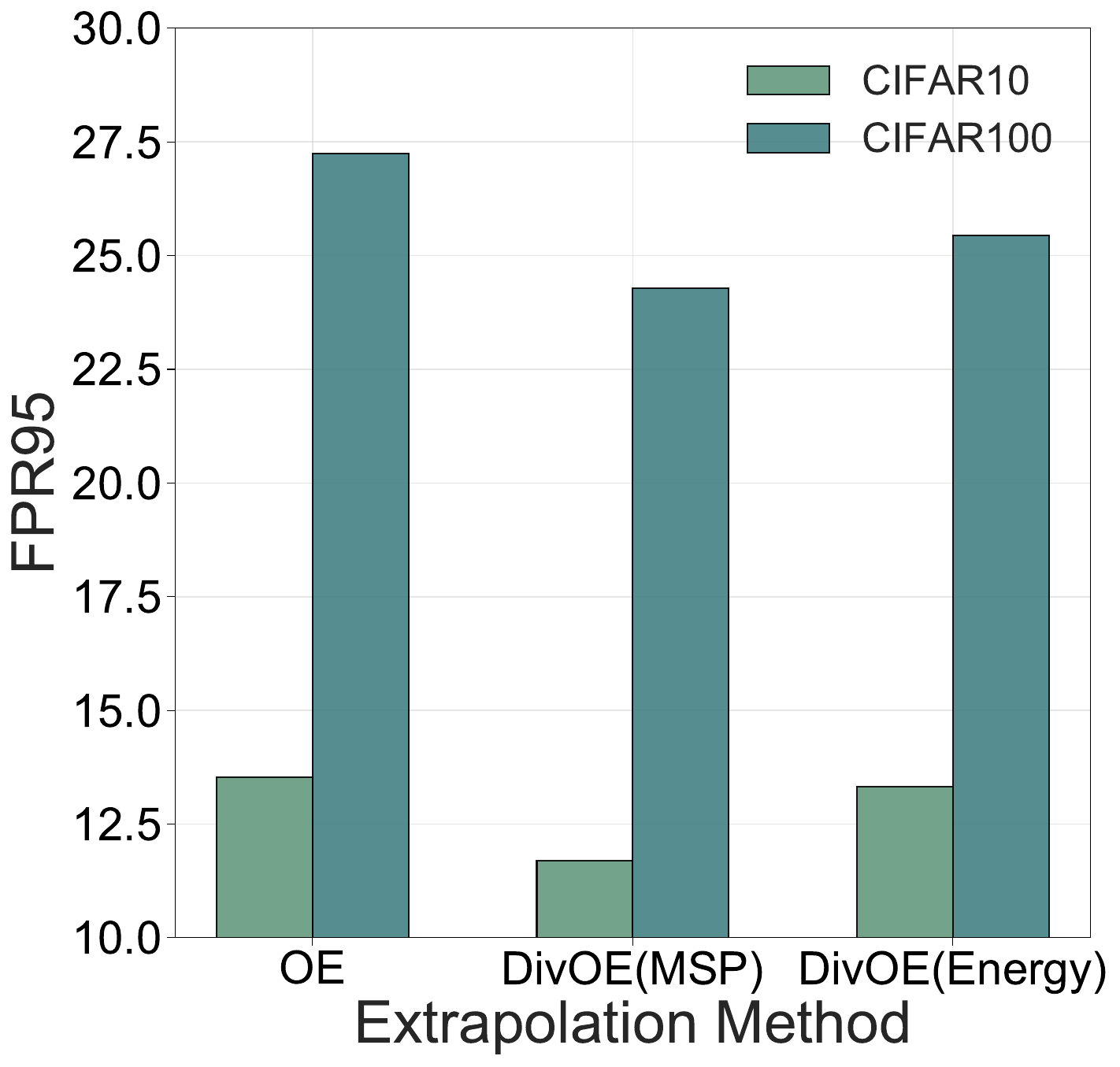}
    \label{fig4:e}
    }
    \subfigure[Compare with Augs.]{
    \includegraphics[scale=0.14]{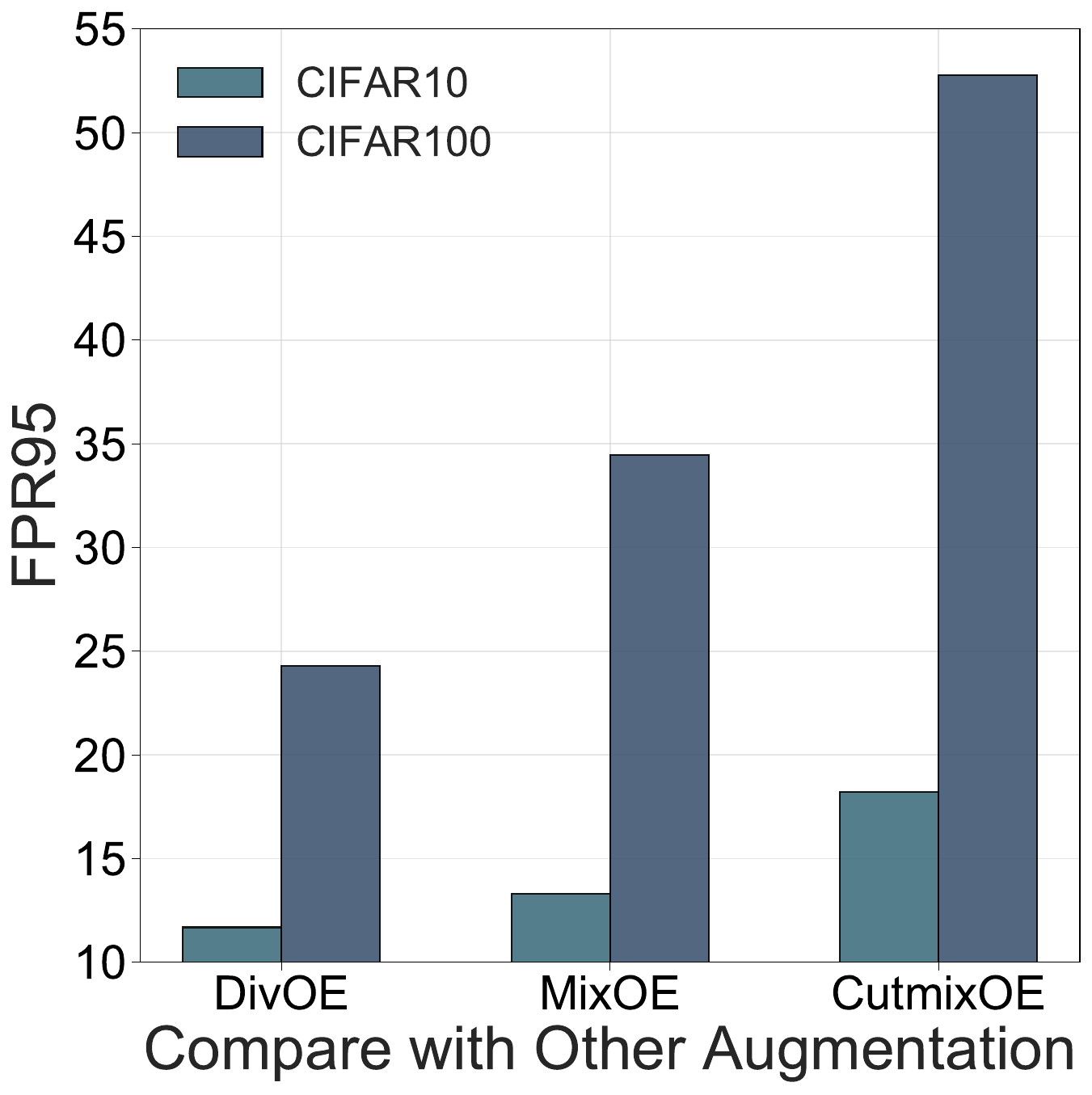}
    \label{fig4:c}
    }
    \end{center}
    \vspace{-4mm}
  \caption{Ablation Study. (a) performance of using different extrapolated ratios of DivOE; (b) exploration of different diversified strength $\epsilon$; (c) using different OOD scores as informative extrapolation targets; (d) comparison between DivOE and OE adopting other augmentations for diversification.}
  \label{fig4:ablation}
  \vspace{-2mm}
\end{figure}

\paragraph{Extending the diversified strength in the informative extrapolation.} 
To better characterize the effects of informative extrapolation in DivOE, we consider fine-tuning the given model with different $\epsilon$ (with a consistent extrapolation ratio, $r=0.5$), which is used to constrain the diversified strength in maximizing the optimization target in Eq.~(\ref{eq:max_obj}). In Figure~\ref{fig4:b}, we can observe that an appropriate diversified strength can help OE performs better on OOD detection, while using a large $\epsilon$ may empirically return to the original baseline case. One possible reason may be that the extrapolated outliers with a large $\epsilon$ may fail to become more informative as previously reflected in Figure~\ref{fig3:method_effect}.

\paragraph{Generality of using different OOD scores.} Since DivOE introduces a general learning framework of outlier exposure for model finetuning with auxiliary outliers, the specific realization for informative extrapolation can have multiple choices. For example, different scoring functions can all perform OOD uncertainty estimation, indicating the multiple candidate target for Eq.~(\ref{eq:max_obj}) to optimize. Here we report the performance using different optimization targets (e.g., MSP or Energy score) in Figure~\ref{fig4:e}, where they have different performance improvements compared with the original OE baseline. 

\paragraph{Comparison with different augmentation-based methods for OE.} In Figure~\ref{fig4:c}, we perform the comparison of our DivOE with the original OE adopting some representative augmentation methods (e.g., Mixup~\citep{zhang2017mixup} and CutMix~\citep{yun2019cutmix}) for diversifying the given outliers. The results verify the superiority of DivOE on OOD detection over the direct adaptation of those strategies since they lack a critical diversification guidance. On the other hand, our framework has no strict constraints for the detailed manipulating operations. In other words, DivOE can also utilize those augmentation methods for extrapolation, for which we leave more discussion in Appendix~\ref{app:exp_allapp}.

%\paragraph{Theoretical Analysis.}

\section{Conclusion}
\label{sec:conclusion}

In this paper, we propose a novel learning framework, i.e., \textit{Diversified Outlier Exposure} (DivOE), that promotes diversity in outlier exposure with the given auxiliary outliers. To empower the model with more knowledge about the decision boundary between ID and OOD data, DivOE conducts informative extrapolation guided by a newly designed objective. Through the diversification target, our method adaptively synthesizes meaningful outliers during the training process, which helps the model better shape the decision areas for OOD detection. We have conducted extensive experiments to demonstrate the effectiveness of our proposed DivOE and its compatibility with a range of OE-based methods, and also various ablation studies with further explorations to characterize the framework.

\subsubsection*{Acknowledgments}
JNZ and BH were supported by the NSFC Young Scientists Fund No. 62006202, NSFC General Program No. 62376235, Guangdong Basic and Applied Basic Research Foundation No. 2022A1515011652, RIKEN Collaborative Research Fund, HKBU Faculty Niche Research Areas No. RC-FNRA-IG/22-23/SCI/04, and HKBU CSD Departmental Incentive Scheme. GY and JCY were supported by the National Key R\&D Program of China (No. 2022ZD0160703),  111 plan (No. BP0719010) and National Natural Science Foundation of China (No. 62306178). TLL was partially supported by the following Australian Research Council projects: FT220100318, DP220102121, LP220100527, LP220200949, and IC190100031.

\clearpage

% \begin{ack}
% Use unnumbered first level headings for the acknowledgments. All acknowledgments
% go at the end of the paper before the list of references. Moreover, you are required to declare
% funding (financial activities supporting the submitted work) and competing interests (related financial activities outside the submitted work).
% More information about this disclosure can be found at: \url{https://neurips.cc/Conferences/2022/PaperInformation/FundingDisclosure}.

% Do {\bf not} include this section in the anonymized submission, only in the final paper. You can use the \texttt{ack} environment provided in the style file to autmoatically hide this section in the anonymized submission.
% \end{ack}

\clearpage

\bibliography{main}

\begin{thebibliography}{57}
\providecommand{\natexlab}[1]{#1}
\providecommand{\url}[1]{\texttt{#1}}
\expandafter\ifx\csname urlstyle\endcsname\relax
  \providecommand{\doi}[1]{doi: #1}\else
  \providecommand{\doi}{doi: \begingroup \urlstyle{rm}\Url}\fi

\bibitem[Bommasani et~al.(2021)Bommasani, Hudson, Adeli, Altman, Arora, von Arx, Bernstein, Bohg, Bosselut, Brunskill, et~al.]{bommasani2021opportunities}
Rishi Bommasani, Drew~A Hudson, Ehsan Adeli, Russ Altman, Simran Arora, Sydney von Arx, Michael~S Bernstein, Jeannette Bohg, Antoine Bosselut, Emma Brunskill, et~al.
\newblock On the opportunities and risks of foundation models.
\newblock In \emph{arXiv}, 2021.

\bibitem[Borgwardt et~al.(2006)Borgwardt, Gretton, Rasch, Kriegel, Sch{\"o}lkopf, and Smola]{borgwardt2006integrating}
Karsten~M Borgwardt, Arthur Gretton, Malte~J Rasch, Hans-Peter Kriegel, Bernhard Sch{\"o}lkopf, and Alex~J Smola.
\newblock Integrating structured biological data by kernel maximum mean discrepancy.
\newblock \emph{Bioinformatics}, 2006.

\bibitem[Bottou(2012)]{bottou2012stochastic}
L{\'e}on Bottou.
\newblock Stochastic gradient descent tricks.
\newblock \emph{Neural Networks: Tricks of the Trade: Second Edition}, 2012.

\bibitem[Chen et~al.(2021)Chen, Li, Wu, Liang, and Jha]{chen2021atom}
Jiefeng Chen, Yixuan Li, Xi~Wu, Yingyu Liang, and Somesh Jha.
\newblock Atom: Robustifying out-of-distribution detection using outlier mining.
\newblock In \emph{ECML PKDD}, 2021.

\bibitem[Cimpoi et~al.(2014)Cimpoi, Maji, Kokkinos, Mohamed, and Vedaldi]{cimpoi2014describing}
Mircea Cimpoi, Subhransu Maji, Iasonas Kokkinos, Sammy Mohamed, and Andrea Vedaldi.
\newblock Describing textures in the wild.
\newblock In \emph{CVPR}, 2014.

\bibitem[Davis and Goadrich(2006)]{inproceedings}
Jesse Davis and Mark Goadrich.
\newblock The relationship between precision-recall and roc curves.
\newblock In \emph{ICML}, 2006.

\bibitem[Djurisic et~al.(2023)Djurisic, Bozanic, Ashok, and Liu]{djurisic2023extremely}
Andrija Djurisic, Nebojsa Bozanic, Arjun Ashok, and Rosanne Liu.
\newblock Extremely simple activation shaping for out-of-distribution detection.
\newblock In \emph{ICLR}, 2023.

\bibitem[Du et~al.(2022)Du, Wang, Cai, and Li]{du2022vos}
Xuefeng Du, Zhaoning Wang, Mu~Cai, and Yixuan Li.
\newblock {VOS:} learning what you don't know by virtual outlier synthesis.
\newblock In \emph{ICLR}, 2022.

\bibitem[Duchi et~al.(2011)Duchi, Hazan, and Singer]{JMLR:v12:duchi11a}
John Duchi, Elad Hazan, and Yoram Singer.
\newblock Adaptive subgradient methods for online learning and stochastic optimization.
\newblock \emph{JMLR}, 2011.

\bibitem[Fang et~al.(2022)Fang, Li, Lu, Dong, Han, and Liu]{fang2022learnable}
Zhen Fang, Yixuan Li, Jie Lu, Jiahua Dong, Bo~Han, and Feng Liu.
\newblock Is out-of-distribution detection learnable?
\newblock In \emph{NeurIPS}, 2022.

\bibitem[Fawcett(2006)]{fawcett2006introduction}
Tom Fawcett.
\newblock An introduction to roc analysis.
\newblock \emph{Pattern Recognition Letters}, 2006.

\bibitem[Foret et~al.(2021)Foret, Kleiner, Mobahi, and Neyshabur]{foret2021sharpnessaware}
Pierre Foret, Ariel Kleiner, Hossein Mobahi, and Behnam Neyshabur.
\newblock Sharpness-aware minimization for efficiently improving generalization.
\newblock In \emph{ICLR}, 2021.

\bibitem[Fort et~al.(2021)Fort, Ren, and Lakshminarayanan]{fort2021exploring}
Stanislav Fort, Jie Ren, and Balaji Lakshminarayanan.
\newblock Exploring the limits of out-of-distribution detection.
\newblock In \emph{NeurIPS}, 2021.

\bibitem[Hendrycks and Gimpel(2017)]{hendrycks17baseline}
Dan Hendrycks and Kevin Gimpel.
\newblock A baseline for detecting misclassified and out-of-distribution examples in neural networks.
\newblock In \emph{ICLR}, 2017.

\bibitem[Hendrycks et~al.(2019)Hendrycks, Mazeika, and Dietterich]{hendrycks2018deep}
Dan Hendrycks, Mantas Mazeika, and Thomas Dietterich.
\newblock Deep anomaly detection with outlier exposure.
\newblock In \emph{ICLR}, 2019.

\bibitem[Hendrycks et~al.(2022)Hendrycks, Basart, Mazeika, Zou, Kwon, Mostajabi, Steinhardt, and Song]{hendrycks2022scaling}
Dan Hendrycks, Steven Basart, Mantas Mazeika, Andy Zou, Joseph Kwon, Mohammadreza Mostajabi, Jacob Steinhardt, and Dawn Song.
\newblock Scaling out-of-distribution detection for real-world settings.
\newblock In \emph{ICML}, 2022.

\bibitem[Huang et~al.(2021)Huang, Geng, and Li]{huang2021importance}
Rui Huang, Andrew Geng, and Yixuan Li.
\newblock On the importance of gradients for detecting distributional shifts in the wild.
\newblock In \emph{NeurIPS}, 2021.

\bibitem[Katz-Samuels et~al.(2022)Katz-Samuels, Nakhleh, Nowak, and Li]{katzsamuels2022training}
Julian Katz-Samuels, Julia Nakhleh, Robert Nowak, and Yixuan Li.
\newblock Training ood detectors in their natural habitats.
\newblock In \emph{ICML}, 2022.

\bibitem[Kiefer and Wolfowitz(1952)]{1177729392}
J.~Kiefer and J.~Wolfowitz.
\newblock Stochastic estimation of the maximum of a regression function.
\newblock \emph{Ann. Math. Statist.}, 1952.

\bibitem[Kong and Ramanan(2021)]{kong2021opengan}
Shu Kong and Deva Ramanan.
\newblock Opengan: Open-set recognition via open data generation.
\newblock In \emph{CVPR}, 2021.

\bibitem[Krizhevsky(2009)]{krizhevsky2009learning_cifar10}
Alex Krizhevsky.
\newblock Learning multiple layers of features from tiny images.
\newblock In \emph{arXiv}, 2009.

\bibitem[Le and Yang(2015)]{le2015tiny}
Ya~Le and Xuan Yang.
\newblock Tiny imagenet visual recognition challenge.
\newblock In \emph{CS 231N}, 2015.

\bibitem[Lee et~al.(2018{\natexlab{a}})Lee, Lee, Lee, and Shin]{LeeLLS18}
Kimin Lee, Honglak Lee, Kibok Lee, and Jinwoo Shin.
\newblock Training confidence-calibrated classifiers for detecting out-of-distribution samples.
\newblock In \emph{ICLR}, 2018{\natexlab{a}}.

\bibitem[Lee et~al.(2018{\natexlab{b}})Lee, Lee, Lee, and Shin]{10.5555/3327757.3327819}
Kimin Lee, Kibok Lee, Honglak Lee, and Jinwoo Shin.
\newblock A simple unified framework for detecting out-of-distribution samples and adversarial attacks.
\newblock In \emph{NeurIPS}, 2018{\natexlab{b}}.

\bibitem[Liang et~al.(2018)Liang, Li, and Srikant]{LiangLS18}
Shiyu Liang, Yixuan Li, and R.~Srikant.
\newblock Enhancing the reliability of out-of-distribution image detection in neural networks.
\newblock In \emph{ICLR}, 2018.

\bibitem[Lin et~al.(2021)Lin, Roy, and Li]{lin2021mood}
Ziqian Lin, Sreya~Dutta Roy, and Yixuan Li.
\newblock Mood: Multi-level out-of-distribution detection.
\newblock In \emph{CVPR}, 2021.

\bibitem[Liu et~al.(2020)Liu, Wang, Owens, and Li]{liu2020energy}
Weitang Liu, Xiaoyun Wang, John~D Owens, and Yixuan Li.
\newblock Energy-based out-of-distribution detection.
\newblock In \emph{NeurIPS}, 2020.

\bibitem[Loshchilov and Hutter(2017)]{LoshchilovH17}
Ilya Loshchilov and Frank Hutter.
\newblock {SGDR:} {Stochastic gradient descent with warm restarts}.
\newblock In \emph{ICLR}, 2017.

\bibitem[Madry et~al.(2018)Madry, Makelov, Schmidt, Tsipras, and Vladu]{madry2017towards}
Aleksander Madry, Aleksandar Makelov, Ludwig Schmidt, Dimitris Tsipras, and Adrian Vladu.
\newblock Towards deep learning models resistant to adversarial attacks.
\newblock In \emph{ICLR}, 2018.

\bibitem[Manning and Sch{\"u}tze(1999)]{manning99foundations}
Christopher~D. Manning and Hinrich Sch{\"u}tze.
\newblock \emph{Foundations of Statistical Natural Language Processing}.
\newblock The {MIT} Press, 1999.

\bibitem[Ming et~al.(2022)Ming, Fan, and Li]{ming2022poem}
Yifei Ming, Ying Fan, and Yixuan Li.
\newblock Poem: Out-of-distribution detection with posterior sampling.
\newblock In \emph{ICML}, 2022.

\bibitem[Mohseni et~al.(2020)Mohseni, Pitale, Yadawa, and Wang]{MohseniPYW20}
Sina Mohseni, Mandar Pitale, J.~B.~S. Yadawa, and Zhangyang Wang.
\newblock Self-supervised learning for generalizable out-of-distribution detection.
\newblock In \emph{AAAI}, 2020.

\bibitem[Nguyen et~al.(2015)Nguyen, Yosinski, and Clune]{Nguyen_2015_CVPR}
Anh Nguyen, Jason Yosinski, and Jeff Clune.
\newblock Deep neural networks are easily fooled: High confidence predictions for unrecognizable images.
\newblock In \emph{CVPR}, 2015.

\bibitem[Papadopoulos et~al.(2021)Papadopoulos, Rajati, Shaikh, and Wang]{PAPADOPOULOS2021138}
Aristotelis-Angelos Papadopoulos, Mohammad~Reza Rajati, Nazim Shaikh, and Jiamian Wang.
\newblock Outlier exposure with confidence control for out-of-distribution detection.
\newblock \emph{Neurocomputing}, 2021.

\bibitem[Ridnik et~al.(2021)Ridnik, Ben-Baruch, Noy, and Zelnik-Manor]{ridnik2021imagenet}
Tal Ridnik, Emanuel Ben-Baruch, Asaf Noy, and Lihi Zelnik-Manor.
\newblock Imagenet-21k pretraining for the masses.
\newblock In \emph{NeurIPS Datasets and Benchmarks Track}, 2021.

\bibitem[Salehi et~al.(2021)Salehi, Mirzaei, Hendrycks, Li, Rohban, and Sabokrou]{salehi2021unified}
Mohammadreza Salehi, Hossein Mirzaei, Dan Hendrycks, Yixuan Li, Mohammad~Hossein Rohban, and Mohammad Sabokrou.
\newblock A unified survey on anomaly, novelty, open-set, and out-of-distribution detection: Solutions and future challenges.
\newblock In \emph{arXiv}, 2021.

\bibitem[Sehwag et~al.(2021)Sehwag, Chiang, and Mittal]{SehwagCM21}
Vikash Sehwag, Mung Chiang, and Prateek Mittal.
\newblock {SSD:} {A} unified framework for self-supervised outlier detection.
\newblock In \emph{ICLR}, 2021.

\bibitem[Sun and Li(2022)]{sun2022dice}
Yiyou Sun and Yixuan Li.
\newblock Dice: Leveraging sparsification for out-of-distribution detection.
\newblock In \emph{ECCV}, 2022.

\bibitem[Sun et~al.(2021)Sun, Guo, and Li]{sun2021react}
Yiyou Sun, Chuan Guo, and Yixuan Li.
\newblock React: Out-of-distribution detection with rectified activations.
\newblock In \emph{NeurIPS}, 2021.

\bibitem[Sun et~al.(2022)Sun, Ming, Zhu, and Li]{SunM0L22}
Yiyou Sun, Yifei Ming, Xiaojin Zhu, and Yixuan Li.
\newblock Out-of-distribution detection with deep nearest neighbors.
\newblock In \emph{ICML}, 2022.

\bibitem[Sun et~al.(2023)Sun, Shi, Liang, and Li]{sun2023when}
Yiyou Sun, Zhenmei Shi, Yingyu Liang, and Yixuan Li.
\newblock When and how does known class help discover unknown ones? provable understandings through spectral analysis.
\newblock In \emph{ICML}, 2023.

\bibitem[Tack et~al.(2020)Tack, Mo, Jeong, and Shin]{Tack20CSI}
Jihoon Tack, Sangwoo Mo, Jongheon Jeong, and Jinwoo Shin.
\newblock {CSI:} {Novelty detection via contrastive learning on distributionally shifted instances}.
\newblock In \emph{NeurIPS}, 2020.

\bibitem[Tao et~al.(2023)Tao, Du, Zhu, and Li]{tao2023nonparametric}
Leitian Tao, Xuefeng Du, Xiaojin Zhu, and Yixuan Li.
\newblock Non-parametric outlier synthesis.
\newblock In \emph{ICLR}, 2023.

\bibitem[Thompson(1933)]{thompson}
William~R Thompson.
\newblock On the likelihood that one unknown probability exceeds another in view of the evidence of two sample.
\newblock \emph{Biometrika}, 1933.

\bibitem[Wang et~al.(2023)Wang, Ye, Liu, Dai, Kalander, Liu, HAO, and Han]{wang2023outofdistribution}
Qizhou Wang, Junjie Ye, Feng Liu, Quanyu Dai, Marcus Kalander, Tongliang Liu, Jianye HAO, and Bo~Han.
\newblock Out-of-distribution detection with implicit outlier transformation.
\newblock In \emph{ICLR}, 2023.

\bibitem[Wei et~al.(2022)Wei, Xie, Cheng, Feng, An, and Li]{wei2022logitnorm}
Hongxin Wei, Renchunzi Xie, Hao Cheng, Lei Feng, Bo~An, and Yixuan Li.
\newblock Mitigating neural network overconfidence with logit normalization.
\newblock In \emph{ICML}, 2022.

\bibitem[Xu et~al.(2015)Xu, Ehinger, Zhang, Finkelstein, Kulkarni, and Xiao]{xu2015turkergaze}
Pingmei Xu, Krista~A Ehinger, Yinda Zhang, Adam Finkelstein, Sanjeev~R Kulkarni, and Jianxiong Xiao.
\newblock Turkergaze: Crowdsourcing saliency with webcam based eye tracking.
\newblock In \emph{arXiv}, 2015.

\bibitem[Yang et~al.(2021)Yang, Zhou, Li, and Liu]{yang2021generalized}
Jingkang Yang, Kaiyang Zhou, Yixuan Li, and Ziwei Liu.
\newblock Generalized out-of-distribution detection: A survey.
\newblock In \emph{arXiv}, 2021.

\bibitem[Yang et~al.(2022)Yang, Wang, Zou, Zhou, Ding, PENG, Wang, Chen, Li, Sun, et~al.]{yangopenood2022}
Jingkang Yang, Pengyun Wang, Dejian Zou, Zitang Zhou, Kunyuan Ding, WENXUAN PENG, Haoqi Wang, Guangyao Chen, Bo~Li, Yiyou Sun, et~al.
\newblock Openood: Benchmarking generalized out-of-distribution detection.
\newblock In \emph{NeurIPS Datasets and Benchmarks Track}, 2022.

\bibitem[Yu et~al.(2015)Yu, Seff, Zhang, Song, Funkhouser, and Xiao]{yu2015lsun}
Fisher Yu, Ari Seff, Yinda Zhang, Shuran Song, Thomas Funkhouser, and Jianxiong Xiao.
\newblock Lsun: Construction of a large-scale image dataset using deep learning with humans in the loop.
\newblock In \emph{arXiv}, 2015.

\bibitem[Yun et~al.(2019)Yun, Han, Oh, Chun, Choe, and Yoo]{yun2019cutmix}
Sangdoo Yun, Dongyoon Han, Seong~Joon Oh, Sanghyuk Chun, Junsuk Choe, and Youngjoon Yoo.
\newblock Cutmix: Regularization strategy to train strong classifiers with localizable features.
\newblock In \emph{CVPR}, 2019.

\bibitem[Zagoruyko and Komodakis(2016)]{zagoruyko2016wide}
Sergey Zagoruyko and Nikos Komodakis.
\newblock Wide residual networks.
\newblock In \emph{BMVC}, 2016.

\bibitem[Zhang et~al.(2019)Zhang, Yu, Jiao, Xing, Ghaoui, and Jordan]{Zhang_trades}
Hongyang Zhang, Yaodong Yu, Jiantao Jiao, Eric~P. Xing, Laurent~El Ghaoui, and Michael~I. Jordan.
\newblock Theoretically principled trade-off between robustness and accuracy.
\newblock In \emph{ICML}, 2019.

\bibitem[Zhang et~al.(2018)Zhang, Ciss{\'{e}}, Dauphin, and Lopez{-}Paz]{zhang2017mixup}
Hongyi Zhang, Moustapha Ciss{\'{e}}, Yann~N. Dauphin, and David Lopez{-}Paz.
\newblock Mixup: Beyond empirical risk minimization.
\newblock In \emph{ICLR}, 2018.

\bibitem[Zhang et~al.(2023)Zhang, Inkawhich, Linderman, Chen, and Li]{zhang2023mixture}
Jingyang Zhang, Nathan Inkawhich, Randolph Linderman, Yiran Chen, and Hai Li.
\newblock Mixture outlier exposure: Towards out-of-distribution detection in fine-grained environments.
\newblock In \emph{WACV}, 2023.

\bibitem[Zhou et~al.(2017)Zhou, Lapedriza, Torralba, and Oliva]{zhou2017places}
Bolei Zhou, Agata Lapedriza, Antonio Torralba, and Aude Oliva.
\newblock Places: An image database for deep scene understanding.
\newblock \emph{Journal of Vision}, 2017.

\bibitem[Zhu et~al.(2023)Zhu, Li, Yao, Liu, Xu, and Han]{zhu2023unleashing}
Jianing Zhu, Hengzhuang Li, Jiangchao Yao, Tongliang Liu, Jianliang Xu, and Bo~Han.
\newblock Unleashing mask: Explore the intrinsic out-of-distribution detection capability.
\newblock In \emph{ICML}, 2023.

\end{thebibliography}
\bibliographystyle{plainnat}

%%%%%%%%%%%%%%%%%%%%%%%%%%%%%%%%%%%%%%%%%%%%%%%%%%%%%%%%%%%%

\clearpage

\appendix

\section*{Appendix for DivOE}
\label{app:begin}

The whole Appendix is organized as follows. In Appendix~\ref{app:baseline_info}, we present the detailed definitions and implementation of those post-hoc score functions and several OE-based methods that are considered in our exploration. In Appendix~\ref{app:theo}, we formally present the theoretical analysis about DivOE. In Appendix~\ref{app:exp_allapp}, we provide our extra experimental details and more comprehensive results with further discussion on the underlying implications. 
%In Appendix~\ref{app:related_compare}, we conduct a detailed comparison with related work. 
Finally, in Appendix~\ref{app:impact}, we discuss the potential broader impact and limitations of our work.

\section*{Reproducibility Statement}

Below we summarize several important aspects to facilitate reproducible results:

\begin{itemize}
    \item \textbf{Datasets.}  The datasets we used are all publicly accessible, which is introduced in Section~\ref{sec:setup}. For methods involving auxiliary outliers, we strictly follow previous works \citep{liu2020energy,sun2021react} to avoid overlap between the auxiliary dataset (Tiny-ImageNet) and any other OOD datasets.
    \item \textbf{Assumption.} We set our experiments to a fine-tuning scenario where a well-trained model on the original classification task is available, and the training samples are also available for subsequent fine-tuning~\citep{hendrycks2018deep}.
    \item \textbf{Open source.} We open our source code in the repository: \href{https://github.com/ZFancy/DivOE}{DivOE}
    \item \textbf{Environment.} All experiments are conducted with multiple runs on NVIDIA GeForce RTX 3090 GPUs with Python 3.7 and PyTorch 1.12.
\end{itemize}

\section{Details about Considered Baselines and Metrics}
\label{app:baseline_info}

In this section, we provide the details about the baselines for the scoring functions and fine-tuning with auxiliary outliers, as well as the corresponding hyper-parameters and other related metrics that are considered in our work.
Additionally, we provide further discussion on the comparison with two related works, e.g., MixOE and DOE, and also present the critical differences between the diversifying with previous concept of informative.

% about the definition of post-hoc and oe methods
% about the objectives used by the oe methods
\paragraph{Maximum Softmax Probability (MSP).} \citep{hendrycks17baseline} proposes to use maximum softmax probability to discriminate ID and OOD samples. The score is defined as follows,
\begin{equation}
    S_\text{MSP}(x; f) = \max_cP(y = c | x; f) = \max~\texttt{softmax}(f(x)),
\end{equation}
where $f$ represents the given well-trained model and $c$ is one of the classes $\mathcal{Y}=\{1,\ldots, C\}$. The larger softmax score indicates the larger probability for a sample to be ID data, reflecting the model's confidence on the sample. 

\paragraph{ODIN.} \citep{LiangLS18} designed the ODIN score, leveraging the temperature scaling and tiny perturbations to widen the gap between the distributions of ID and OOD samples. The ODIN score is defined as follows,
\begin{equation}
    S_\text{ODIN}(x; f) = \max_cP(y = c | \tilde{x}; f) = \max~\texttt{softmax}(\frac{f(\tilde{x})}{T}),
\end{equation}
where $\tilde{x}$ represents the perturbed samples (controled by $\epsilon$), $T$ represents the temperature. For fair comparison, we adopt the suggested hyperparameters \citep{LiangLS18}: $\epsilon = 1.4\times 10^{-3}$, $T = 1.0 \times 10^4$.

\paragraph{Mahalanobis.} \citep{10.5555/3327757.3327819} introduces a Mahalanobis distance-based confidence score, exploiting the feature space of the neural networks by inspecting the class conditional Gaussian distributions. The Mahalanobis distance score is defined as follows,
\begin{equation}
    S_\text{Mahalanobis}(x; f) = \max \limits_{c} - (f(x) - \hat{\mu}_c)^T \hat{\Sigma}^{-1}(f(x) - \hat{\mu}_c),
\end{equation}
where $\hat{\mu}_c$ represents the estimated mean of multivariate Gaussian distribution of class $c$, $\hat{\Sigma}$ represents the estimated tied covariance of the $C$ class-conditional Gaussian distributions.

\paragraph{Energy.} \citep{liu2020energy} proposes to use the Energy of the predicted logits to distinguish the ID and OOD samples. The Energy score is defined as follows,
\begin{equation}
    S_\text{Energy}(x; f) = -T  \log \sum \limits_{c = 1}^C e^{f(x)_c / T},
\end{equation}
where $T$ represents the temperature parameter. As theoretically illustrated in \citet{liu2020energy}, a lower Energy score indicates a higher probability for a sample to be ID. Following \citep{liu2020energy}, we fix the $T$ to $1.0$ throughout all experiments.

\paragraph{ASH.} \citep{djurisic2023extremely} designs a extremely simple, post-hoc method called \underline{A}ctivation \underline{SH}aping for OOD detection. It removes a large portion of an input's activation at a late layer and adjusts the rest of the activation values by scaling them up or assigning them a constant value. The simplified representation is then passed throughout the rest of the network. The logit output is used to classify ID samples and calculate scores for OOD detection as usual. We adopt the energy score and apply the ASH-S version with the hyperparameter $p=95$ for CIFAR-10 and $p=85$ for CIFAR-100, as suggested by~\citep{djurisic2023extremely}.

\paragraph{Outlier Exposure (OE).} \citep{hendrycks2018deep} initiates a promising approach towards OOD detections by involving outliers to force apart the distributions of ID and OOD samples. In the experiments, we use the cross-entropy from $f(x_{\text{out}})$to the uniform distribution as the $\mathcal{L}_{\text{OE}}$ \citep{LeeLLS18},
\begin{equation}
\label{eq:oe_app}
    \mathcal{L}_f = \mathbb{E}_{\mathcal{D}_\text{in}}\left[\ell_\text{CE}(f(x),y)\right] + \lambda\mathbb{E}_{\mathcal{D}^\text{s}_\text{out}}\left[\log \sum \limits_{c = 1}^C e^{f(x)_c} - \mathbb{E}_{\mathcal{D}^\text{s}_\text{out}}(f(x))\right].
\end{equation}  

\paragraph{Energy (w. $\mathcal{D}_{\text{aux}}$).} In addition to using the Energy as a post-hoc score to distinguish ID and OOD samples, \citep{liu2020energy} proposes an Energy-bounded objective to further separate the two distributions. The OE objective is as follows,
\begin{equation}
\label{eq:energy_aux}
    \mathcal{L}_{\text{OE}} = \mathbb{E}_{\mathcal{D}^\text{s}_\text{in}}(\max(0, S_\text{Energy}(x,f) - m_\text{in}))^2 + \mathbb{E}_{\mathcal{D}^\text{s}_\text{out}}(\max(0, m_\text{out} - S_\text{Energy}(x,f)))^2.
\end{equation}
We keep the thresholds same to \citep{liu2020energy}: $m_\text{in} = -23.0$ for CIFAR-10 and $m_\text{in} =-25.0$ for CIFAR-100, $m_\text{out} = -5.0$ for both CIFAR-10 and CIFAR-100.

\paragraph{NTOM.} \citep{chen2021atom} greedily exploits informative auxiliary data to tighten the decision boundary between ID and OOD samples. It samples hard outliers with lower OOD scores to regularize the model for OOD detection. For a fair comparison, we adopt the energy score for both outlier sampling and model training and choose the suggested sampling parameter $p=0$~\citep{chen2021atom}.

\paragraph{POEM.} \citep{ming2022poem} explores the Thompson sampling strategy \citep{thompson} to make the most use of outliers to learn a tight decision boundary. The details of Thompson sampling can refer to~\citet{ming2022poem}.

\paragraph{MixOE.} ~\citep{zhang2023mixture} proposes to mix ID data and auxiliary outliers to cover the broad region where fine-grained OOD samples lie. The model is then trained such that the prediction confidence linearly declines as the input shifts from ID distribution to OOD distribution. Considering ~\citep{zhang2023mixture} chooses different OOD test datasets from our work, we set the hyperparameters $\alpha=1$ and $\beta=0.1$ which are verified to be optimal by our tuning experiments.

\paragraph{DOE.} \citep{wang2023outofdistribution} proposes to implicitly synthesize virtual outliers via model perturbation, which enhances the generalization of OE to better optimize the learning target~\citep{foret2021sharpnessaware}.

\subsection{A Closer Look at Comparison with MixOE and DOE}
\label{app:closer_look_mixoe}

In this part, we discuss more about the difference of our DOE with previous MixOE and DOE.

Conceptually, DivOE has a different underlying motivation from MixOE and DOE. Our DivOE is proposed for diversifying outlier exposure by extrapolating auxiliary outliers. In comparison, MixOE focuses on enhancing the fine-grained OOD detection by interpolation between ID samples and OOD samples; DOE focuses on improving the generalization of the original outlier exposure by exploring model-level perturbation.

Technically, DivOE has a different algorithmic design from the others. Specifically, DivOE was proposed as a general framework for outlier extrapolation, which bases on OOD samples. It is different from MixOE which directly mix-up ID and OOD samples; It is also different from DOE that constructs the model perturbation for better optimization. It is worthy noting our DivOE is from a different perspective (data-level) compared with DOE (model-level), and they are orthogonal and combinable. DOE regards regularizing training on the same outliers with different weight perturbations as implicitly synthesized but does not change the outliers. The optimization of DOE is constrained by the original training samples and is also sensitive to tuning the model perturbation.

\begin{table}[!ht]
    \caption{Comparison of DivOE and DOE across different ID classification tasks.}
    \vspace{1mm}
    \small
    \centering
    \renewcommand\arraystretch{0.9}
    \begin{tabular}{c|lcccc}
    \toprule[1.5pt]
        Dataset & Method& FPR95$\downarrow$ & AUROC$\uparrow$ & AUPR$\uparrow$ & ID-ACC$\uparrow$ \\ 
    \midrule[0.6pt]
        \multirow{4}*{ImageNet} & OE & 61.91 & 81.57 & 95.49 & \textbf{75.90} \\
        ~ & DivOE (OE backbone) & \textbf{60.01}& \textbf{81.96} & \textbf{95.59} & 75.74 \\
        ~ & DOE & 65.90 & 79.32 & 90.61 & \textbf{75.38} \\ 
        ~ & DivOE (DOE backbone) &\textbf{64.23} & \textbf{79.46} & \textbf{91.21} & 75.10 \\
    \midrule[0.6pt]
        \multirow{4}*{SVHN} & OE & 3.99 & 99.33 & 99.85 & \textbf{91.86} \\
        ~ & DivOE (OE backbone) & \textbf{2.24} & \textbf{99.60} & \textbf{99.91} & 91.72 \\ 
        ~ & DOE & 8.93 & 97.47 & 99.44 & 91.11 \\ 
        ~ & DivOE (DOE backbone) & \textbf{4.24} & \textbf{98.68} & \textbf{99.62} & \textbf{91.50} \\ 
    \midrule[0.6pt]
        \multirow{4}*{CIFAR100} & OE & 27.67 & 91.89 & 97.93 & \textbf{75.41} \\ 
        ~ & DivOE (OE backbone) & \textbf{24.80} & \textbf{92.91} & \textbf{98.22} & 75.24 \\ 
        ~ & DOE & 30.69 & 93.13 & 98.39 & 71.41 \\
        ~ & DivOE (DOE backbone) & \textbf{20.26} & \textbf{95.41} & \textbf{98.91} & \textbf{71.43} \\ 
    \midrule[0.6pt]
        \multirow{4}*{CIFAR10} & OE & 13.76 & 97.53 & 99.43 & 94.51 \\
        ~ & DivOE (OE backbone) & \textbf{11.66} & \textbf{97.82} & \textbf{99.48} & \textbf{94.66} \\ 
        ~ & DOE & 7.39 & 98.29 & 99.58 & 92.82 \\ 
        ~ & DivOE (DOE backbone) & \textbf{6.51} & \textbf{98.60} & \textbf{99.65} & \textbf{93.45} \\ 
    \bottomrule[1.5pt]
    \end{tabular}
    \label{tab:app_divoe_doe_mixoe}
\end{table}

Empirically, as shown in Table~\ref{tab:overall_ood}, we have directly compared MixOE with our DivOE. The results show its performance is worse than our DivOE or even the original OE. For DOE, since it is from a different perspective and orthogonal, we compare DOE mainly in Table~\ref{tab:ood_comp} of the original submission, and here we present Table~\ref{tab:app_divoe_doe_mixoe} for further discussion. It shows: (1) DOE is sensitive to the ID classification task and even performs worse than the original OE in CIFAR-100/SVHN/ImageNet; (2) DivOE improves both OE and DOE under the corresponding backbones; (3) Even under unfair comparison between DivOE (OE backbone) vs. DOE, DivOE (OE backbone) still achieves the comparable or better performance on ImageNet/SVHN/CIFAR100 datasets.

\subsection{Conceptual Comparison of Diversified and Informative}
\label{app:diversify_informative}

In this part, we provide further discussion on the comparison of diversified and informative.

Conceptually, "diversified" is proposed to characterize one internal property of auxiliary outliers, while the "informative" in~\citep{chen2021atom} is a concept for relative comparison. The intuition behind "diversified" is that we hope the outliers can span more informative areas beyond the original ones, unlike sampling the most "informative" outliers (i.e., near-boundary OOD samples that are close to ID data). Intuitively, Group 1 in Figure~\ref{fig2:motivation} indicates the auxiliary outliers which span more informative areas among those other groups. In other words, the most diversified Group 1 also covers the areas of other groups. Here we also utilize the Maximum Mean Discrepancy (MMD)~\citep{borgwardt2006integrating} to show the difference between "diversified" in our submission and "informative" in Table~\ref{tab:app_mmd_conceptual}. We measure the discrepancy between the sampled/manipulated outliers with the original ones, termed as MMD[new, original], to indicate the "informative"; measure the discrepancy half by half within the sampled/manipulated outliers, termed as MMD[half new, half new], to indicate the "diversified". The results show that DivOE owns lower "informative" due to the partial outlier extrapolation but higher "diversified" on the inner dispersion of the outliers.

\begin{table}[!ht]
    \caption{"Diversified" and "Informative" measured by MMD scores.}
    \centering
    \small
    \renewcommand\arraystretch{0.9}
    \begin{tabular}{c|lcc}
    \toprule[1.5pt]
        Dataset & Method & MMD[new, original] (informative) & MMD[half new, half new] (diversified) \\ 
        \midrule[0.6pt]
        \multirow{4}*{CIFAR100} & OE & - & 0.059 $\pm$ 0.001 \\ 
        ~ & NTOM* & 0.032 $\pm$ 0.000 & 0.056 $\pm$0.000 \\ 
        ~ & ATOM & \textbf{0.088} $\pm$ 0.001 & 0.056 $\pm$ 0.000 \\ 
        ~ & DivOE & 0.053 $\pm$ 0.001 & \textbf{0.113} $\pm$ 0.002 \\ 
        \midrule[0.6pt]
        \multirow{4}*{CIFAR10} & OE & - & 0.052 $\pm$ 0.000 \\
        ~ & NTOM* & 0.030 $\pm$ 0.000 & 0.049 $\pm$ 0.001 \\ 
        ~ & ATOM & \textbf{0.285} $\pm$ 0.005 & 0.069 $\pm$ 0.001 \\ 
        ~ & DivOE & 0.118 $\pm$ 0.001 & \textbf{0.304} $\pm$ 0.007 \\ 
    \bottomrule[1.5pt]
    \end{tabular}
    \label{tab:app_mmd_conceptual}
\end{table}

Technically, in our DivOE, “diversified” guides the training to expand the auxiliary outliers to the extrapolated distributions but also consider the original ones. We re-summarized the results of Figure~\ref{fig4:a} to Table~\ref{tab:app_pursuing} and show that our DivOE with the pursuit of “diversified” (conduct partial extrapolation) performs better than those totally considered “informative” (conduct total extrapolation). The results demonstrate the effectiveness and rationality of DivOE considering partial extrapolation for diversification.

\begin{table}[!ht]
    \caption{Comparison of DivOE pursuing "diversified" and "informative"}
    \centering
    \small
    \begin{tabular}{c|lcccc}
    \toprule[1.5pt]
        Dataset & Method & FPR95$\downarrow$ & AUROC$\uparrow$ & AUPR$\uparrow$ & ID-ACC$\uparrow$ \\ 
    \midrule[0.6pt]
        \multirow{3}*{CIFAR100} & OE & 27.67 & 91.89 & 97.93 & 75.41 \\ 
        ~ & DivOE (r=1.0 for "informative") & 43.18 & 88.11 & 97.11 & \textbf{75.43} \\ 
        ~ & DivOE (r=0.5 for "diversified") & \textbf{24.80} & \textbf{92.91} & \textbf{98.22} & 75.24 \\
    \midrule[0.6pt]
        \multirow{4}*{CIFAR10} & OE & 13.76 & 97.53 & 99.43 & 94.51 \\ 
        ~ & DivOE (r=1.0 for "informative") & 22.05 & 96.24 & 99.14 & \textbf{94.81} \\ 
        ~ & DivOE (r=0.5 for "diversified") & \textbf{11.66} & \textbf{97.82} & \textbf{99.48} & 94.66 \\ 
    \bottomrule[1.5pt]
    \end{tabular}
    \label{tab:app_pursuing}
\end{table}

\section{Theoretical Analysis and Discussion}
\label{app:theo}

In this section, we formally present the theoretical implications with detailed definitions and assumptions. The complete theoretical analysis is based on the perspectives of sample complexity and OOD distribution gap in the previous works~\citep{ming2022poem,fang2022learnable}. Different from the existing methods, we focus on the informative extrapolation proposed in our DivOE for diversifying the given auxiliary outliers towards the ID decision area.  Intuitively, we provide an illustration of the conceptual comparison in Figure~\ref{fig1_app:illustration} to explain the differences. Different from the previous sampling-based methods which assume that the sampling space of the surrogate OOD distribution can be broad enough to cover the OOD decision area, our DivOE is targeted to conduct informative extrapolation to extend the given outlier exposure towards the ID decision area. In the following parts, we first introduce some preliminary setups and notations for the analyses.

\paragraph{Preliminary setups and notations.} Following the prior works~\citep{LeeLLS18,SehwagCM21,ming2022poem}, we assume the extracted feature approximately follows a Gaussian mixture model (GMM) with the equal class priors as $\frac{1}{2}\mathcal{N}(\mu,\sigma^2\mathcal{I})+\frac{1}{2}\mathcal{N}(-\mu,\sigma^2\mathcal{I})$. To be specific,  $\mathcal{D}_\text{in}=\mathcal{N}(\mu, \sigma^2\mathcal{I})$ and $\mathcal{D}^s_\text{out}=\mathcal{N}(-\mu, \sigma^2\mathcal{I})$. Considering the hypothesis class as $\mathcal{H}={\text{sign}(\theta^Tx), \theta\in\mathbb{R}^d}$. The classifier outputs 1 if $x\sim\mathcal{D}_\text{in}$ and outputs -1 if $x\sim\mathcal{D}^s_\text{out}$.

\begin{figure}
  \centering
  \includegraphics[scale=0.12]{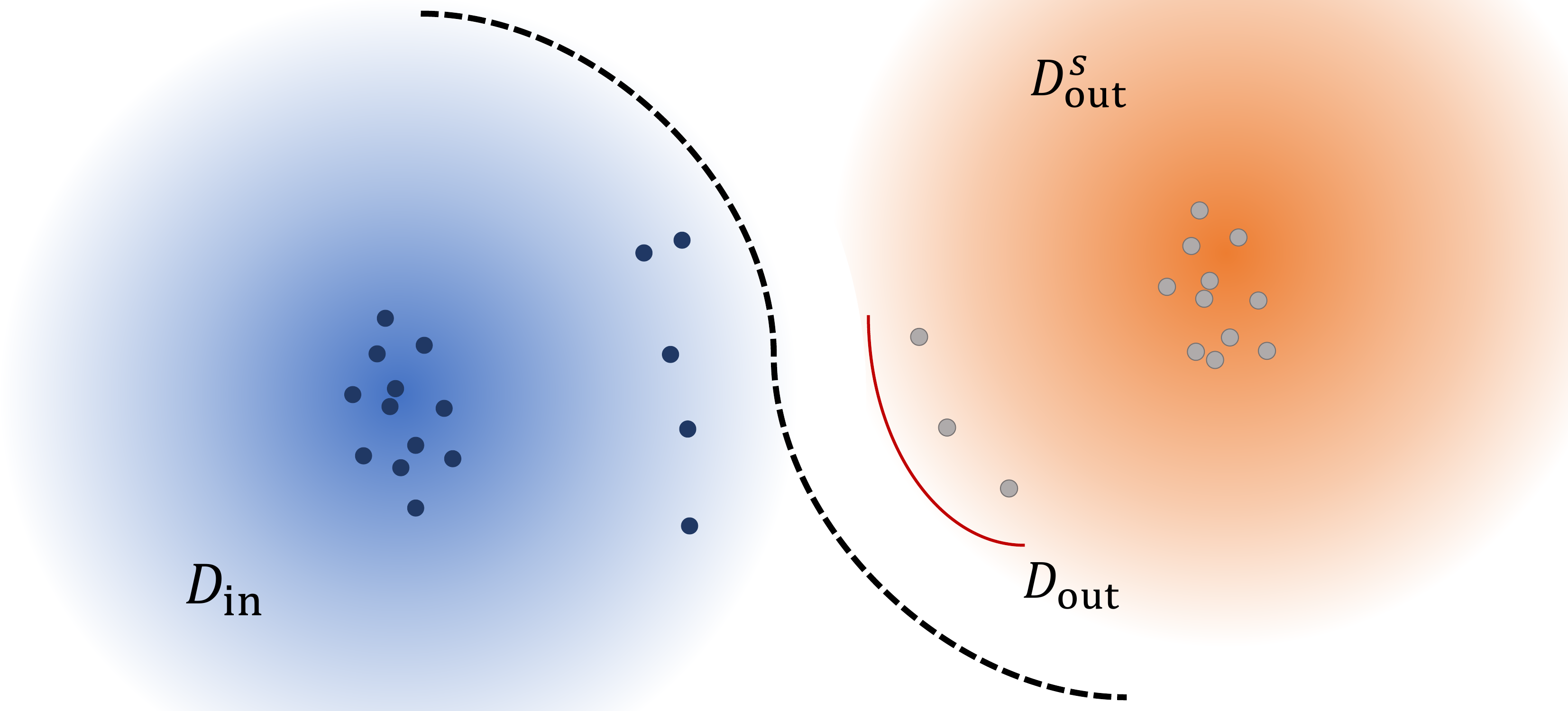}
  \includegraphics[scale=0.12]{figures/limited_board.pdf}
  \vspace{1mm}
  \caption{Illustration about the conceptual comparison of the previous assumption on auxiliary outliers in OE and that studied in our work. Left panel: the benign assumption on the auxiliary outliers, in which the surrogate OOD distribution $\mathcal{D}^s_\text{out}$ can be broad enough to cover the informative boundary outliers close to the decision area of ID data (in $\mathcal{D}_\text{in}$). Right panel: a more general assumption on the auxiliary outliers, considering that they can not well represent the unseen OOD distribution $\mathcal{D}_\text{out}$ since it is impractical or even infeasible to accurately pre-define and collect those boundary outliers.
  }
  \label{fig1_app:illustration}
\end{figure}

First, we introduce the assumption about informative extrapolation on $\mathcal{D}^s_\text{out}$. The assumption is empirically supported to be valid by the previous exploration on the informative extrapolation in Figures~\ref{fig2:motivation} and~\ref{fig3:method_effect}, where the extrapolated outliers are more informative and close to the decision boundary towards ID samples.

\begin{assumption}[Informative Extrapolation on $\mathcal{D}^s_\text{out}$]
Considering the extrapolated version $\mathcal{D}^e_\text{out}$ of the $\mathcal{D}^s_\text{out}$ in our proposed DivOE, we assume that the data points $x\sim$ $\mathcal{D}^e_\text{out}$ satisfy the extended constraint based on the boundary scores $-|f_\text{outlier}(x)|$ defined in POEM~\cite{ming2022poem}: 
\begin{equation}
    \sum^{n}_{i=1}f_\text{outlier}\leq (\alpha-\tau(\epsilon))n,
\end{equation} 
where the $f_\text{outlier}$ is a function parameterized by some unknown ground truth weights and maps the high-dimensional input x into a scalar. Generally, it represents the discrepancy between the $\mathcal{D}^e_\text{out}$ and the true $\mathcal{D}_\text{out}$, indicated with the constraint $\alpha-\tau(\epsilon)$ results from the informative extrapolation. The $\tau(\epsilon)$ represents the extended diversity, which is positively related to the extrapolated strength $\epsilon$ in general.
\label{assump:masking}
\end{assumption}

Given the above assumption, we can derive the following extended lemma based on that adopted in POEM~\citep{ming2022poem}.

\begin{lemma}[Constraint of Extrapolated $\mathcal{D}^e_\text{out}$]
\label{app: lemma1}
Assume the data points $x$$\sim$ extrapolated $\mathcal{D}^e_\text{out}$ satisfy the following constraint for resulting in the following varied boundary margin: $\sum_{i=1}^n|2x_i^T\mu|\leq n \sigma^2 (\alpha-\tau(\epsilon))$. 
\end{lemma}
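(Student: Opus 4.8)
The plan is to derive Lemma~\ref{app: lemma1} directly from Assumption~\ref{assump:masking} by instantiating the abstract boundary-score function $f_\text{outlier}$ in the concrete Gaussian mixture setting described in the preliminary setups. Recall that in this regime the extracted feature is modeled as $\frac{1}{2}\mathcal{N}(\mu,\sigma^2\mathcal{I}) + \frac{1}{2}\mathcal{N}(-\mu,\sigma^2\mathcal{I})$ with $\mathcal{D}_\text{in}=\mathcal{N}(\mu,\sigma^2\mathcal{I})$ and $\mathcal{D}^s_\text{out}=\mathcal{N}(-\mu,\sigma^2\mathcal{I})$, and the hypothesis class is the set of linear classifiers $\text{sign}(\theta^Tx)$. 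First I would recall, following POEM~\citep{ming2022poem}, that for this GMM the ground-truth boundary score $-|f_\text{outlier}(x)|$ specializes to a quantity proportional to the signed margin of $x$ against the Bayes-optimal separating hyperplane, which here is the hyperplane with normal $\mu$; concretely $f_\text{outlier}(x)$ is (up to the normalization baked into the constant $\alpha$) the affine function $x \mapsto 2x^T\mu / \sigma^2$ measuring how far $x$ sits on the OOD side. This identification is the substantive modeling step: it is where the "unknown ground-truth weights" of Assumption~\ref{assump:masking} get pinned down to $\mu$ (scaled by $\sigma^{-2}$) in the analytically tractable model.

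Once that identification is in place, the lemma follows by substitution and a sign bookkeeping argument. Assumption~\ref{assump:masking} gives $\sum_{i=1}^n f_\text{outlier}(x_i) \le (\alpha - \tau(\epsilon)) n$ for $x_i \sim \mathcal{D}^e_\text{out}$. Plugging in $f_\text{outlier}(x_i) = 2x_i^T\mu/\sigma^2$ (or, using the absolute-value form $-|f_\text{outlier}|$, the corresponding $|2x_i^T\mu|/\sigma^2$ after noting that extrapolated outliers stay on the OOD side so the sign is consistent across the batch, as argued empirically in Figures~\ref{fig2:motivation} and~\ref{fig3:method_effect}) yields $\sum_{i=1}^n |2x_i^T\mu|/\sigma^2 \le (\alpha-\tau(\epsilon)) n$, and multiplying both sides by $\sigma^2$ gives exactly $\sum_{i=1}^n |2x_i^T\mu| \le n\sigma^2(\alpha-\tau(\epsilon))$, which is the claimed constraint. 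The interpretation is that the diversification term $\tau(\epsilon)$ tightens the admissible margin budget of the extrapolated outliers, pushing them closer to the ID decision area, exactly the behavior the figure captions advertise.

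The main obstacle, such as it is, is not a hard calculation but making the score-function identification rigorous and consistent with POEM's conventions: one must be careful that the constant $\alpha$ absorbs the same normalization in both the assumption and the lemma (otherwise a stray $\sigma^2$ or factor of $2$ appears in the wrong place), and one must justify dropping the absolute values / fixing the sign by appealing to the fact that starting the extrapolation from genuine outliers keeps $x^T\mu < 0$ with high probability, so $|2x^T\mu| = -2x^T\mu$ uniformly over the batch. I would therefore structure the write-up as: (i) recall POEM's boundary score and specialize it to the GMM to get the affine form of $f_\text{outlier}$; (ii) note the sign consistency of extrapolated outliers to pass between $f_\text{outlier}$ and $|2x^T\mu|$; (iii) substitute into the inequality of Assumption~\ref{assump:masking} and rescale by $\sigma^2$ to conclude. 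Everything downstream (the FPR bound in the Theorem, via $\text{FPR}(\theta^*_{n_1,n_2}) = erf(\mu^T\theta^*_{n_1,n_2}/(\sigma\|\theta^*_{n_1,n_2}\|))$ and its monotonicity) then uses this lemma as the single structural input controlling how the extrapolated outliers reshape the learned $\theta^*$.
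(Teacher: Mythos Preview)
Your proposal is correct and follows essentially the same route as the paper: both identify $f_\text{outlier}$ in the GMM setting (the paper does this by an explicit Bayes' rule computation giving $\mathbb{P}(\text{outlier}\mid x)=\bigl(1+e^{-\frac{1}{2\sigma^2}(d_\text{outlier}(x)-d_\text{in}(x))}\bigr)^{-1}$ and matching it to the sigmoid form, which yields $|f_\text{outlier}(x)|=\frac{2}{\sigma^2}|x^\top\mu|$), then substitute into Assumption~\ref{assump:masking} and rescale by $\sigma^2$. Your added sign-consistency remark to pass from $\sum f_\text{outlier}$ to $\sum|2x_i^\top\mu|$ is in fact a point the paper's proof leaves implicit, so including it only tightens the argument.
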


\begin{proof}[proof of Lemma.~\ref{app: lemma1}]
Given the Gaussian mixture model described in the previous setup, we can obtain the following expression by Bayes' rule of $\mathbb{P}(\text{outlier}|x)$,
\begin{align}
    \mathbb{P}(\text{outlier}|x) = \frac{ \mathbb{P}(x|\text{outlier}) \mathbb{P}(\text{outlier})}{ \mathbb{P}(x)} = \frac{1}{1+e^{-\frac{1}{2\sigma^2}(d_\text{outlier}(x)-d_\text{in}(x))}},
\end{align}
where $d_\text{outlier}(x))=(x+\mu)^\top(x+\mu), d_\text{in}(x)=(x-\mu)^\top(x-\mu)$, and $\mathbb{P}(\text{outlier}|x)=\frac{1}{1+e^{-f_\text{outlier}(x)}}$ according to its definition. Then we have:
\begin{align}
    -f_\text{outlier} = -\frac{1}{2\sigma^2}&(d_\text{outlier}(x)-d_\text{in}(x)),\\
    -|f_\text{outlier}| = -\frac{1}{2\sigma^2}|(x-\mu)^\top(x-\mu)&-(x+\mu)^\top(x+\mu)|=-\frac{2}{\sigma^2}|x^\top \mu|.
\end{align}
Therefore, we can get the constraint as: $\sum_{i=1}^n|2x_i^T\mu|\leq n \sigma^2 (\alpha-\tau(\epsilon))$.
\end{proof}

With the previous assumption and lemma, then we present the analysis as below.

\paragraph{Complexity analysis on $\mathcal{D}^e_\text{out}$.} With the above lemma and the assumptions of informative extrapolation, we can derive the results to understand the benefit of our proposed DivOE.

Consider the given classifier defined as $\theta^*_{n_1,n_2}=\frac{1}{n_1+n_2}(\sum_{i=1}^{n_1} x_i^1-\sum_{i=1}^{n_2} x_i^2)$, assume each $x_i^1$ is drawn \textit{i.i.d.} from $\mathcal{D}_\text{in}$ and each $x_i^2$ is drawn \textit{i.i.d} from $\mathcal{D}^e_\text{out}$, and assume the signal/noise ratio is $\frac{||\mu||}{\sigma}=r_0\gg 1$, the dimentionality/sample size ratio is $\frac{d}{n}=r_1$, as well as exist some constant $\alpha<1$. By decomposition, we can rewrite $\theta^*_{n_1,n_2}=\mu+\frac{n_1}{n_1+n_2}\theta_\text{1}+\frac{n_2}{n_1+n_2}\theta_\text{2}$ with the following $\theta_\text{1}$ and $\theta_\text{2}$:
\begin{equation}
\label{eq:theta_decomp}
    \theta_\text{1} = \frac{1}{n_1}(\sum_{i=1}^{n_1}x_i^1)-\mu,\quad \theta_\text{2} = \frac{1}{n_2}(-\sum_{i=1}^{n_2}x_i^2)-\mu,
\end{equation}
Since $\theta_\text{1} \sim \mathcal{N}(0, \frac{\sigma^2}{n_1}\mathcal{I})$, we have that $||\theta_1||^2\sim\frac{\sigma^2}{n_1}\mathcal{X}_d^2$ and $\frac{\mu^T\theta_1}{||\mu||}\sim\mathcal{N}(0, \frac{\sigma^2}{n_1})$ to form the standard concentration bounds as:
\begin{equation}
\label{eq:concentration}
    \mathbb{P}(||\theta_1||^2\geq\frac{\sigma^2}{n_1}(d+\frac{1}{\sigma}))\leq e^{-\frac{d}{8\sigma^2}},\quad \mathbb{P}(\frac{|\mu^T\theta_1|}{||\mu||}\geq(\sigma||\mu||)^{\frac{1}{2}})\leq 2e^{-\frac{n_1||\mu||}{2\sigma}}
\end{equation}

The distribution of $\theta_2$ can be treated as a truncated distribution of $\theta_1$ as $x_i^2$ drawn \textit{i.i.d.} from the extrapolated $\mathcal{D}^e_\text{out}$. Without losing the generality, we replace $n_1$ with $n$, and have the following inequality with a finite positive constant $a$:
\begin{equation}
\label{eq:concentration2}
    \mathbb{P}(||\theta_2||^2\geq\frac{\sigma^2}{n_1}(d+\frac{1}{\sigma}))\leq ae^{-\frac{d}{8\sigma^2}}
\end{equation}
According to Lemma~\ref{app: lemma1}, we can have that $|\mu^T\theta_2|\leq||\mu||^2+\frac{\sigma^2(\alpha-\tau(\epsilon))}{2}$. Now we can have $||\theta_1||^2\leq\frac{\sigma^2}{n}(d+\frac{1}{\sigma})$, $||\theta_2||^2\leq\frac{\sigma^2}{n}(d+\frac{1}{\sigma})$, $\frac{|\mu^T\theta_1|}{||\mu||}\leq(\sigma||\mu||)^{\frac{1}{2}}$ simultaneously hold and derive the following recall the decomposition,
\begin{equation}
\label{eq:overalltheta}
    ||\theta^*_{n_1,n_2}||^2 = ||\mu+\frac{n_1}{n_1+n_2}\theta_1+\frac{n_2}{n_1+n_2}\theta_2||^2 \leq \frac{\sigma^2}{n}(d+\frac{1}{\sigma}) + ||\mu||^2,
\end{equation}
and
\begin{equation}
\label{eq:overallmutheta}
    |\mu^T\theta^*_{n_1,n_2}|\geq\frac{1}{2}(||\mu||^2-\sigma^{\frac{1}{2}}||\mu||^{\frac{3}{2}}-\frac{\sigma^2(\alpha-\tau(\epsilon))}{2}).
\end{equation}
With the above inequality derived in Eq.~(\ref{eq:overalltheta}) and Eq.~(\ref{eq:overallmutheta}), we can have the following bound with the probability at least $1-(1+a)e^{-\frac{r_1n}{8\sigma^2}}-2e^{-\frac{n_1||\mu||}{2\sigma}}$
\begin{equation}
    \frac{\mu^T\theta^*_{n_1,n_2}}{\sigma||\theta^*_{n_1,n_2}||} \geq \frac{||\mu||^2-\sigma^{\frac{1}{2}}||\mu||^{\frac{3}{2}}-\frac{\sigma^2(\alpha-\tau(\epsilon))}{2}}{2\sqrt{\frac{\sigma^2}{n}(d+\frac{1}{\sigma}) + ||\mu||^2}}
\end{equation}
Since $\text{FPR}(\theta^*_{n_1,n_2})=erf(\frac{\mu^T\theta^*_{n_1,n_2}}{\sigma||\theta^*_{n_1,n_2}||})$ is monotonically decreasing, as the lower bound of $\frac{\mu^T\theta^*_{n_1,n_2}}{\sigma||\theta^*_{n_1,n_2}||}$ will increase as the constraint from the extrapolated $\mathcal{D}^e_\text{out}$ changed accordingly in our DivOE, the upper bound of $\text{FPR}(\theta^*_{n_1,n_2})$ will decrease. From the above analysis, we can observe the benefit of DivOE when conducting the informative extrapolation.

\paragraph{Additional Discussion with Extrapolation Condition.} The extrapolation of outlier data towards inliers may not always be possible. Similar to the condition analysis in~\citep{fang2022learnable}, the informative extrapolation that targets better characterize the decision boundary between ID and OOD also needs some requirements or realization assumptions. Intuitively, it mainly relies on auxiliary outliers and specific extrapolation techniques. For the former, the original outliers should not be far away from the ID sample; For the latter, the extrapolation techniques should be able to add enough perturbation or semantic changes to the original outlier. It is worthy further exploring the condition in a theoretical way by referring the analysis in~\citet{fang2022learnable}.

\section{Additional Experimental Results and Further Discussion}
\label{app:exp_allapp}

In this section, we provide more experiment results from different perspectives to characterize our proposed DivOE. First, we introduce the additional experimental setups for the empirical verification in previous figures and our learning framework. Second, we provide more analyses of the manipulated data via informative extrapolation. Finally, comprehensive results with discussions are provided.

\subsection{Additional Experimental Setups} 

\paragraph{Figure~\ref{fig1:illustration}.} In the right-panel of Figure~\ref{fig1:illustration}, we conduct experiments to evaluate the OOD detection performance of different OE-based methods with the gradually reduced number of auxiliary outliers. We first sample a particular number of the least informative auxiliary outliers to simulate the different cases in collecting those surrogate OOD data. With the highest OOD scores, we sample the selected auxiliary outliers from the whole auxiliary dataset. Specifically, we choose the negative boundary score generated by POEM~\citep{ming2022poem} as the OOD score. For the fair comparison, we adjust the size of the mini-batch for OOD samples so that all methods actually train on 90\% of the least informative auxiliary dataset by random sampling, or their custom sampling strategies during every epoch while keeping the number of ID training data unchanged.

\paragraph{Figure~\ref{fig2:motivation}.} In the left-middle panel of Figure~\ref{fig2:motivation}, we evaluate the OOD detection performance of Energy-OE \citep{liu2020energy} with auxiliary outliers with different level of diversity. To simulate the collected auxiliary outliers from different levels of diversity, we first sample the 40000 least informative outliers from Tiny-ImageNet~\citep{le2015tiny} according to the boundary score generated by POEM~\citep{ming2022poem}. Then we choose 10000 informative outliers in different but not-overlapped boundary score intervals, namely $[0,10000],[10000,20000],[30000,40000],[40000,50000],[50000,60000]$, and combine them with the 40000 least informative outliers to form 5 groups of auxiliary outliers with different levels of diversity. In the two right panels of Figure~\ref{fig2:motivation}, we present TSNE visualization of feature embedding (taken at the penultimate layer of the network) on ID, the original outliers, and extrapolated outliers after the first training epoch in DivOE, with extrapolated ratio $r=0.5$, extrapolate epsilon $\epsilon=0.05$ and optimization step number $k=5$.

\paragraph{Figure~\ref{fig3:method_effect}.} In the two left panel of Figure~\ref{fig3:method_effect}, we keep the extrapolated ratio $r$ fixed to 0.5 and optimization step number $k$ fixed to 5. In the right-middle panel, we compare ID/OOD distributions based on Energy score after training with DivOE, with hyperparameters $r=0.5$,$\epsilon=0.05$, and $k=5$.

\paragraph{Figure~\ref{fig4:ablation}.} In Figure~\ref{fig4:a}, the extrapolated epsilon $\epsilon$ is set to 0.5 and optimization step number $k$ is set to 5. In Figure~\ref{fig4:b}, we fix the extrapolated ratio $r$ to 0.5 and optimization step number $k$ to 5.

\paragraph{Training details.} We conduct all major experiments on pre-trained WideResNet \citep{zagoruyko2016wide}with 40 depth and 2 widen factor and fix the number of fine-tuning epochs to 10, following the previous research work~\citep{hendrycks2018deep,liu2020energy}. The models are trained using stochastic gradient descent \citep{1177729392} with Nesterov momentum \citep{JMLR:v12:duchi11a}. We adopt Cosine Annealing \citep{LoshchilovH17} to schedule the learning rate, which begins at $0.001$. We set the momentum and weight decay to be $0.9$ and $10^{-4}$ respectively for all experiments. The size of the mini-batch is $128$ for both ID samples and OOD samples when training and $200$ for both when testing. 

\paragraph{Hyper-parameter of DivOE.} Throughout the paper, all the experiments with DivOE are conducted with extrapolated ratio $r=0.5$, extrapolate epsilon $\epsilon=0.05$, and optimization step number $k=5$, unless stated otherwise.

\subsection{Analyses on the Extrapolated Data} 

In this part, we further compare the extrapolated outliers with the given auxiliary outliers during the fine-tuning process of OE. In Figure~\ref{fig2_app:tsne}, we adopt the TSNE visualization to check the feature representation of the extrapolated outliers during training. Through the comparison with the given outliers (represented by the blue circles), the extrapolated outliers (represented by the orange cricles) manipulated by our introduced target (e.g., Eq.~(\ref{eq:max_obj})) can better cover the decision area between ID and OOD data. Note that the extrapolated OOD samples are optimized to be closer to the ID samples (represented by the green circles), which indicates this extrapolated portion can perform better on more accurately shaping the decision boundary of the binary classification task for ID and OOD.

In Figure~\ref{fig7}, we present the energy score of the extrapolated outliers compared with the original ones. It can be found that the newly synthesized outliers can generally easier to be recognized as ID samples than the original auxiliary outliers. In Figure~\ref{fig8:loss_difference} and Table~\ref{tab:loss_difference}, we report the loss differences corresponding to the maximization part that verifies our operation indeed achieves the conceptual idea of diversified outlier exposure. Intuitively, we visualize the extrapolated outliers in Figure~\ref{fig:visual}. The extrapolated samples still keep the semantic information that belongs to the OOD class space.

\begin{figure}[ht]
    \centering
    \subfigure[Epoch 1]{
    \label{fig2_app:1}
    \includegraphics[width=4cm]{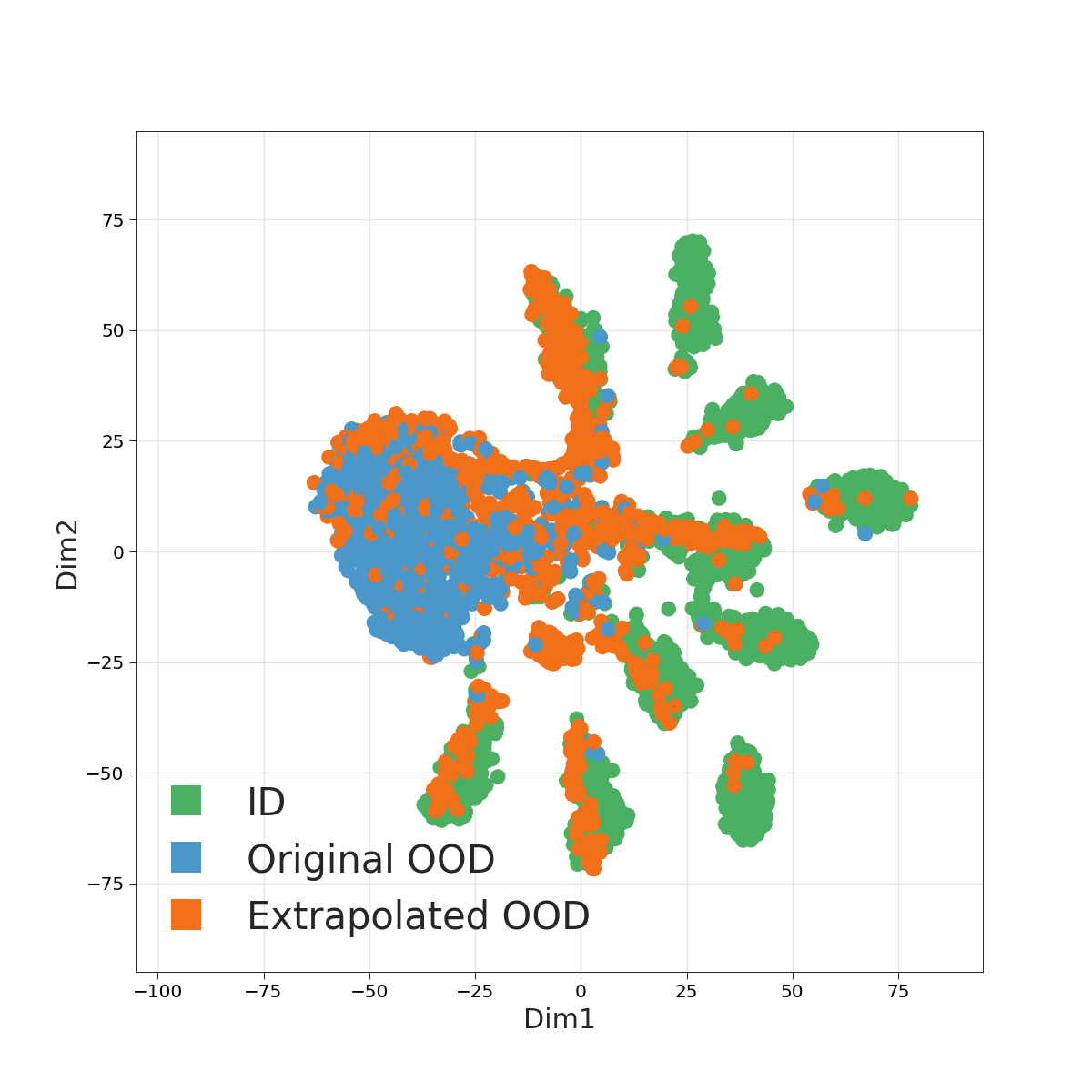}}
    \subfigure[Epoch 2]{
    \label{fig2_app:2}
    \includegraphics[width=4cm]{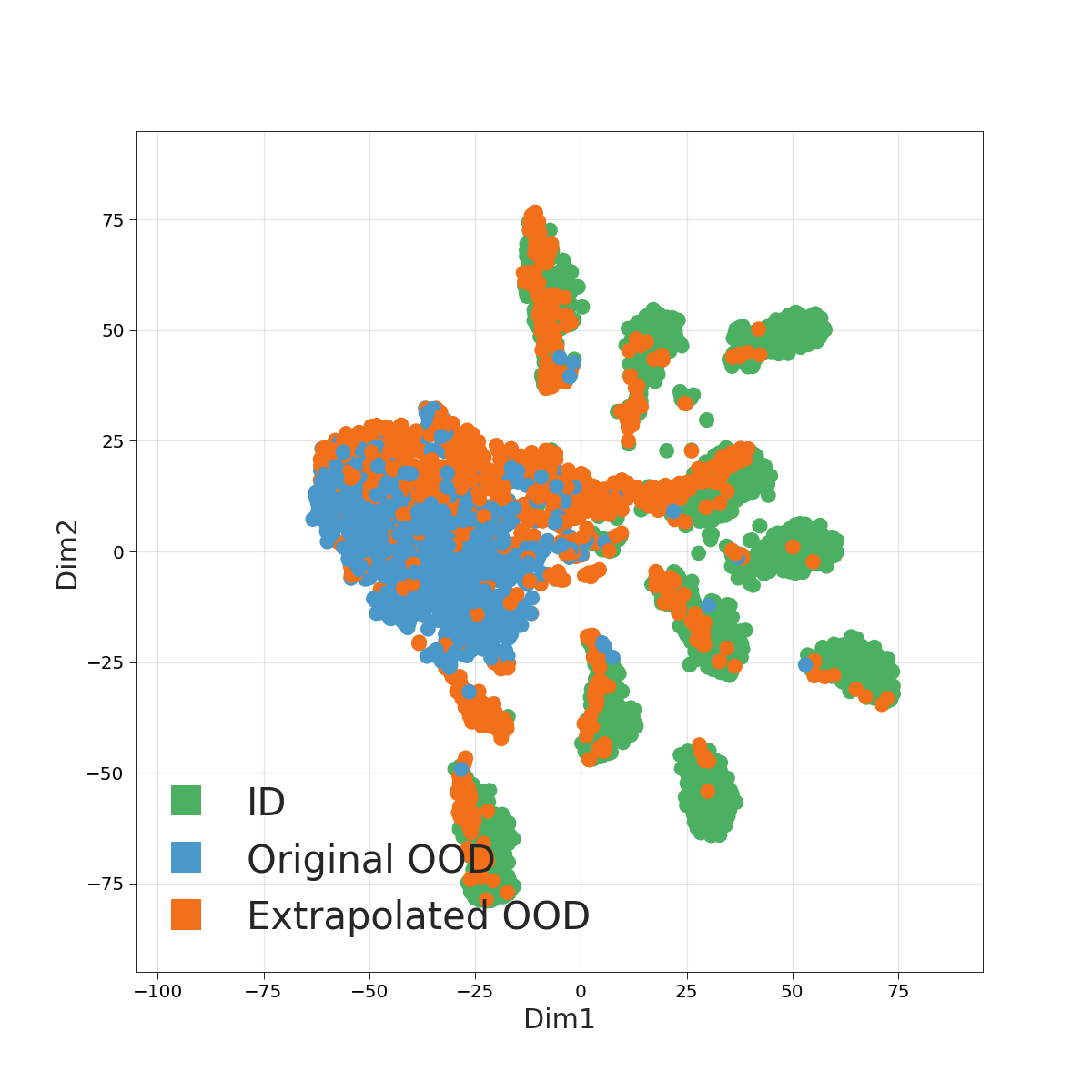}}
    \subfigure[Epoch 3]{
    \label{fig2_app:3}
    \includegraphics[width=4cm]{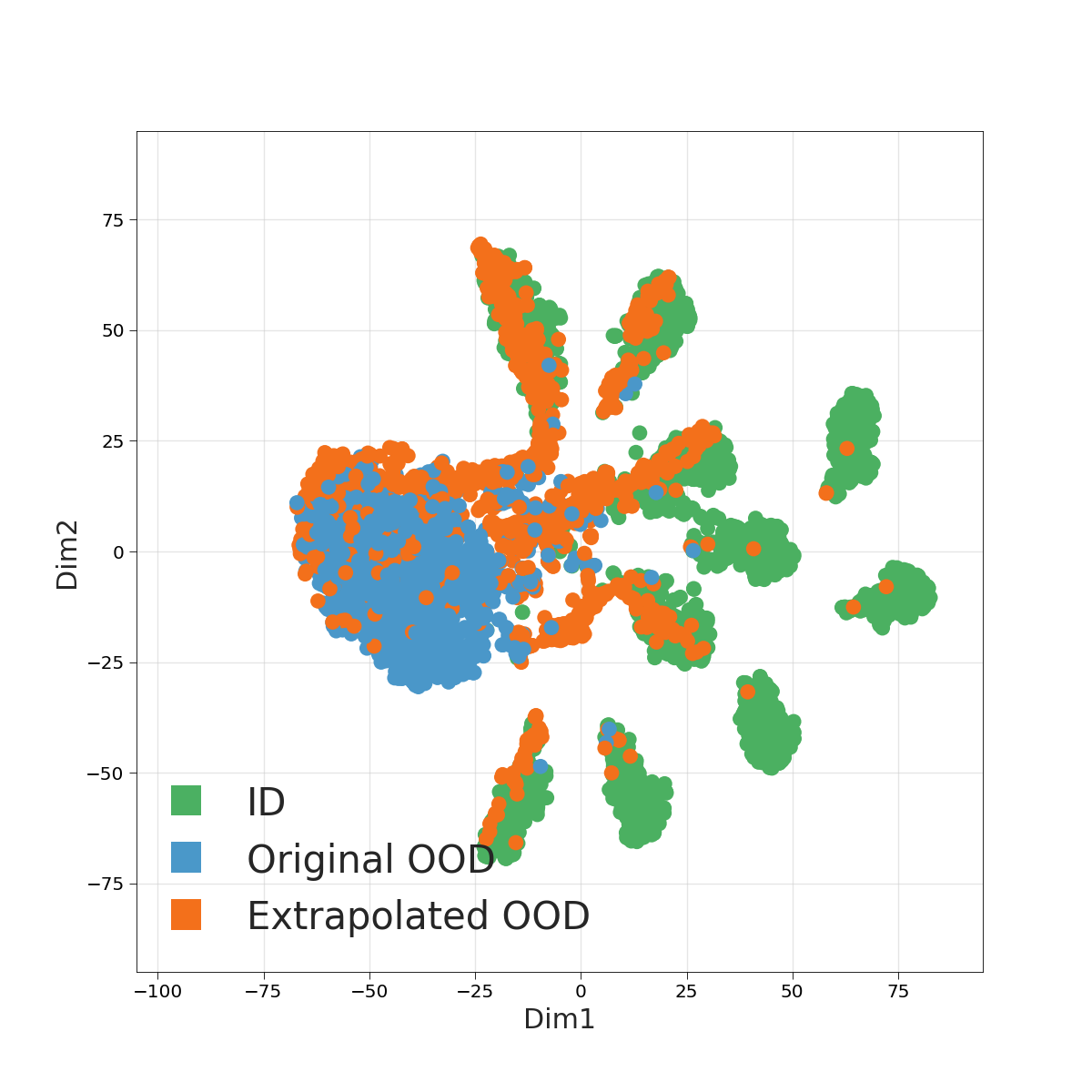}}
    \subfigure[Epoch 4]{
    \label{fig2_app:4}
    \includegraphics[width=4cm]{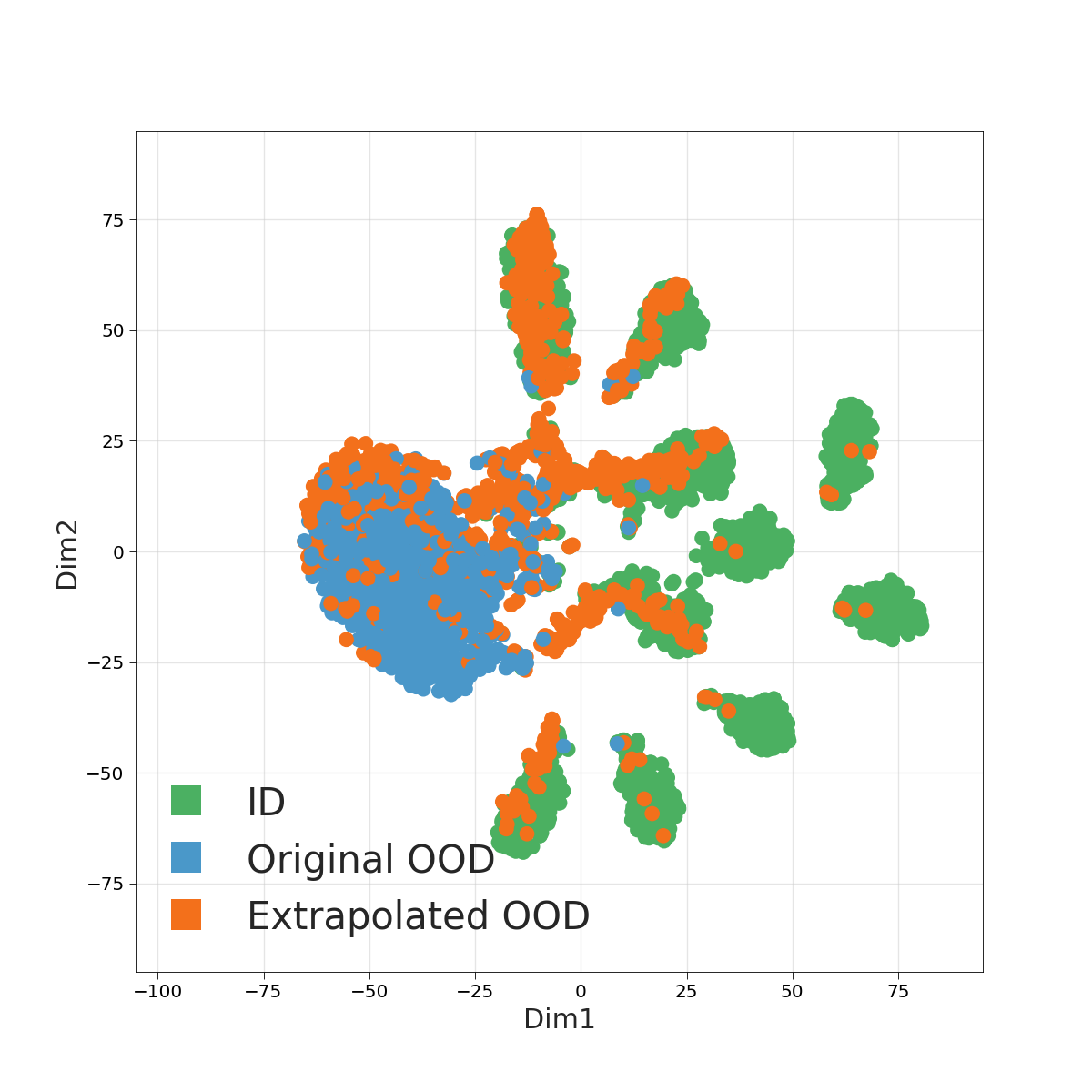}}
    \subfigure[Epoch 5]{
    \label{fig2_app:5}
    \includegraphics[width=4cm]{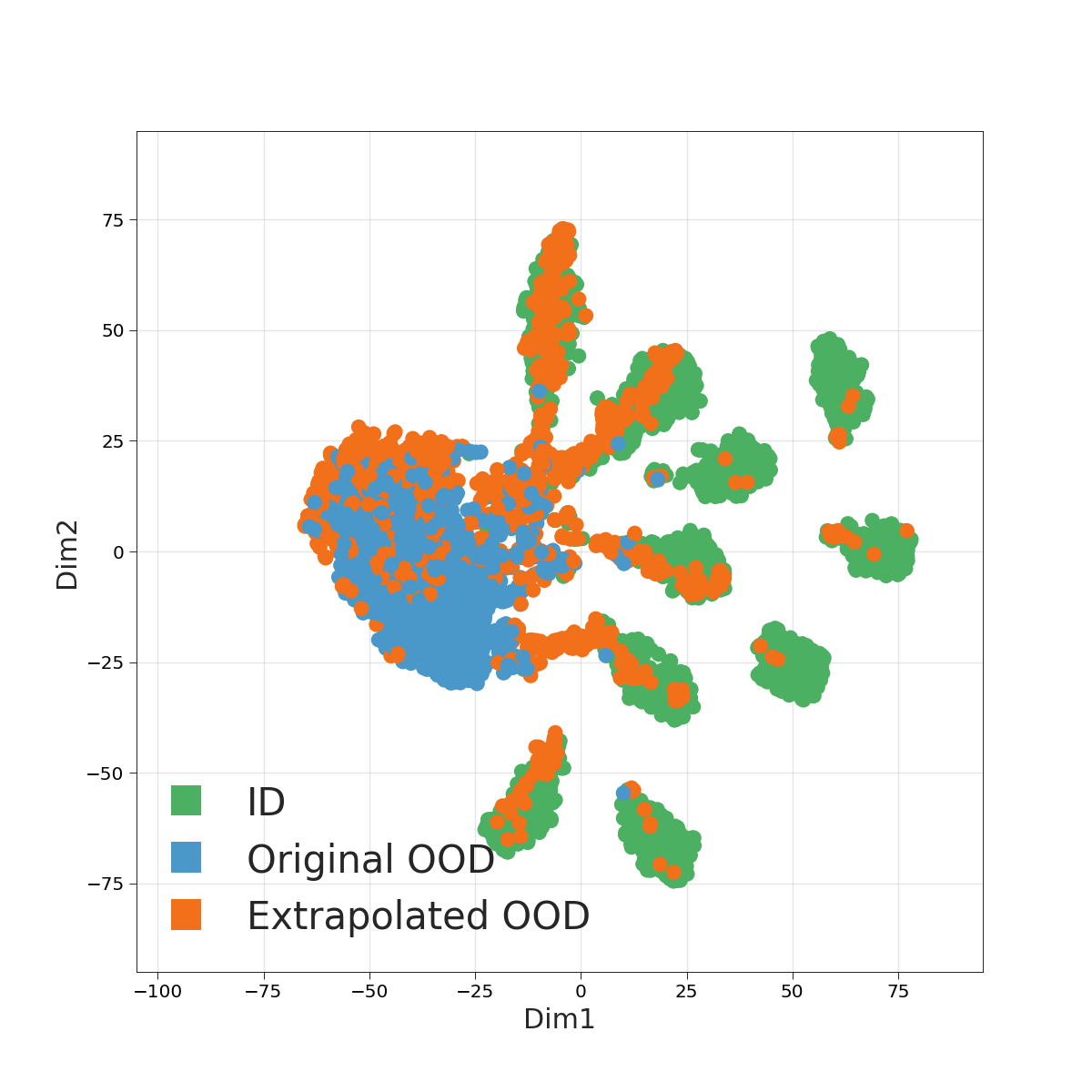}}
    \caption{TSNE visualization of CIFAR10 ID data and the extrapolated outliers generated during OE.}
    \label{fig2_app:tsne}
\end{figure}

In addition, we also conduct another quantitative investigation on the extrapolated outliers with the original ones. In Table~\ref{tab:divoe_mmd}, we adopt the Maximum Mean Discrepancy (MMD)~\citep{borgwardt2006integrating} to check the distance between the distributions of the given auxiliary outliers with the extrapolated ones. By varying the specific $\epsilon$ that is adopted in the multi-step optimization (e.g., Eq.~(\ref{eq:divoe_formal_obj})), we can observe that the manipulated outliers with large $\epsilon$ perform large discrepancy with the original outliers. It is intuitive and reasonable with more changes on semantic levels as shown in the left panel of Figure~\ref{fig3:method_effect}. However, as previously illustrated in the left-middle panel of Figure~\ref{fig3:method_effect}, we also find the larger $\epsilon$ may not always indicates more informative outliers as the semantic information may be lost when the pixel perturbations are arbitrarily large. It is worth to be further explored in the future. 

\begin{table*}[ht]
    \caption{MMD value between original outliers and the extrapolated outliers with different $\epsilon$ in DivOE}
    \centering
    %\renewcommand\arraystretch{0.9}
    %\resizebox{\textwidth}{!}{
    \begin{tabular}{c|ccccccccccc}
        \toprule[1.5pt]
          $\epsilon$ & 0.01 & 0.025 & 0.05&0.075&0.1&0.15&0.5 \\
        
        \midrule[0.6pt]
          MMD value & 0.0155 & 0.0176  & 0.0208 & 0.0233 & 0.0253  & 0.0286 & 0.0439 \\

        \bottomrule[1.5pt]
    \end{tabular}%}
    \label{tab:divoe_mmd}
\end{table*}

\subsection{Discussion on Using PGD-based Attack}
\label{app:rational_pgd}

Note that, PGD is a replaceable choice in the framework of DivOE. To better understand the effect of these attack-based noise, we compare it with other inner-attack methods and show that PGD with the uniform distribution loss is generally the better way to achieve higher performance.

\begin{table}[!ht]
\caption{Rationality of adopting PGD attack in the inner maximization.}
    \centering
    \small
    \begin{tabular}{c|llcccc}
    \toprule[1.5pt]
        Dataset & Method & Perturbation Direction \& method & FPR95 & AUROC & AUPR & ID-ACC \\ 
        \midrule[0.6pt]
        \multirow{5}*{CIFAR100} & OE & ~ & 27.67 & 91.89 & 97.93 & 75.41 \\ 
        ~ & DivOE & Re-PGD (most uninformative) & 27.36 & 92.02 & 97.98 & 75.62 \\ 
        ~ & DivOE & Random Noise & 26.16 & 92.39 & 98.06 & 75.56 \\ 
        ~ & DivOE & Mixup & 25.70 & 92.46 & 98.07 & \textbf{76.06} \\ 
        ~ & DivOE & PGD (most  informative) & \textbf{24.80} & \textbf{92.91} & \textbf{98.22} & 75.24 \\ 
        \midrule[0.6pt]
        \multirow{5}*{CIFAR10} & OE & ~ & 13.76 & 97.53 & 99.43 & 94.51 \\ 
        ~ & DivOE & Re-PGD (most uninformative) & 13.20 & 97.59 & 99.43 & 94.80 \\ 
        ~ & DivOE & Random Noise & 13.19 & 97.60 & 99.43 & 94.81 \\ 
        ~ & DivOE & Mixup & 12.87 & 97.59 & 99.42 & \textbf{94.83} \\ 
        ~ & DivOE & PGD (most  informative) & \textbf{11.66} & \textbf{97.82} & \textbf{99.48} & 94.66 \\ 
        \bottomrule[1.5pt]
    \end{tabular}
    \label{tab:app_re_pgd}
\end{table}

\begin{table}[!ht]
    \caption{DivOE incorporating different attacked noise}
    \centering
    \small
    \renewcommand\arraystretch{0.9}
    \begin{tabular}{c|llcccc}
    \toprule[1.5pt]
        Dataset & Method & Attack Method & FPR95$\downarrow$ & AUROC$\uparrow$ & AUPR$\uparrow$ & ID-ACC$\uparrow$ \\ 
    \midrule[0.6pt]
        \multirow{9}*{CIFAR100} & OE & - & 27.67 & 91.89 & 97.93 & 75.41 \\ 
        ~ & DivOE & Random & 26.16 & 92.39 & 98.06 & 75.56 \\ 
        ~ & DivOE & FGSM & 25.58 & 92.60 & 98.14 & 75.59 \\ 
        ~ & DivOE & PGD & 24.80 & 92.91 & 98.22 & 75.24 \\ 
        ~ & DivOE & C\&W & \textbf{24.20} & \textbf{93.02} & \textbf{98.25} & 75.18 \\ 
        ~ & DivOE & TRADES & 24.88 & 92.85 & 98.20 & 75.13 \\ 
        ~ & DivOE & Mixup & 25.70 & 92.46 & 98.07 & \textbf{76.06} \\ 
        ~ & DivOE & CutMix & 28.46 & 91.70 & 97.89 & 75.62 \\ 
        ~ & DivOE & AugMix & 25.40 & 92.60 & 98.12 & 75.58 \\ 
    \midrule[0.6pt]
        \multirow{9}*{CIFAR10} & OE & - & 13.76 & 97.53 & 99.43 & 94.51 \\ 
        ~ & DivOE & Random & 13.19 & 97.60 & 99.43 & 94.81 \\ 
        ~ & DivOE & FGSM & 12.71 & 97.70 & 99.45 & 94.78 \\ 
        ~ & DivOE & PGD & \textbf{11.66} & \textbf{97.82} & 99.48 & 94.66 \\ 
        ~ & DivOE & C\&W & 11.73 & 97.74 & 99.46 & 94.64 \\ 
        ~ & DivOE & TRADES & 12.47 & 97.67 & 99.44 & 94.54 \\ 
        ~ & DivOE & Mixup & 12.87 & 97.59 & 99.42 & \textbf{94.83} \\ 
        ~ & DivOE & CutMix & 13.95 & 97.54 & 99.42 & 94.78 \\ 
        ~ & DivOE & AugMix & 12.41 & \textbf{97.82} & \textbf{99.49} & 94.71 \\ 
    \bottomrule[1.5pt]
    \end{tabular}
    \label{tab:app_diff_attack_noise}
\end{table}

We would like to explain that DivOE was proposed as a general framework for diversifying outlier exposure with partially conduct informative extrapolation, which allows any appropriate noise or augmentation techniques to be incorporated. To show the effectiveness of PGD, we conduct the following experiments. On the one hand, we compare the random noise, re-adversarial perturbation, and adversarial perturbation to justify its rationality. The results are summarized in the Table~\ref{tab:app_re_pgd} and show the adversarial perturbation achieve better performance than other types of perturbations. On the other hand, we compare with other adversarial attacks and data augmentation methods for inner-maximization part in Table~\ref{tab:app_diff_attack_noise} and show that PGD with the uniform distribution loss is generally a better way to achieve higher performance due to the optimization originality.

\paragraph{How would $\alpha$ play an effect in DivOE} As for $\alpha$
, it is a step size adopted in the informative extrapolation. To better understand its effect, we conducted the experiments by changing the step size in the PGD attack in Table~\ref{tab:app_pgd_alpha}. Since the step size is only related to the optimization of the constrained maximization, the effects on the final performance of DivOE are not significant. By enlarging the step size, we may save more extra time on the inner maximization to reduce the time cost relatively, which is beneficial for algorithm deployment.

\begin{table}[!ht]
\caption{The effect of $\alpha$ in training of DivOE}
    \centering
    \small
    \begin{tabular}{c|llc|cccc}
    \toprule[1.5pt]
        Dataset & Method & Attack Method & Attack Steps & FPR95$\downarrow$ & AUROC$\uparrow$ & AUPR$\uparrow$ & ID-ACC$\uparrow$  \\ 
        \midrule[0.6pt]
        \multirow{5}*{CIFAR-100} & OE & - & - & 27.67 & 91.89 & 97.93 & 75.41 \\ 
        ~ & DivOE & PGD (alpha=0.005) & 20 & 25.02 & 92.69 & 98.15 & 75.27 \\ 
        ~ & DivOE & PGD (alpha=0.01) & 10 & 25.08 & 92.75 & 98.17 & 74.82 \\ 
        ~ & DivOE & PGD (alpha=0.02) &  5 & \textbf{24.41} & \textbf{92.99} & \textbf{98.23} & 75.34 \\ 
        ~ & DivOE & PGD (alpha=0.04) & \textbf{3} &25.26 & 92.71 & 98.17 & \textbf{75.44} \\ 
        \midrule[0.6pt]
        \multirow{5}*{CIFAR-10} & OE & - & - & 13.76 & 97.53 & 99.43 & 94.51 \\ 
        ~ & DivOE & PGD (alpha=0.005) & 20 & \textbf{11.92} & 97.78 & \textbf{99.47} & 94.57 \\ 
        ~ & DivOE & PGD (alpha=0.01) & 10 &12.20 & 97.61 & 99.42 & 94.59 \\ 
        ~ & DivOE & PGD (alpha=0.02) & 5 &12.11 & \textbf{97.80} & \textbf{99.47} & 94.64 \\ 
        ~ & DivOE & PGD (alpha=0.04) & \textbf{3} &12.23 & 97.76 & 99.46 & \textbf{94.79} \\ 
        \bottomrule[1.5pt]
    \end{tabular}
    \label{tab:app_pgd_alpha}
\end{table}

\paragraph{Visual illustration of Extrapolation Process.} From the algorithmic perspective, the adversarial perturbation with uniform distribution loss is a non-target perturbation. The intuition is to perturb the OOD sample to be more like ID regarded by the model. Maximizing uniform distribution loss has no requirement for each OOD sample to be perturbed into some specific ID classes. Thus, it is actually automatically decided by the gradient for the specific perturbation directions. To better understand the perturbation effect, we conduct the case study on the specific example in Figure~\ref{fig:app_method_effect} for their specific prediction results during the adversarial perturbation process. It can be find that after the adversarial perturbation, the prediction result of the second example will be concentrate on one specific class with high confidence. By enlarging the perturbation strength 
, the example will be perturbed to be confidently predicted on different classes.

\begin{figure}[t]
  \centering
  \includegraphics[width=0.95\columnwidth]{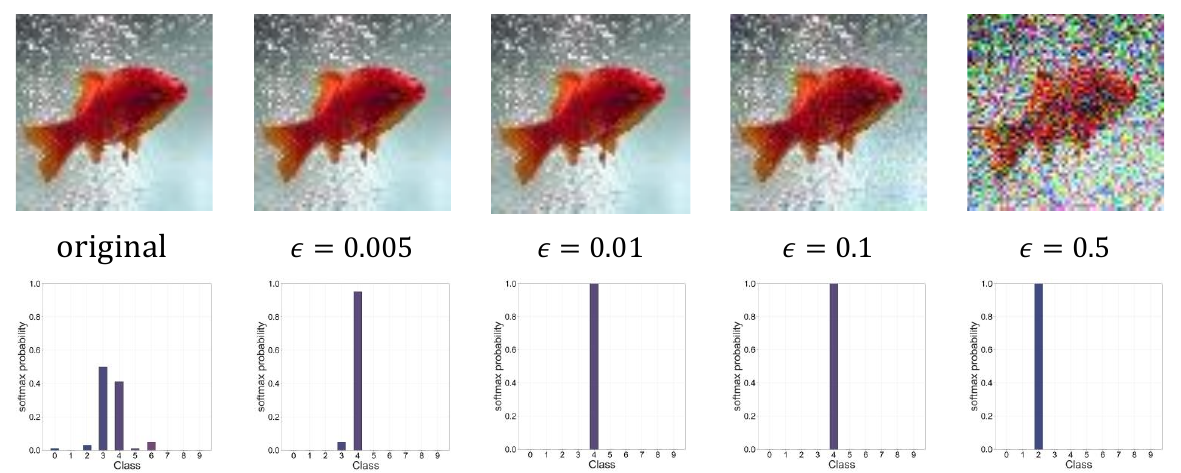}
  \caption{Prediction on the perturbed outlier with different extrapolation strength $\epsilon$.}
  \label{fig:app_method_effect}
  \vspace{-4mm}
\end{figure}

\subsection{Rationality of Partially Extrapolation}
\label{app:rational_diverify}

Considering the initial motivation to diversify outliers, we explore to extrapolate more potential OOD distribution beyond the original auxiliary ones. For the unperturbed OOD samples used in training, we hope the training can still learn the knowledge from the original auxiliary outliers, for the perturbed OOD samples, we hope the training can find different but informative outliers beyond the original ones to improve the generalization of outlier exposure.

Formally, Eq.~(\ref{eq:hybird_obj}) is an instance-level realization of Eq.~(\ref{eq:con_obj}) that actually contains the perturbed part and the unperturbed part. The deduction from Eq.~(\ref{eq:con_obj}) to Eq.~(\ref{eq:hybird_obj}) requires to merge the same terms in Eq.~(\ref{eq:con_obj}) first, and then specifically set with a subsequent rescaling. Finally, we have the formulation that a portion of data to conduct maximization part for diversifying the existing data distribution and the remaining other portion of data corresponds to learning on the original outliers.

To better understand the specific effects of different samples, it can be referred to the left panel of Figure~\ref{fig4:ablation}, in which we show the OOD detection performance of DivOE under different portions of perturbed data during training. Generally, using a small part of perturbed samples can achieve the extrapolation target and improve the performance. However, if we totally perturb all the data, it is even worse than the original OE.

Intuitively, we hope the extrapolation can realize the diversification target and cover more potential OOD distributions beyond the original auxiliary ones. Under such high-level guidance, generally, the extrapolation ratio is expected to be not too large to totally change the whole original auxiliary distribution, which is empirically demonstrated to be negative for OE. As for diversified strength, we kindly refer to the left middle panel of Figure 3, which serves as a search space for synthesizing different outliers using different. In practical hyper-parameter tuning, we may generally suggest starting with the original OE baseline (i.e., extrapolation ratio equals 0 and diversified strength equals 0) and enlarging the extrapolation ratio or diversified strength step by step, finding a stage where the OOD detection performance on the validation set will not be further enhanced. For the coarse-grained augmentations and specific OOD scores, it may be further decided by other.

For four extrapolation factors, we re-summarize the previous results and discuss the general guidance for tuning. First, MSP score and PGD are empirically demonstrated to be the appropriate choice for extrapolation, which can be fixed (refer to the results in Figure~\ref{fig4:e}). Second, extrapolation ratio and strength (i.e., $\epsilon$) have general trends either individually or jointly considering them in Table~\ref{tab:app_ratio_strength}. The performance of the extrapolation ratio is monotonous regarding different $\epsilon$. The large values would degrade performance due to the loss of diversity on the auxiliary outliers. Then, the extrapolation strength could be tuned after the other factors are set to appropriate choices. Our empirical results show that very small or large $\epsilon$ should be avoided, and epsilons from 0.05 to 0.1 can generally achieve satisfactory results by referring previous experimental results.

\begin{table}[!ht]
\caption{DivOE with different extrapolation ratiosand strengths.}
    \centering
    \small
    \renewcommand\arraystretch{0.9}
    \begin{tabular}{c|lccc}
    \toprule[1.5pt]
        Dataset & Method & Extrapolation ratio $r$ & Extrapolation strength $\epsilon$ & FPR95$\downarrow$ \\ 
    \midrule[0.6pt]
        \multirow{13}*{CIFAR-100} & OE & - & - & 27.67 \\ 
        ~ & DivOE&	1&	0.01&	30.52\\ 
        ~ & DivOE&	0.5&	0.01&	29.06\\ 
        ~ & DivOE&	0.25&	0.01&	26.21\\ 
        ~ & DivOE&	0.1&	0.01&	24.57\\ 
        ~ & DivOE&	1&	0.05&	43.21\\ 
        ~ & DivOE&	0.5&	0.05&	24.80\\ 
        ~ & DivOE&	0.25&	0.05&	24.21\\ 
        ~ & DivOE&	0.1&	0.05&	23.54\\
        ~ & DivOE&	1&	0.125&	49.69\\ 
        ~ & DivOE&	0.5&	0.125&	24.94\\ 
        ~ & DivOE&	0.25&	0.125&	25.36\\ 
        ~ & DivOE&	0.1&	0.125&	24.77\\
        
        \bottomrule[1.5pt]
    \end{tabular}
    \label{tab:app_ratio_strength}
\end{table}

\subsection{Extra Experimental Results and Discussions}

In this part, we present the extra experimental results in the following perspectives to characterize the learning framework of DivOE for the OOD detection task.

\paragraph{Experiments on Add-on Modules.} For adding other methods to Table~\ref{tab:ood_comp}, we conduct experiments accordingly and summarize the results in Table~\ref{tab:app_addon}. The results show that DivOE can consistently improve them on the original basis compared with other counterparts across different setups.

\begin{table}[!ht]
\caption{Extra comparison with various add-on methods.}
    \centering
    \small
    \renewcommand\arraystretch{0.9}
    \begin{tabular}{c|lcccc}
    \toprule[1.5pt]
        Dataset & Method & FPR95$\downarrow$ & AUROC$\uparrow$ & AUPR$\uparrow$ & ID-ACC$\uparrow$ \\ 
    \midrule[0.6pt]
        \multirow{8}*{CIFAR-100} & OE & 27.67 & 91.89 & 97.93 & 75.41 \\ 
        ~ & DOE & 30.69 & 93.13 & 98.39 & 71.43 \\ 
        ~ & MixOE & 35.26 & 90.31 & 97.58 & 75.74 \\ 
        ~ & DivOE & 24.80 & 92.91 & 98.22 & 75.24 \\ 
        ~ & Energy & 27.86 & 90.70 & 97.40 & 75.41 \\ 
        ~ & Energy+DOE & 44.91 & 88.80 & 97.22 & 73.79 \\ 
        ~ & Energy+MixOE & 38.14 & 85.12 & 95.40 & 73.87 \\ 
        ~ & Energy+DivOE & 24.71 & 92.04 & 97.82 & 75.25 \\ 
         \midrule[0.6pt]
        \multirow{8}*{CIFAR-10} & OE & 13.76 & 97.53 & 99.43 & 94.51 \\ 
        ~ & DOE & 7.39 & 98.29 & 99.58 & 92.82 \\ 
        ~ & MixOE & 13.57 & 97.47 & 99.39 & 94.66 \\ 
        ~ & DivOE & 11.66 & 97.82 & 99.48 & 94.66 \\ 
        ~ & Energy & 18.15 & 90.90 & 96.34 & 94.02 \\ 
        ~ & Energy+DOE & 9.65 & 97.70 & 99.37 & 94.83 \\ 
        ~ & Energy+MixOE & 31.90 & 83.72 & 93.50 & 93.72 \\ 
        ~ & Energy+DivOE & 16.03 & 92.23 & 96.87 & 94.53 \\ 
        \bottomrule[1.5pt]
    \end{tabular}
    \label{tab:app_addon}
\end{table}

\paragraph{Experiments on Fine-grained OOD Test Sets.} In order to further understand the effectiveness of our DivOE on different OOD datasets, we report the fine-grained results of our experiments on CIFAR-10/CIFAR-100 with $5$ OOD datasets (Textures, Places365, iSUN, LSUN-r, LSUN-c). The results on the $5$ OOD datasets demonstrate the general effectiveness of our DivOE. 

In Tables~\ref{tab:ood_bench_cifar_app} and~\ref{tab:ood_bench_cifar100_app}, we summarize the fine-grained detection performance. The results reflect the performance improvement of each method compared with those representative baselines. Our DivOE can significantly reduce the FPR95 score on some OOD testing sets in which those previous baselines do not perform well (e.g., Textures and Places365), and achieve comparable performance on the rest ones. Note that it is general to have some tradeoffs~\citep{hendrycks2018deep,chen2021atom,ming2022poem, fang2022learnable} on the performance change when using different methods under the OE framework. DivOE shows a good balance in gaining significant improvement without sacrificing the other performance too much via the informative extrapolation. On the other hand, the significant performance improvement of DivOE on the Textures and Places365 (that OOD test sets that the original OE does not perform well) exactly verifies that the informatively extrapolated outliers better represent the unseen OOD inputs that are close to the ID decision area.

\begin{table*}[t]
    \caption{OOD detection performance on CIFAR-10~\citep{krizhevsky2009learning_cifar10} as ID. All methods are trained on the same backbone. Values are percentages. $\uparrow$ indicates larger values are better, and $\downarrow$ indicates smaller values are better. (averaged by multiple trials) }
    %\vspace{-1mm}
    \centering
    \resizebox{\textwidth}{!}{
    \begin{tabular}{lccccccccccc}
        \toprule[1.5pt]
         \multirow{3}*{\textbf{Method}} & \multicolumn{10}{c}{\textbf{OOD dataset}} & \multirow{3}*{\textbf{ID ACC}} \\
         \cmidrule{2-11}
        ~ & \multicolumn{2}{c}{\textbf{Textures}} & \multicolumn{2}{c}{\textbf{Places365}} & \multicolumn{2}{c}{\textbf{iSUN}} & \multicolumn{2}{c}{\textbf{LSUN-r}} & \multicolumn{2}{c}{\textbf{LSUN-c}} &~ \\
        \cmidrule{2-11}
        ~ & FPR95$\downarrow$ & AUROC$\uparrow$ & FPR95$\downarrow$ & AUROC$\uparrow$ & FPR95$\downarrow$ & AUROC$\uparrow$ & FPR95$\downarrow$ & AUROC$\uparrow$ & FPR95$\downarrow$ & AUROC$\uparrow$ \\
        \midrule[0.6pt]
         MSP  & 59.27 & 88.50 & 60.77 & 88.13 & 56.79 & 89.53 & 53.60 & 91.02 & 30.55 & 95.68 & 94.84\\
         ODIN  & 54.84 & 80.28 & 48.89 & 85.66 & 28.68 & 93.85 & 23.17 & 95.05 & 9.96 & 97.90 & 94.84 \\
         Mahalanobis  & 16.99 & 96.69 & 69.96 & 82.54 & 31.62 & 94.44 & 32.41 & 94.70 & 30.50 & 95.06 & 94.84 \\
         Energy  & 53.17 & 85.40 & 40.22 & 89.91 & 34.60 & 92.44 & 28.48 & 93.85 & 8.12 & 98.35 & 94.84 \\
         GradNorm & 73.17 & 58.11 & 78.27 & 60.46 & 70.28 & 70.82 & 65.63 & 73.23 & 11.41 & 97.02 & 94.84  \\
         OE & 21.81 & 96.15 & 44.86 & 91.77 & 0.00 & 100.00 & 0.00 & 100.00 & 0.99 & 99.85 & 94.56  \\
         Energy (w. $\mathcal{D}_\text{aux}$) & 33.59 & 85.47 & 58.39 & 68.32 & 0.00 & 100.00 & 0.00 & 100.00 & 0.31 & 99.83 & 94.50 \\
         VOS & 55.45 & 81.72 & 46.89 & 87.02 & 4.21 & 99.13 & 33.33 & 92.47 & 39.16 & 90.82 & 95.02 \\
         NTOM & 32.78 & 86.30 & 59.03 & 67.50 & 0.00 & 100.00 & 0.00 & 100.00 & 1.17 & 99.38 & 95.06 \\
         POEM  & 34.73 & 85.28 & 57.82 & 67.93 & 0.00 & 100.00 & 0.00 & 100.00 & 0.43 & 99.74 & 95.10 \\
         MixOE & 22.23 & 95.74 & 39.45 & 92.86 & 0.26 & 99.92 & 0.27 & 99.92 & 4.40 & 99.21 & 95.00 \\
         \textbf{DivOE} (Ours) & 12.28 & 97.73 & 44.97 & 91.51 & 0.07 & 99.98 & 0.02 & 99.99 & 3.03 & 99.47 & 94.56 \\
        \bottomrule[1.5pt]
    \end{tabular}}
    \label{tab:ood_bench_cifar_app}
\end{table*}

\begin{table*}[t]
    \caption{OOD detection performance on CIFAR-100~\citep{krizhevsky2009learning_cifar10} as ID. All methods are trained on the same backbone. Values are percentages. $\uparrow$ indicates larger values are better, and $\downarrow$ indicates smaller values are better. (averaged by multiple trials) }
    %\vspace{-1mm}
    \centering
    \resizebox{\textwidth}{!}{
    \begin{tabular}{lccccccccccc}
        \toprule[1.5pt]
         \multirow{3}*{\textbf{Method}} & \multicolumn{10}{c}{\textbf{OOD dataset}} & \multirow{3}*{\textbf{ID ACC}} \\
         \cmidrule{2-11}
        ~ & \multicolumn{2}{c}{\textbf{Textures}} & \multicolumn{2}{c}{\textbf{Places365}} & \multicolumn{2}{c}{\textbf{iSUN}} & \multicolumn{2}{c}{\textbf{LSUN-r}} & \multicolumn{2}{c}{\textbf{LSUN-c}} &~ \\
        \cmidrule{2-11}
        ~ & FPR95$\downarrow$ & AUROC$\uparrow$ & FPR95$\downarrow$ & AUROC$\uparrow$ & FPR95$\downarrow$ & AUROC$\uparrow$ & FPR95$\downarrow$ & AUROC$\uparrow$ & FPR95$\downarrow$ & AUROC$\uparrow$ \\
        \midrule[0.6pt]
         MSP  & 82.90 & 73.53 & 82.31 & 74.08 & 83.28 & 75.10 & 83.06 & 74.08 & 66.41 & 83.85 & 75.96 \\
         ODIN  & 76.50 & 74.83 & 81.17 & 74.20 & 60.64 & 85.08 & 62.15 & 84.38 & 36.20 & 93.31 & 75.96 \\
         Mahalanobis  & 38.99 & 90.64 & 88.58 & 67.92 & 25.33 & 94.57 & 20.89 & 96.05 & 91.26 & 69.73 & 75.96 \\
         Energy  & 79.70 & 76.34 & 80.48 & 75.78 & 82.44 & 77.87 & 81.01 & 77.67 & 35.69 & 93.43 & 75.96 \\
         GradNorm & 88.31 & 60.52 & 97.22 & 53.15 & 99.20 & 43.52 & 99.26 & 38.83 & 41.12 & 91.96 & 75.96 \\
         OE & 54.11 & 84.69 & 77.42 & 76.27 & 0.07 & 99.99 & 0.04 & 100.00 & 4.58 & 99.06 & 75.50 \\
         Energy (w. $\mathcal{D}_\text{aux}$) & 56.93 & 81.22 & 80.53 & 72.76 & 0.00 & 100.00 & 0.00 & 100.00 & 0.52 & 99.88 & 75.14  \\
         VOS & 78.50 & 74.02 & 82.21 & 73.40 & 85.24 & 75.86 & 83.59 & 75.58 & 25.65 & 94.88 & 74.94  \\
         NTOM & 59.65 & 81.76 & 80.40 & 72.46 & 0.00 & 100.00 & 0.00 & 100.00 & 2.24 & 99.46 & 75.31 \\
         POEM  & 51.72 & 84.16 & 79.94 & 72.84 & 0.00 & 100.00 & 0.00 & 100.00 & 0.34 & 99.92 & 75.88 \\
         MixOE  & 61.78 & 82.13 & 72.95 & 78.20 & 2.96 & 99.32 & 1.05 & 99.75 & 33.56 & 92.88 & 75.77 \\
         \textbf{DivOE} (Ours) & 34.16 & 90.61 & 74.42 & 77.53 & 0.18 & 99.94 & 0.03 & 99.99 & 12.97 & 97.10 & 75.35 \\
         
        \bottomrule[1.5pt]
    \end{tabular}}
    \label{tab:ood_bench_cifar100_app}
\end{table*}

\paragraph{Experiments on Sample Comparison} It may be hard to align the sample number as different methods have different synthesis motivations and but we can actually present the number of generated samples. To facilitate the comparison, we summarize the specific outlier numbers that were manipulated in the adopted algorithm with its performance in OOD detection in Table~\ref{tab:app_sample_number}. The results show that, compared with previous methods, either need go through all the outliers for sampling or change the training on all the auxiliary outliers, our DivOE can conduct extrapolation on a small portion of original outliers but achieve generally better performance across different ID classification tasks.

\begin{table}[!ht]
\caption{Numbers of outliers manipulated or sampled during training}
    \centering
    \small
    \begin{tabular}{c|lc|cccc}
    \toprule[1.5pt]
        Dataset & Method & Numbers & FPR95$\downarrow$ & AUROC$\uparrow$ & AUPR$\uparrow$ & ID-ACC$\uparrow$ \\ \midrule[0.6pt]
        \multirow{5}*{CIFAR100} & OE & 0 & 27.67 & 91.89 & 97.93 & 75.41 \\ 
        ~ & NTOM & 50000 & 27.58 & 91.08 & 97.52 & 75.41 \\ 
        ~ & POEM & 50000 & 26.50 & 91.33 & 97.58 & \textbf{75.74} \\ 
        ~ & MixOE & 50000 & 35.26 & 90.31 & 97.58 & \textbf{75.74} \\ 
        % ~ & DOE & 50000* & 30.69 & 93.13 & 98.39 & 71.43 \\ 
        ~ & DivOE & \textbf{25000} & \textbf{24.80} & \textbf{92.91} & \textbf{98.22} & 75.24 \\ 
        \midrule[0.6pt]
        \multirow{5}*{CIFAR100} & OE & 0 & 13.76 & 97.53 & 99.43 & 94.51 \\ 
        ~ & NTOM & 50000 & 17.76 & 91.08 & 96.40 & \textbf{95.04} \\ 
        ~ & POEM & 50000 & 17.81 & 90.96 & 96.36 & 94.90 \\ 
        ~ & MixOE & 50000 & 13.57 & 97.47 & 99.39 & 94.66 \\ 
        % ~ & DOE & 50000* & 7.39 & 98.29 & 99.58 & 92.82 \\ 
        ~ & DivOE & \textbf{25000} & \textbf{11.66} & \textbf{97.82} & \textbf{99.48} & 94.66 \\ 
    \bottomrule[1.5pt]
    \end{tabular}
    \label{tab:app_sample_number}
\end{table}

\paragraph{Experiments on Smaller Number of Auxiliary Samples.} Here, we added the experiments under different numbers of auxiliary outliers used for all OE-based methods. The experimental results are summarized in Table~\ref{tab:app_aux_number_less}, which indicates that the performance of OOD detection can be generally better if we use more outliers. Under smaller number of samples, i.e., from 50000 to 1000 (relatively smaller than 50000 ID data), DivOE can consistently perform better than OE.

\begin{table}[!ht]
\caption{Performance comparison by changing the numbers of auxiliary outliers.}
    \centering
    \small
    \renewcommand\arraystretch{0.9}
    \begin{tabular}{c|lccccc}
    \toprule[1.5pt]
        Dataset & Method & Number (ratio to ID samples) & FPR95 & AUROC & AUPR & ID-ACC \\ 
        \midrule[0.6pt]
        \multirow{15}*{CIFAR100} & OE & \multirow{5}*{50000 (1.0)} & 26.72 & 91.12 & 97.51 & 75.58 \\ 
        ~ & NTOM & ~ & 29.18 & 90.72 & 97.45 & \textbf{75.77} \\ 
        ~ & POEM & ~ & 27.20 & 90.79 & 97.41 & \textbf{75.77} \\ 
        % ~ & DOE & ~ & 37.16 & 90.73 & 97.78 & 66.72 \\ 
        ~ & MixOE & ~ & 35.28 & 90.29 & 97.57 & 75.67 \\ 
        ~ & DivOE & ~ & \textbf{26.27} & \textbf{92.71} & \textbf{98.18} & 75.34 \\ 
        \cmidrule{2-7}
        ~ & OE & \multirow{5}*{5000 (0.1)} & 33.64 & 89.98 & 97.36 & 75.92 \\ 
        ~ & NTOM & ~ & 33.23 & 88.92 & 96.86 & 75.87 \\ 
        ~ & POEM & ~ & 33.27 & 88.59 & 96.73 & 75.64 \\ 
        % ~ & DOE & ~ & 72.75 & 81.30 & 95.51 & 74.82 \\ 
        ~ & MixOE & ~ & 37.32 & 89.39 & 97.31 & \textbf{76.09} \\ 
        ~ & DivOE & ~ & \textbf{32.80} & \textbf{90.89} & \textbf{97.71} & 75.62 \\ 
        \cmidrule{2-7}
        ~ & OE & \multirow{5}*{1000 (0.02)} & 38.40 & 88.93 & 97.17 & \textbf{76.14} \\ 
        ~ & NTOM & ~ & 38.11 & 89.12 & 97.45 & 76.12 \\ 
        ~ & POEM & ~ & 37.69 & 89.47 & \textbf{97.56} & 75.98 \\ 
        % ~ & DOE & ~ & 71.52 & 80.10 & 95.06 & 75.41 \\ 
        ~ & MixOE & ~ & 40.78 & 87.98 & 96.89 & 76.10 \\ 
        ~ & DivOE & ~ & \textbf{36.74} & \textbf{89.63} & 97.36 & 76.08 \\ 
        \midrule[0.6pt]
        \multirow{15}*{CIFAR10} & OE & \multirow{5}*{50000 (1.0)} & 12.64 & \textbf{97.76} & \textbf{99.47} & 94.85 \\ 
        ~ & NTOM & ~ & 19.84 & 90.12 & 96.23 & 94.80 \\ 
        ~ & POEM & ~ & 20.21 & 90.15 & 96.10 & \textbf{94.97} \\ 
        % ~ & DOE & ~ & 6.09 & 98.60 & 99.65 & 93.69 \\ 
        ~ & MixOE & ~ & 14.26 & 97.31 & 99.35 & 94.92 \\ 
        ~ & DivOE & ~ & \textbf{12.04} & \textbf{97.76} & \textbf{99.47} & 94.80 \\ 
        \cmidrule{2-7}
        ~ & OE & \multirow{5}*{5000 (0.1)} & 15.16 & 97.31 & 99.35 & 94.95 \\ 
        ~ & NTOM & ~ & 28.91 & 84.26 & 93.40 & 94.63 \\ 
        ~ & POEM & ~ & 27.46 & 86.56 & 94.68 & 94.78 \\ 
        % ~ & DOE & ~ & 12.89 & 97.34 & 99.35 & 94.51 \\ 
        ~ & MixOE & ~ & 15.66 & 97.06 & 99.29 & \textbf{94.98} \\ 
        ~ & DivOE & ~ & \textbf{14.41} & \textbf{97.43} & \textbf{99.40} & 94.81 \\ 
        \cmidrule{2-7}
        ~ & OE & \multirow{5}*{1000 (0.02)} & 16.74 & 97.11 & 99.31 & 95.03 \\ 
        ~ & NTOM & ~ & 31.90 & 81.93 & 91.99 & 94.86 \\ 
        ~ & POEM & ~ & 30.67 & 82.56 & 92.33 & 94.78 \\ 
        % ~ & DOE & ~ & 11.60 & 97.38 & 99.34 & 94.89 \\ 
        ~ & MixOE & ~ & 15.80 & 97.07 & 99.29 & \textbf{95.11} \\
        ~ & DivOE & ~ & \textbf{15.38} & \textbf{97.25} & \textbf{99.37} & 95.03 \\
        \bottomrule[1.5pt]
    \end{tabular}
    \label{tab:app_aux_number_less}
\end{table}

\paragraph{Experiments on Comprehensive Explorations of the Extrapolated Data} In Tables~\ref{tab:divoe_comp_cifar10} and~\ref{tab:divoe_comp_cifar100}, we present a more comprehensive comparison of the extrapolated outliers under the learning framework of our DivOE. To be more specific, we control the specific extrapolation ratio $r$ and the extrapolated strength $\epsilon$ to characterize the algorithm. From one perspective, we gradually increase the extrapolated strength $\epsilon$ with the fixed extrapolation ratio $r=1$, which means we conduct the full-batch manipulation on the original surrogate OOD distribution. The results demonstrate the failure of such a overall distribution perturbation with the significant performance drop on several OOD test sets (e.g., LSUN-r and LSUN-c). Intuitively, the overall distribution manipulation for the auxiliary outliers can not perform well towards the diversification target, as it is also inconsistent with the conceptual idea of our DivOE. From the other perspective, we decrease the extrapolation ratio $r$ with the enlarged strength on informative extrapolation. Fortunately, with the diversified guidance introduced by our proposed objective in DivOE, we can find that even using $\epsilon=0.125$ can also enhance the performance of outlier exposure with different OOD test sets, resulting in a better performance in average.

\begin{table*}[ht]
    \caption{Fine-grained OOD detection performance of DivOE on CIFAR-10~\citep{krizhevsky2009learning_cifar10} as ID.  }
    %\vspace{-1mm}
    \centering
    \resizebox{\textwidth}{!}{
    \begin{tabular}{lccccccccccc}
        \toprule[1.5pt]
         \multirow{3}*{\textbf{Method}} & \multicolumn{10}{c}{\textbf{OOD dataset}} & \multirow{3}*{\textbf{Average FPR95}} \\
         \cmidrule{2-11}
        ~ & \multicolumn{2}{c}{\textbf{Textures}} & \multicolumn{2}{c}{\textbf{Places365}} & \multicolumn{2}{c}{\textbf{iSUN}} & \multicolumn{2}{c}{\textbf{LSUN-r}} & \multicolumn{2}{c}{\textbf{LSUN-c}} &~ \\
        \cmidrule{2-11}
        ~ & FPR95$\downarrow$ & AUROC$\uparrow$ & FPR95$\downarrow$ & AUROC$\uparrow$ & FPR95$\downarrow$ & AUROC$\uparrow$ & FPR95$\downarrow$ & AUROC$\uparrow$ & FPR95$\downarrow$ & AUROC$\uparrow$ \\
        \midrule[0.6pt]
        OE & 21.88 & 96.14 & 45.22 & 91.64 & 0.00 & 100.00 & 0.00 & 100.00 & 1.02 & 99.84 & 13.62 \\
         DivOE ($\epsilon=0.01$, $r=1$) & 15.32 & 97.03 & 41.27 & 91.87 & 0.00 & 100.00 & 0.00 & 100.00 & 1.31 & 99.72 & 11.58 \\
         DivOE ($\epsilon=0.01$, $r=0.5$)  & 14.53 & 97.19 & 44.73 & 91.31 & 0.00 & 100.00 & 0.00 & 100.00 & 2.06 & 99.63 & 12.26 \\
         DivOE ($\epsilon=0.01$, $r=0.25$) & 16.72 & 96.89 & 46.08 & 91.44 & 0.00 & 100.00 & 0.00 & 100.00 & 1.31 & 99.76 & 12.82 \\
         DivOE ($\epsilon=0.01$, $r=0.1$)  & 17.35 & 96.91 & 45.96 & 91.63 & 0.00 & 100.00 & 0.00 & 100.00 & 1.32 & 99.79 & 12.93 \\
         DivOE ($\epsilon=0.005$, $r=1$)  & 16.96 & 96.89 & 43.12 & 92.06 & 0.00 & 100.00 & 0.00 & 100.00 & 0.61 & 99.88 & 12.14 \\
         DivOE ($\epsilon=0.005$, $r=0.5$) & 16.85 & 96.94 & 46.57 & 91.36 & 0.00 & 100.00 & 0.00 & 100.00 & 1.07 & 99.81 & 12.90 \\
         DivOE ($\epsilon=0.005$, $r=0.1$) & 19.42 & 96.64 & 46.64 & 91.64 & 0.00 & 100.00 & 0.00 & 100.00 & 0.95 & 99.84 & 13.40 \\
         DivOE ($\epsilon=0.05$, $r=1$) & 27.61 & 95.64 & 53.79 & 89.86 & 2.78 & 99.55 & 1.87 & 99.68 & 24.20 & 96.44 & 22.05 \\
         DivOE ($\epsilon=0.05$, $r=0.5$) & 11.85 & 97.85 & 43.34 & 91.65 & 0.02 & 99.98 & 0.02 & 99.98 & 3.21 & 99.42 & 11.69 \\
         DivOE ($\epsilon=0.05$, $r=0.25$) & 12.29 & 97.76 & 42.94 & 91.88 & 0.01 & 99.99 & 0.01 & 99.99 & 2.60 & 99.54 & 11.57 \\
         DivOE ($\epsilon=0.05$, $r=0.1$) & 13.45 & 97.61 & 43.27 & 91.73 & 0.00 & 99.99 & 0.00 & 99.99 & 2.09 & 99.64 & 11.76 \\
         DivOE ($\epsilon=0.125$, $r=1$) & 37.58 & 94.00 & 56.40 & 89.24 & 5.59 & 99.13 & 4.17 & 99.37 & 24.54 & 96.44 & 25.66 \\
         DivOE ($\epsilon=0.125$, $r=0.5$) & 14.18 & 97.50 & 44.19 & 91.79 & 0.01 & 99.99 & 0.01 & 100.00 & 1.85 & 99.69 & 12.05 \\
         DivOE ($\epsilon=0.125$, $r=0.25$) & 15.92 & 97.33 & 43.76 & 92.03 & 0.00 & 100.00 & 0.00 & 100.00 & 1.39 & 99.78 & 12.21 \\
         DivOE ($\epsilon=0.125$, $r=0.1$) & 15.81 & 97.39 & 44.88 & 91.82 & 0.00 & 100.00 & 0.00 & 100.00 & 1.44 & 99.77 & 12.42 \\
         DivOE ($\frac{1}{4}\epsilon=0.05$, $\frac{1}{4}\epsilon=0.125$) & 12.28 & 97.73 & 44.97 & 91.51 & 0.07 & 99.98 & 0.02 & 99.99 & 3.03 & 99.47 & 12.08 \\
        \bottomrule[1.5pt]
    \end{tabular}}
    \label{tab:divoe_cifar10_finegrained}
\end{table*}

\begin{table*}[ht]
    \caption{Fine-grained OOD detection performance of DivOE on CIFAR-100 as ID.  }
    %\vspace{-1mm}
    \centering
    \resizebox{\textwidth}{!}{
    \begin{tabular}{lccccccccccc}
        \toprule[1.5pt]
         \multirow{3}*{\textbf{Method}} & \multicolumn{10}{c}{\textbf{OOD dataset}} & \multirow{3}*{\textbf{Average FPR95}} \\
         \cmidrule{2-11}
        ~ & \multicolumn{2}{c}{\textbf{Textures}} & \multicolumn{2}{c}{\textbf{Places365}} & \multicolumn{2}{c}{\textbf{iSUN}} & \multicolumn{2}{c}{\textbf{LSUN-r}} & \multicolumn{2}{c}{\textbf{LSUN-c}} &~ \\
        \cmidrule{2-11}
        ~ & FPR95$\downarrow$ & AUROC$\uparrow$ & FPR95$\downarrow$ & AUROC$\uparrow$ & FPR95$\downarrow$ & AUROC$\uparrow$ & FPR95$\downarrow$ & AUROC$\uparrow$ & FPR95$\downarrow$ & AUROC$\uparrow$ \\
        \midrule[0.6pt]
        OE & 54.31 & 84.59 & 77.18 & 76.30 & 0.09 & 99.98 & 0.03 & 100.00 & 4.84 & 99.04 & 27.29 \\
         DivOE ($\epsilon=0.01$, $r=1$)  & 36.11 & 89.20 & 77.73 & 76.60 & 0.16 & 99.94 & 0.02 & 99.99 & 38.55 & 90.11 & 30.52\\
         DivOE ($\epsilon=0.01$, $r=0.5$)  & 36.30 & 89.34 & 78.79 & 76.17 & 0.14 & 99.96 & 0.01 & 99.99 & 30.08 & 93.72 & 29.06 \\
         DivOE ($\epsilon=0.01$, $r=0.25$)  & 37.70 & 88.65 & 78.04 & 76.51 & 0.07 & 99.98 & 0.00 & 100.00 & 15.26 & 96.79 & 26.21 \\
         DivOE ($\epsilon=0.01$, $r=0.1$)  & 38.89 & 88.23 & 75.94 & 77.05 & 0.08 & 99.98 & 0.00 & 100.00 & 7.92 & 98.23 & 24.57 \\
         DivOE ($\epsilon=0.005$, $r=1$)  & 41.63 & 87.64 & 77.51 & 76.64 & 0.02 & 99.99 & 0.00 & 100.00 & 5.72 & 98.59 & 24.98 \\
         DivOE ($\epsilon=0.005$, $r=0.5$) & 40.33 & 87.90 & 78.21 & 76.15 & 0.05 & 99.98 & 0.00 & 100.00 & 9.38 & 97.97 & 25.59 \\
         DivOE ($\epsilon=0.005$, $r=0.1$) & 44.05 & 86.83 & 75.66 & 76.89 & 0.06 & 99.98 & 0.00 & 100.00 & 4.93 & 98.89 & 24.94 \\
         DivOE ($\epsilon=0.05$, $r=1$) & 62.51 & 82.89 & 81.42 & 75.20 & 7.43 & 98.26 & 3.69 & 99.17 & 60.99 & 85.11 & 43.21 \\
         DivOE ($\epsilon=0.05$, $r=0.5$) & 33.56 & 90.78 & 75.14 & 77.23 & 0.20 & 99.94 & 0.03 & 99.99 & 12.50 & 97.17 & 24.29 \\
         DivOE ($\epsilon=0.05$, $r=0.25$) & 35.29 & 90.20 & 74.44 & 77.50 & 0.09 & 99.97 & 0.01 & 99.99 & 11.25 & 97.48 & 24.21 \\
         DivOE ($\epsilon=0.05$, $r=0.1$) & 34.33 & 90.37 & 73.18 & 77.92 & 0.10 & 99.97 & 0.00 & 100.00 & 10.08 & 97.71 & 23.54 \\
         DivOE ($\epsilon=0.125$, $r=1$) & 71.13 & 80.04 & 81.62 & 74.76 & 19.69 & 95.29 & 13.69 & 96.79 & 62.33 & 84.94 & 49.69 \\ 
         DivOE ($\epsilon=0.125$, $r=0.5$) & 41.54 & 88.72 & 73.64 & 77.43 & 0.18 & 99.96 & 0.03 & 100.00 & 9.31 & 97.97 & 24.94 \\
         DivOE ($\epsilon=0.125$, $r=0.25$) & 45.16 & 87.92 & 74.25 & 77.66 & 0.10 & 99.98 & 0.00 & 100.00 & 7.29 & 98.46 & 25.36 \\
         DivOE ($\epsilon=0.125$, $r=0.1$) & 43.73 & 88.11 & 73.94 & 77.78 & 0.09 & 99.98 & 0.00 & 100.00 & 6.07 & 98.70 & 24.77 \\
         DivOE ($\frac{1}{4}\epsilon=0.01$ $\frac{1}{4}\epsilon=0.125$) & 31.67 & 91.12 & 76.19 & 77.03 & 0.17 & 99.95 & 0.02 & 99.99 & 24.67 & 94.73 & 26.54\\
         DivOE ($\frac{1}{4}\epsilon=0.05$, $\frac{1}{4}\epsilon=0.125$) & 34.16 & 90.61 & 74.42 & 77.53 & 0.18 & 99.94 & 0.03 & 99.99 & 12.97 & 97.10 & 24.35 \\
        \bottomrule[1.5pt]
    \end{tabular}}
    \label{tab:divoe_cifar100_finegrained}
\end{table*}

\paragraph{Experiments on Using Different Synthetic Mechanism} Here, we consider some advanced techniques using Generative Adversarial Network (GAN) to  synthesizing new outliers for outlier exposure. Those methods~\citep{LeeLLS18,kong2021opengan} additionally train generative models for generating extra outliers for data augmentation, which depends on the sample quality and actually are orthogonal to our extrapolation. We conduct experiments comparing and incorporating the method~\citep{LeeLLS18} for informative extrapolation in Table 12 in attached PDF. We find that the GAN-synthesized method is hard to train under our setups. When we incorporate DivOE into the generated sample, the performance can better enhanced than the original one of GAN-based method.

\paragraph{Experiments on Computational Cost} To better understand the computational budget, we conduct additional experiments and summarize the time and memory results in Table~\ref{tab:app_computational_cost}, which shows that DivOE can achieve better performance with comparable time and memory costs with other OE-based methods. In addition, since the extra time requirement compared with the original OE is mainly from the inner-maximization of the informative extrapolation, we also explore the effect of perturbation noise on the OOD detection performance in Table~\ref{tab:app_pgd_number}. The results show that we can further save extra time by reducing the outlier manipulation times with maintaining the improvement.

\begin{table}[!ht]
\caption{Time and memory cost of different methods.}
    \centering
    \small%\renewcommand\arraystretch{0.9}
    \resizebox{\textwidth}{!}{
    \begin{tabular}{c|lcccccc}
    \toprule[1.5pt]
        Dataset & Method & Time for one iteration (s) & GPU Memory (MiB) & FPR95 & AUROC & AUPR & ID-ACC \\ 
        \midrule[0.6pt]
        \multirow{5}*{CIFAR100} & OE & \textbf{0.22} & \textbf{3769} & 27.67 & 91.89 & 97.93 & 75.41 \\ 
        ~ & NTOM & 0.39 & 3775 & 27.58 & 91.08 & 97.52 & 75.41 \\ 
        ~ & POEM & 0.79 & 3845 & 26.50 & 91.33 & 97.58 & \textbf{75.74} \\ 
        ~ & MixOE & 0.53 & \textbf{3769} & 35.26 & 90.31 & 97.58 & \textbf{75.74} \\ 
        % ~ & DOE & 0.24 & 3739 & 30.69 & 93.13 & 98.39 & 71.43 \\ 
        ~ & DivOE & 0.57 & 3781 & \textbf{24.80} & \textbf{92.91} & \textbf{98.22} & 75.24 \\ 
        \midrule[0.6pt]
        \multirow{5}*{CIFAR10} & OE & \textbf{0.21} & 3769 & 13.76 & 97.53 & 99.43 & 94.51 \\ 
        ~ & NTOM & 0.34 & 3733 & 17.76 & 91.08 & 96.40 & \textbf{95.04} \\ 
        ~ & POEM & 0.66 & 3835 & 17.81 & 90.96 & 96.36 & 94.90 \\ 
        ~ & MixOE & 0.47 & \textbf{3731} & 13.57 & 97.47 & 99.39 & 94.66 \\ 
        % ~ & DOE & 0.23 & 3725 & 7.39 & 98.29 & 99.58 & 92.82 \\ 
        ~ & DivOE & 0.53 & 3779 & \textbf{11.66} & \textbf{97.82} & \textbf{99.48} & 94.66 \\ 
        \bottomrule[1.5pt]
    \end{tabular}}
    \label{tab:app_computational_cost}
\end{table}

\begin{table}[!ht]
\caption{Perturb number and average time of the extrapolation in DivOE.}
    \centering
    \small
    \resizebox{\textwidth}{!}{
    \begin{tabular}{c|lcccccc}
    \toprule[1.5pt]
        Dataset & Method & Perturb Number & Average Time for One Iteration (s) & FPR95 & AUROC & AUPR & ID-ACC \\ 
        \midrule[0.6pt]
        \multirow{5}*{CIFAR100} & OE & - & 0.22 & 27.67 & 91.89 & 97.93 & 75.41 \\ 
        ~ & DivOE & 1 & 0.41 & 25.58 & 92.60 & 98.14 & 75.59 \\ 
        ~ & DivOE & 2 & 0.47 & 24.94 & 92.74 & 98.17 & 75.18 \\ 
        ~ & DivOE & 5 & 0.57 & 24.80 & 92.91 & 98.22 & 75.24 \\ 
        \midrule[0.6pt]
        \multirow{5}*{CIFAR10} & OE & - & 0.21 & 13.76 & 97.53 & 99.43 & 94.51 \\ 
        ~ & DivOE & 1 & 0.38 & 12.71 & 97.70 & 99.45 & 94.78 \\ 
        ~ & DivOE & 2 & 0.44 & 12.10 & 97.68 & 99.45 & 95.59 \\ 
        ~ & DivOE & 5 & 0.53 & 11.66 & 97.82 & 99.48 & 94.66 \\ 
        \bottomrule[1.5pt]
    \end{tabular}}
    \label{tab:app_pgd_number}
\end{table}

\paragraph{Experiments on Compatibility with other OE-based Methods} In Tables~\ref{tab:divoe_comp_cifar10} and~\ref{tab:divoe_comp_cifar100}, we provide the fine-grained performance on OOD detection of those OE-based methods~\citep{chen2021atom,ming2022poem,wang2023outofdistribution} with our DivOE. The overall results again confirm the effectiveness of the previously proposed OE-based methods incorporating the informative extrapolation introduced by our DivOE. Especially, DivOE via the informative extrapolation helps these methods better represent the unseen OOD inputs close to the ID decision area. The overall averaged results can refer to Table~\ref{tab:ood_comp}.

\begin{table*}[ht]
    \caption{Fine-grained results of compatibility experiments with other OE-based methods on CIFAR10 ~\citep{krizhevsky2009learning_cifar10} as ID.  }
    %\vspace{-1mm}
    \centering
    \resizebox{\textwidth}{!}{
    \begin{tabular}{lccccccccccc}
        \toprule[1.5pt]
         \multirow{3}*{\textbf{Method}} & \multicolumn{10}{c}{\textbf{OOD dataset}} & \multirow{3}*{\textbf{Average FPR95}} \\
         \cmidrule{2-11}
        ~ & \multicolumn{2}{c}{\textbf{Textures}} & \multicolumn{2}{c}{\textbf{Places365}} & \multicolumn{2}{c}{\textbf{iSUN}} & \multicolumn{2}{c}{\textbf{LSUN-r}} & \multicolumn{2}{c}{\textbf{LSUN-c}} &~ \\
        \cmidrule{2-11}
        ~ & FPR95$\downarrow$ & AUROC$\uparrow$ & FPR95$\downarrow$ & AUROC$\uparrow$ & FPR95$\downarrow$ & AUROC$\uparrow$ & FPR95$\downarrow$ & AUROC$\uparrow$ & FPR95$\downarrow$ & AUROC$\uparrow$ \\
        \midrule[0.6pt]
        OE & 21.88 & 96.14 & 45.22 & 91.64 & 0.00 & 100.00 & 0.00 & 100.00 & 1.02 & 99.84 & 13.62 
        \\
        NTOM & 32.78 & 86.30 & 59.03 & 67.50 & 0.00 & 100.00 & 0.00 & 100.00 & 1.17 & 99.38 & 18.59 \\
        NTOM+DivOE & 14.49 & 94.03 & 53.88 & 72.31 & 0.00 & 100.00 & 0.00 & 100.00 & 5.29 & 97.53 & 14.73 \\
         POEM  & 34.73 & 85.28 & 57.82 & 67.93 & 0.00 & 100.00 & 0.00 & 100.00 & 0.43 & 99.74 & 18.59 \\
        POEM+DivOE & 21.61 & 91.32 & 53.95 & 71.28 & 0.0 & 100.00 & 0.00 & 100.00 & 1.48 & 99.10 & 15.41 \\
        MixOE & 22.23 & 95.74 & 39.45 & 92.86 & 0.26 & 99.92 & 0.27 & 99.92 & 4.40 & 99.21 & 13.32 \\
        %MixOE+DivOE & 17.93 & 96.47 & 43.94 & 90.90 & 1.13 & 99.78 & 1.28 & 99.76 & 21.18 & 96.60 & 17.09 \\
        MixOE+DivOE & 18.63 & 96.34 & 42.55 & 91.68 & 0.54 & 99.86 & 0.40 & 99.86 & 9.77 & 98.34 & 14.38 \\
        DOE & 21.97 & 94.87 & 15.43 & 96.74 & 0.30 & 99.89 & 0.22 & 99.91 & 1.27 & 99.67 & 7.84 \\
        DOE+DivOE & 13.37 & 97.05 & 16.28 & 96.53 & 0.25 & 99.92 & 0.14 & 99.95 & 2.00 & 99.49 & 6.41 \\
        \bottomrule[1.5pt]
    \end{tabular}}
    \label{tab:divoe_comp_cifar10}
\end{table*}

\begin{table*}[ht]
    \caption{Fine-grained results of compatibility experiments with other OE-based methods on CIFAR100 ~\citep{krizhevsky2009learning_cifar10} as ID.  }
    %\vspace{-1mm}
    \centering
    \resizebox{\textwidth}{!}{
    \begin{tabular}{lccccccccccc}
        \toprule[1.5pt]
         \multirow{3}*{\textbf{Method}} & \multicolumn{10}{c}{\textbf{OOD dataset}} & \multirow{3}*{\textbf{Average FPR95}} \\
         \cmidrule{2-11}
        ~ & \multicolumn{2}{c}{\textbf{Textures}} & \multicolumn{2}{c}{\textbf{Places365}} & \multicolumn{2}{c}{\textbf{iSUN}} & \multicolumn{2}{c}{\textbf{LSUN-r}} & \multicolumn{2}{c}{\textbf{LSUN-c}} &~ \\
        \cmidrule{2-11}
        ~ & FPR95$\downarrow$ & AUROC$\uparrow$ & FPR95$\downarrow$ & AUROC$\uparrow$ & FPR95$\downarrow$ & AUROC$\uparrow$ & FPR95$\downarrow$ & AUROC$\uparrow$ & FPR95$\downarrow$ & AUROC$\uparrow$ \\
        \midrule[0.6pt]
        OE & 54.31 & 84.59 & 77.18 & 76.30 & 0.09 & 99.98 & 0.03 & 100.00 & 4.84 & 99.04 & 27.29 \\
        NTOM & 59.65 & 81.76 & 80.40 & 72.46 & 0.00 & 100.00 & 0.00 & 100.00 & 2.24 & 99.46 & 28.46 \\
        NTOM+DivOE & 41.65 & 88.21 & 79.45 & 72.80 & 0.00 & 100.00 & 0.00 & 100.00 & 5.04 & 98.54 & 25.23 \\
         POEM  & 51.72 & 84.16 & 79.94 & 72.84 & 0.00 & 100.00 & 0.00 & 100.00 & 0.34 & 99.92 & 26.40 \\
         POEM+DivOE & 43.61 & 87.13 & 80.09 & 72.52 & 0.00 & 100.00 & 0.00 & 100.00 & 1.69 & 99.55 & 25.08 \\
         MixOE  & 61.78 & 82.13 & 72.95 & 78.20 & 2.96 & 99.32 & 1.05 & 99.75 & 33.56 & 92.88 & 34.46 \\
        MixOE+DivOE & 56.91 & 83.88 & 74.26 & 77.66 & 4.00 & 99.04 & 2.02 & 99.56 & 46.42 & 89.54 & 36.72 \\
        DOE & 59.92 & 83.93 & 40.50 & 91.53 & 26.37 & 95.11 & 18.35 & 96.59 & 12.90 & 97.72 & 31.61 \\
        DOE+DivOE & 43.77 & 89.24 & 40.48 & 91.44 & 0.75 & 99.71 & 0.17 & 99.84 & 17.16 & 97.01 & 20.46 \\
        \bottomrule[1.5pt]
    \end{tabular}}
    \label{tab:divoe_comp_cifar100}
\end{table*}

\paragraph{Experiments with Limited Informative Auxiliary Outliers} In Table~\ref{tab:limited_outlier_cifar10}, we present the detailed performance results of the experiments with limited informative auxiliary outliers in CIFAR-10. The experiments are designed to demonstrate the consistent effectiveness of our proposed DivOE with informative extrapolation. Different from the previous sampling-based methods (e.g., NTOM~\citep{chen2021atom} and POEM~\citep{ming2022poem}), which assume that the potential surrogate OOD space can be arbitrarily large and cover the boundary decision area for ID and OOD inputs, our DivOE introduces the diversification target which is more general and robust given different auxiliary outliers. With such a general learning target, DivOE can perform learning and extrapolation given the surrogate OOD distribution iteratively under a single learning framework during the fine-tuning process. Empirically, it results in better performance improvement across the different setups on the auxiliary outliers in Table~\ref{tab:divoe_comp_cifar10}. Since the NTOM and POEM both adopt the sampling strategy to sample part of outliers (90\%), we keep the same number in OE and our DivOE for the fair comparisons.

\begin{table*}[ht]
    \caption{Experiments on limited informative auxiliary outliers on CIFAR10.}
    %\vspace{-1mm}
    \centering
    %\footnotesize
    %\renewcommand\arraystretch{0.9}
    %\resizebox{\textwidth}{!}{
    \begin{tabular}{c|c|cccccccccc}
        \toprule[1.5pt]
           Method & Number of auxiliary outliers & FPR95$\downarrow$\\
        \midrule[0.6pt]
          \multirow{5}*{OE} & 50000 & 19.87 \\
          & 40000 & 19.86 \\
          & 30000 & 21.90 \\
          & 10000 & 26.19 \\
          & 5000 & 31.28 \\
          \midrule[0.6pt]
          \multirow{5}*{NTOM} & 50000 & 19.90 \\
          & 40000 & 22.29\\
          & 30000 & 24.12\\
          & 10000 & 25.89\\
          & 5000 & 29.03\\
          \midrule[0.6pt]
          \multirow{5}*{POEM} & 50000 & 20.30 \\
          & 40000 & 21.39 \\
          & 30000 & 21.77 \\
          & 10000 & 28.10 \\
          & 5000 & 27.42 \\
          \midrule[0.6pt]
          \multirow{5}*{DivOE} & 50000 & 15.95 \\
          & 40000 & 15.75 \\
          & 30000 & 16.82 \\
          & 10000 & 17.52 \\
          & 5000 & 20.96 \\
        \bottomrule[1.5pt]
    \end{tabular}%}
    \label{tab:limited_outlier_cifar10}
\end{table*}

\paragraph{Experiments on Extrapolation via Data Augmentation} In Tables~\ref{tab:app_combine_pgd}, ~\ref{tab:experiments_own_idea},~\ref{tab:ood_cifar10_mixup_parallel} and~\ref{tab:ood_cifar100_mixup_parallel}, we replace the detailed realization of the inner-maximization of the informative extrapolation with the conventional data augmentation method, i.e., Mixup~\citep{zhang2017mixup}, and also explore the effectiveness of combining the CutMix~\citep{yun2019cutmix} in the learning framework. The results also confirm that the high-level idea of our DivOE is general to adopt different data manipulation or synthetic strategies to achieve the diversification target. The averaged and fine-grained performance results demonstrate the effectiveness of the proposed DivOE, which diversifies the given outlier for effective OOD detection. In addition, we also conduct experiments and combine them with the inner-maximization function. Compared with the pixel level manipulation, more coarse-grained data augmentation can also boost the performance if it can be conducted in an optimization way. If the original outliers are far away from the ID samples, data augmentation can serve as the basis for inner-maximization.

\begin{table}[!ht]
    \caption{DivOE with different inner extrapolation methods.}
    \centering
    \small
    \begin{tabular}{c|llcccc}
    \toprule[1.5pt]
        Dataset & Method & Extrapolation Method & FPR95 & AUROC & AUPR & ID-ACC \\ 
        \midrule[0.6pt]
        \multirow{5}*{CIFAR100} & OE & - & 27.67 & 91.89 & 97.93 & 75.41 \\ 
        ~ & GAN-synthesized [c] & - & 87.09 & 66.70 & 90.40 & 31.73 \\ 
        ~ & DivOE & PGD & \textbf{24.80} & \textbf{92.91} & \textbf{98.22} & 75.24 \\ 
        ~ & DivOE & Mixup & 25.70 & 92.46 & 98.07 & \textbf{76.06} \\ 
        ~ & DivOE & GAN-synthesized [c] & 69.69 & 81.82 & 95.16 & 29.11 \\ 
        \midrule[0.6pt]
        \multirow{5}*{CIFAR10} & OE & - & 13.76 & 97.53 & 99.43 & 94.51 \\ 
        ~ & GAN-synthesized [c] & - & 78.75 & 72.97 & 91.88 & 50.67  \\ 
        ~ & DivOE & PGD & \textbf{11.66} & \textbf{97.82} & \textbf{99.48} & 94.66 \\ 
        ~ & DivOE & Mixup & 12.87 & 97.59 & 99.42 & \textbf{94.83} \\ 
        ~ & DivOE & GAN-synthesized [c] & 61.34 & 80.14 & 93.23 & 61.23 \\ 
        \bottomrule[1.5pt]
    \end{tabular}
    \label{tab:app_gan}
\end{table}

\begin{table}[!ht]
\caption{The inner-maximization combined with different augmentation methods.}
    \centering
    \small
    \begin{tabular}{c|llcccc}
    \toprule[1.5pt]
        Dataset & Method & Augmentation Method & FPR95 & AUROC & AUPR & ID-ACC \\ 
        \midrule[0.6pt]
        \multirow{9}*{CIFAR100} & OE & - & 27.67 & 91.89 & 97.93 & 75.41 \\ 
        ~ & DivOE & Random & 26.16 & 92.39 & 98.06 & 75.56 \\ 
        ~ & DivOE & PGD & \textbf{24.80} & 92.91 & 98.22 & 75.24 \\ 
        ~ & DivOE & Mixup & 25.70 & 92.46 & 98.07 & \textbf{76.06} \\ 
        ~ & DivOE & CutMix & 28.46 & 91.70 & 97.89 & 75.62 \\ 
        ~ & DivOE & AugMix & 25.40 & 92.60 & 98.12 & 75.58 \\ 
        ~ & DivOE & Mixup+PGD & 25.14 & 92.79 & 98.18 & 75.78 \\ 
        ~ & DivOE & CutMix+PGD & 26.15 & 92.37 & 98.06 & 75.51 \\ 
        ~ & DivOE & AugMix+PGD & 24.92 & \textbf{93.01} & \textbf{98.24} & 75.48 \\ 
        \midrule[0.6pt]
        \multirow{9}*{CIFAR10} & OE & - & 13.76 & 97.53 & 99.43 & 94.51 \\ 
        ~ & DivOE & Random & 13.19 & 97.60 & 99.43 & 94.81 \\ 
        ~ & DivOE & PGD & 11.66 & 97.82 & 99.48 & 94.66 \\ 
        ~ & DivOE & Mixup & 12.87 & 97.59 & 99.42 & \textbf{94.83} \\ 
        ~ & DivOE & CutMix & 13.95 & 97.54 & 99.42 & 94.78 \\ 
        ~ & DivOE & AugMix & 12.41 & 97.82 & \textbf{99.49} & 94.71 \\ 
        ~ & DivOE & Mixup+PGD & 12.71 & 97.70 & 99.47 & 94.68 \\ 
        ~ & DivOE & CutMix+PGD & 13.12 & 97.72 & 99.47 & 94.74 \\ 
        ~ & DivOE & AugMix+PGD & \textbf{11.49} & \textbf{97.87} & \textbf{99.49} & 94.71 \\ 
        \bottomrule[1.5pt]
    \end{tabular}
    \label{tab:app_combine_pgd}
\end{table}

\begin{table*}[ht]
    \caption{Experiments of DivOE via Mixup ($\%$). (averaged by five OOD test sets)}
    %\vspace{-1mm}
    \centering
    \footnotesize
    %\renewcommand\arraystretch{0.9}
    %\resizebox{\textwidth}{!}{
    \begin{tabular}{c|l|cccc}
        \toprule[1.5pt]
        $\mathcal{D}_\text{in}$ &  Method & FPR95$\downarrow$ & AUROC$\uparrow$ & AUPR$\uparrow$  & ID-ACC$\uparrow$  \\
        \midrule[0.6pt]
        \multirow{16}*{CIFAR-10}
         & Energy (w. $\mathcal{D}_\text{aux}$) & 18.69 & 90.51 & 96.15 & 94.59   \\
         & Energy+Mixup (w. $\mathcal{D}_\text{aux}$) & 14.82 & 92.72  & 96.96 & 94.84  \\
         & Energy+DivOE (Mixup) (w. $\mathcal{D}_\text{aux}$) & 15.02 & 92.35 & 96.78 & 94.98 \\
         & Energy+CutMix (w. $\mathcal{D}_\text{aux}$) & 19.28 & 90.30 & 96.04 & 94.90 \\
         & NTOM (with MSP) & 12.96 & 97.73 & 99.46 & 95.11  \\
         & NTOM+Mixup & 12.25 & 97.85 & 99.50 & 95.04  \\
         & NTOM+DivOE (Mixup) & 12.20 & 97.81 & 99.48 & 94.98 \\
         & POEM & 18.32 & 90.81 & 96.27 & 94.70  \\
         & POEM+Mixup & 14.70 & 92.76 & 97.00 & 94.98 \\
         & POEM+DivOE (Mixup) & 15.39 & 91.89 & 96.62 & 94.77 \\
         & POEM+CutMix & 17.70 & 91.25 & 96.49 & 94.88 \\
         & OE & 13.62 & 97.52 & 99.41 & 94.57 \\
         & OE+Mixup & 13.08 & 97.68 & 99.46 & 94.89 \\
         & OE+DivOE (Mixup) & 12.87 & 97.59 & 99.42 & 94.83 \\
         & OE+CutMix & 14.74 & 97.45 & 99.41 & 94.93 \\
        \midrule[0.6pt]
        \multirow{16}*{CIFAR-100}
         & Energy (w. $\mathcal{D}_\text{aux}$) & 27.49 & 90.72 & 97.39 & 75.23   \\
         & Energy+Mixup (w. $\mathcal{D}_\text{aux}$) & 25.27 & 92.02  & 97.81 & 75.49  \\
         & Energy+DivOE (Mixup) (w. $\mathcal{D}_\text{aux}$) &  24.47 & 92.04 & 97.79 & 75.53  \\
         & Energy+CutMix (w. $\mathcal{D}_\text{aux}$) & 29.87 & 89.72 & 97.13 & 75.59  \\
         & NTOM (with MSP) & 26.13 & 92.32 & 98.07 & 75.54  \\
         & NTOM+Mixup & 25.41 & 92.75 & 98.18 & 75.60  \\
         & NTOM+DivOE (Mixup) & 24.44 & 92.73 & 98.15 & 75.84  \\
         & POEM & 25.57 & 91.73 & 97.69 & 76.01  \\
         & POEM+Mixup & 24.46 & 92.19 & 97.84 & 75.57  \\
         & POEM+DivOE (Mixup) & 24.14 & 92.16 & 97.81 & 75.57  \\
         & POEM+CutMix & 28.23 & 90.75 & 97.41 & 75.90 \\
         & OE & 27.29 & 91.98 & 97.96 & 75.51 \\
         & OE+Mixup & 27.20 & 92.22 & 98.03 & 75.87 \\
         & OE+DivOE (Mixup) & 25.70 & 92.46 & 98.07 & 76.06 \\
         & OE+CutMix & 31.14 & 90.98 & 97.70 & 75.66 \\
        \bottomrule[1.5pt]
    \end{tabular}
    %}
    \label{tab:experiments_own_idea}
\end{table*}

\begin{table*}[ht]
    \caption{OOD detection performance on CIFAR-10~\citep{krizhevsky2009learning_cifar10} as ID. All methods are trained on the same backbone. Values are percentages. $\uparrow$ indicates larger values are better, and $\downarrow$ indicates smaller values are better. (averaged by multiple trials) }
    %\vspace{-1mm}
    \centering
    \resizebox{\textwidth}{!}{
    \begin{tabular}{lccccccccccc}
        \toprule[1.5pt]
         \multirow{3}*{\textbf{Method}} & \multicolumn{10}{c}{\textbf{OOD dataset}} & \multirow{3}*{\textbf{ID ACC}} \\
         \cmidrule{2-11}
        ~ & \multicolumn{2}{c}{\textbf{Textures}} & \multicolumn{2}{c}{\textbf{Places365}} & \multicolumn{2}{c}{\textbf{iSUN}} & \multicolumn{2}{c}{\textbf{LSUN-r}} & \multicolumn{2}{c}{\textbf{LSUN-c}} &~ \\
        \cmidrule{2-11}
        ~ & FPR95$\downarrow$ & AUROC$\uparrow$ & FPR95$\downarrow$ & AUROC$\uparrow$ & FPR95$\downarrow$ & AUROC$\uparrow$ & FPR95$\downarrow$ & AUROC$\uparrow$ & FPR95$\downarrow$ & AUROC$\uparrow$ \\
        \midrule[0.6pt]
         Energy (w. $\mathcal{D}_\text{aux}$)  & 34.83 & 84.76 & 58.34 & 67.95 & 0.00 & 100.00 & 0.00 & 100.00 & 0.28 & 99.84 & 94.59 \\
         Energy+Mixup (w. $\mathcal{D}_\text{aux}$)  & 20.03 & 90.84  & 48.51 & 75.38 & 0.00 & 100.00 & 0.00 & 100.00 & 5.55 & 97.38 & 94.84\\
         Energy+DivOE (Mixup) (w. $\mathcal{D}_\text{aux}$)  & 23.25 & 89.57 & 51.50 & 72.39 & 0.00 & 100.00 & 0.00 & 100.00 & 0.38 & 99.80 & 94.98 \\
         NTOM(with MSP) & 21.04 & 96.31 & 43.27 & 92.43 & 0.00 & 100.00 & 0.00 & 100.00 & 0.52 & 99.92 & 95.11 \\
         NTOM+Mixup & 17.98 & 96.83 & 42.13 & 92.65 & 0.00 & 100.00 & 0.00 & 100.00 & 1.12 & 99.79 & 95.04 \\
         NTOM+DivOE (Mixup) & 18.12 & 96.76 & 42.16 & 92.40 & 0.00 & 100.00 & 0.00 & 100.00 & 0.71 & 99.88 & 94.98 \\
         POEM & 34.52 & 84.54 & 57.00 & 69.55 & 0.00 & 100.00 & 0.00 & 100.00 & 0.09 & 99.96 & 94.70 \\
         POEM+Mixup & 21.19 & 89.99 & 48.59 & 75.45 & 0.00 & 100.00 & 0.00 & 100.00 & 3.70 & 98.35 & 94.98 \\
         POEM+DivOE (Mixup) & 22.36 & 89.03 & 54.29 & 70.61 & 0.00 & 100.00 & 0.00 & 100.00 & 0.32 & 99.83 & 94.77\\
         OE & 21.88 & 96.14 & 45.22 & 91.64 & 0.00 & 100.00 & 0.00 & 100.00 & 1.02 & 99.84 & 94.57 \\
         OE+Mixup & 19.01 & 96.68 & 44.70 & 91.97 & 0.00 & 100.00 & 0.03 & 100.00 & 1.67 & 99.74 & 94.89 \\
         OE+DivOE (Mixup) & 19.92 & 96.40 & 43.59 & 91.69 & 0.00 & 100.00 & 0.00 & 100.00 & 0.82 & 99.86 & 94.83 \\
         
        \bottomrule[1.5pt]
    \end{tabular}}
    \label{tab:ood_cifar10_mixup_parallel}
\end{table*}

\begin{table*}[ht]
    \caption{OOD detection performance on CIFAR-100~\citep{krizhevsky2009learning_cifar10} as ID. All methods are trained on the same backbone. Values are percentages. $\uparrow$ indicates larger values are better, and $\downarrow$ indicates smaller values are better. (averaged by multiple trials) }
    %\vspace{-1mm}
    \centering
    \resizebox{\textwidth}{!}{
    \begin{tabular}{lccccccccccc}
        \toprule[1.5pt]
         \multirow{3}*{\textbf{Method}} & \multicolumn{10}{c}{\textbf{OOD dataset}} & \multirow{3}*{\textbf{ID ACC}} \\
         \cmidrule{2-11}
        ~ & \multicolumn{2}{c}{\textbf{Textures}} & \multicolumn{2}{c}{\textbf{Places365}} & \multicolumn{2}{c}{\textbf{iSUN}} & \multicolumn{2}{c}{\textbf{LSUN-r}} & \multicolumn{2}{c}{\textbf{LSUN-c}} &~ \\
        \cmidrule{2-11}
        ~ & FPR95$\downarrow$ & AUROC$\uparrow$ & FPR95$\downarrow$ & AUROC$\uparrow$ & FPR95$\downarrow$ & AUROC$\uparrow$ & FPR95$\downarrow$ & AUROC$\uparrow$ & FPR95$\downarrow$ & AUROC$\uparrow$ \\
        \midrule[0.6pt]
         Energy (w. $\mathcal{D}_\text{aux}$)  & 57.45 & 80.77 & 79.60 & 72.95 & 0.00 & 100.00 & 0.00 & 100.00 & 0.41 & 99.90 & 75.23 \\
         Energy+Mixup (w. $\mathcal{D}_\text{aux}$)  & 42.00 & 87.24 & 74.82 & 75.16 & 0.00 & 100.00 & 0.00 & 100.00 & 9.53 & 97.69 & 75.49 \\
         Energy+DivOE (Mixup) (w. $\mathcal{D}_\text{aux}$)  & 44.64 & 86.13 & 77.01 & 74.28 & 0.00 & 100.00 & 0.00 & 100.00 & 0.72 & 99.80 & 75.53\\
         NTOM(with MSP) & 52.99 & 85.03 & 75.53 & 77.08 & 0.01 & 100.00 & 0.00 & 100.00 & 2.14 & 99.52 & 75.54\\
         NTOM+Mixup & 42.36 & 88.06 & 74.07 & 77.89 & 0.09 & 99.98 & 0.00 & 100.00 & 10.53 & 97.84 & 75.60 \\
         NTOM+DivOE (Mixup) & 44.37 & 87.21 & 74.32 & 77.20 & 0.02 & 99.99 & 0.01 & 100.00 & 3.50 & 99.23 & 75.84 \\
         POEM & 51.55 & 83.56 & 76.23 & 75.10 & 0.00 & 100.00 & 0.00 & 100.00 & 0.09 & 99.98 & 76.01 \\
         POEM+Mixup & 39.70 & 88.09 & 77.06 & 74.39 & 0.00 & 100.00 & 0.00 & 100.00 & 5.53 & 98.47 & 75.57 \\
         POEM+DivOE (Mixup) & 43.30 & 86.49 & 77.12 & 74.43 & 0.00 & 100.00 & 0.00 & 100.00 & 0.28 & 99.90 & 75.57 \\
         OE & 54.31 & 84.59 & 77.18 & 76.30 & 0.09 & 99.98 & 0.03 & 100.00 & 4.84 & 99.04 & 75.51 \\
         OE+Mixup & 47.11 & 86.53 & 74.27 & 77.66 & 0.22 & 99.93 & 0.09 & 99.99 & 14.29 & 97.00 & 75.87 \\
         OE+DivOE (Mixup) & 48.35 & 86.07 & 75.04 & 77.33 & 0.13 & 99.97 & 0.02 & 100.00 & 4.97 & 98.94 & 76.06 \\
        \bottomrule[1.5pt]
    \end{tabular}}
    \label{tab:ood_cifar100_mixup_parallel}
\end{table*}

\section{Broader Impact}
\label{app:impact}

OOD detection is an important aspect of deploying reliable deep learning systems in the real world~\citep{Nguyen_2015_CVPR,hendrycks2022scaling}. It is especially important for those safety-critical applications~\citep{bommasani2021opportunities} like financial or medical intelligence, in which a reliable model should have the capability to distinguish those samples having different label spaces (e.g. animals) instead of giving a prediction based on existing classes (e.g., finance products or disease). Our research work studies a general and practical research problem in outlier exposure for effective OOD detection, considering the limited informative auxiliary outliers that can not well represent the boundary outliers for the pre-trained model on the classification tasks with ID samples. It is important for effective OOD detection to gain better empirical performance through learning from the auxiliary outliers, and also conceptually figure out the critical problem in this general learning framework.

Although we take a step forward in fine-tuning with auxiliary outliers for effective OOD detection, it is not the end of this direction since there are still many practical problems to be addressed. For example, although learning with the surrogate OOD data can boost the performance of the given model in identifying the OOD inputs, fine-tuning requires extra training time and computational cost, which may bring extra resources for adjusting. Same to other OE-based methods, how to reduce the extra training cost for the model regularization is also a future direction that is worth exploring.

\begin{figure}[htb]
  \centering
  \includegraphics[scale=0.14]{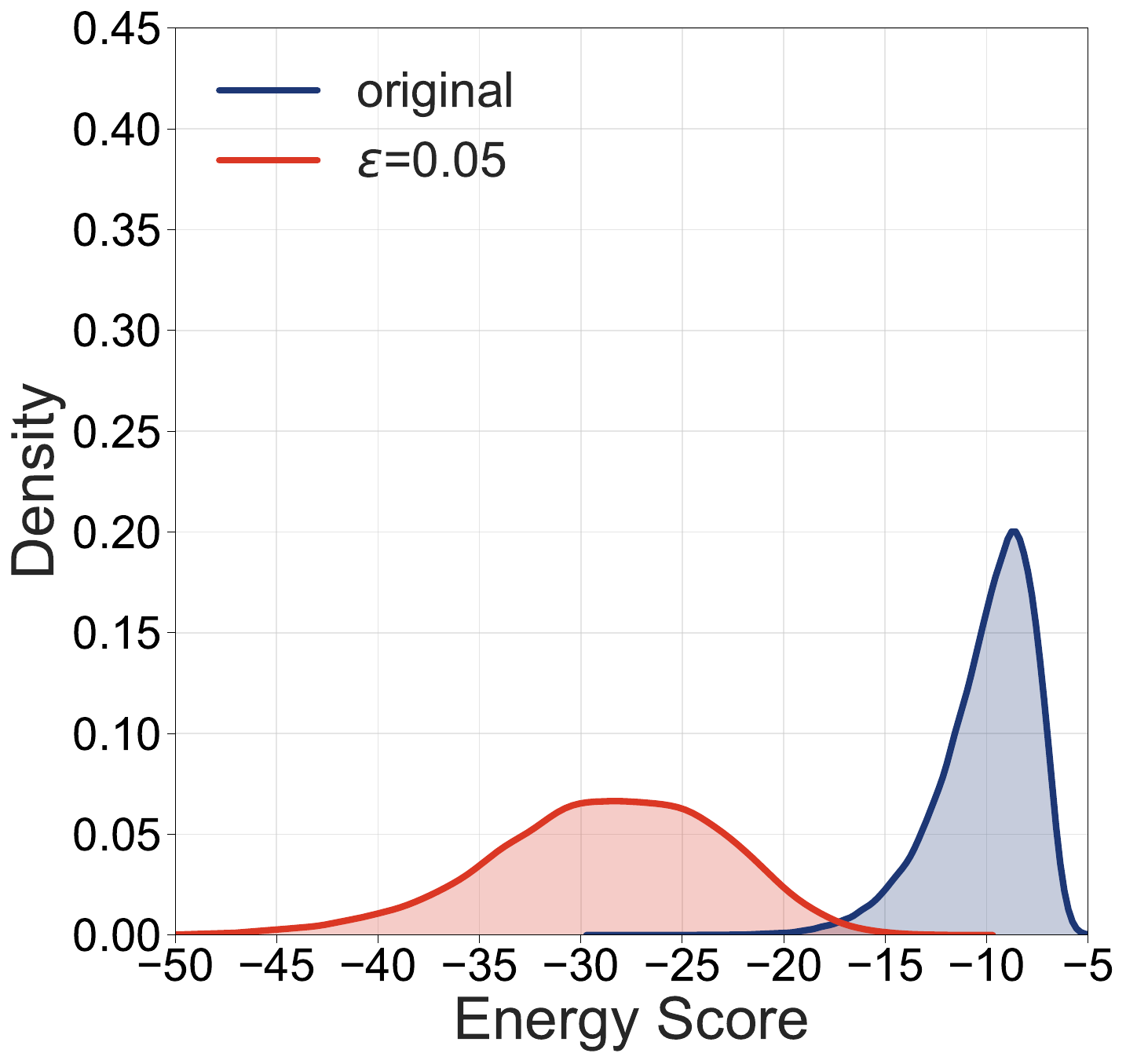}
  \includegraphics[scale=0.14]{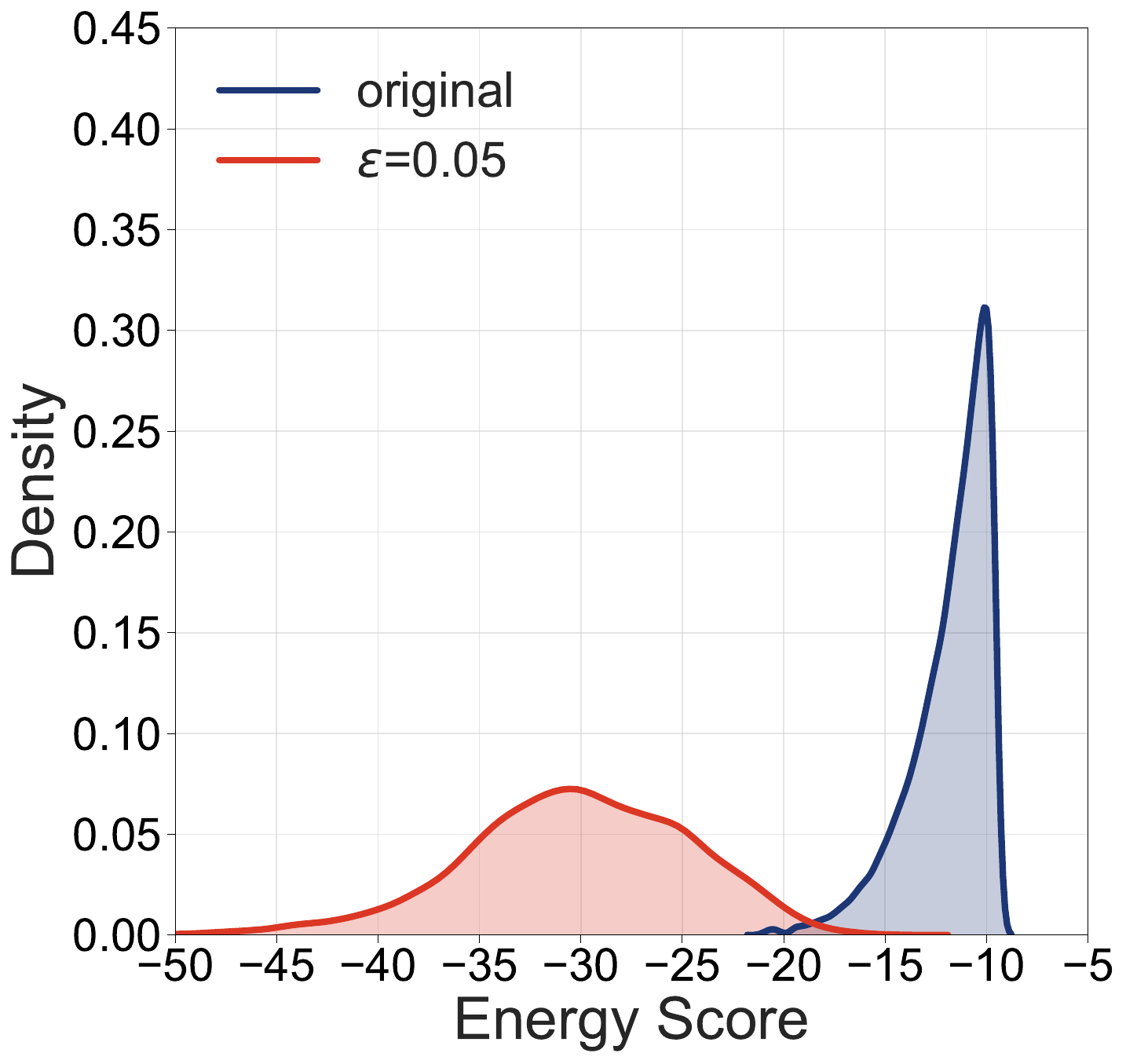}
  \includegraphics[scale=0.14]{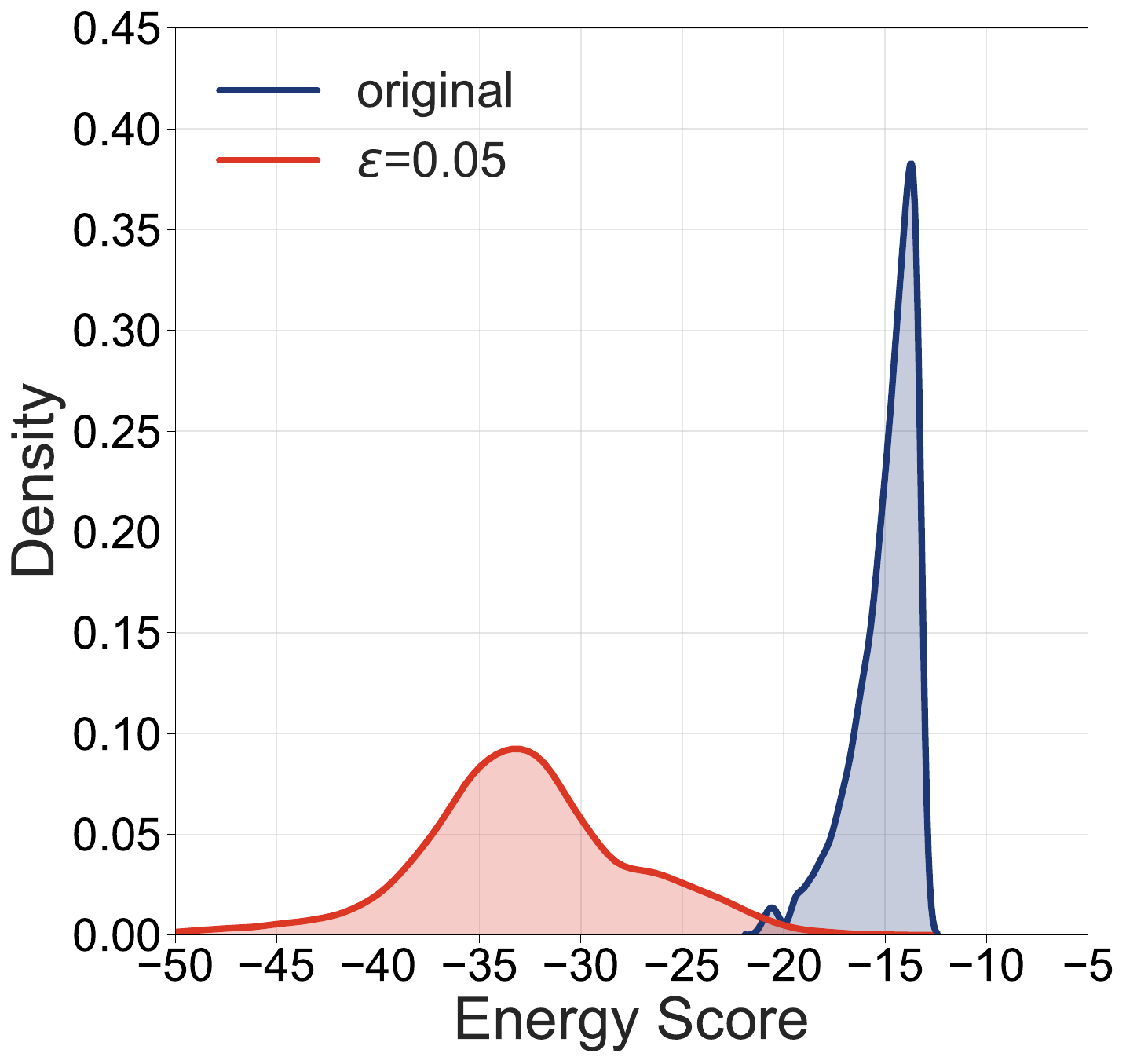}
  \includegraphics[scale=0.14]{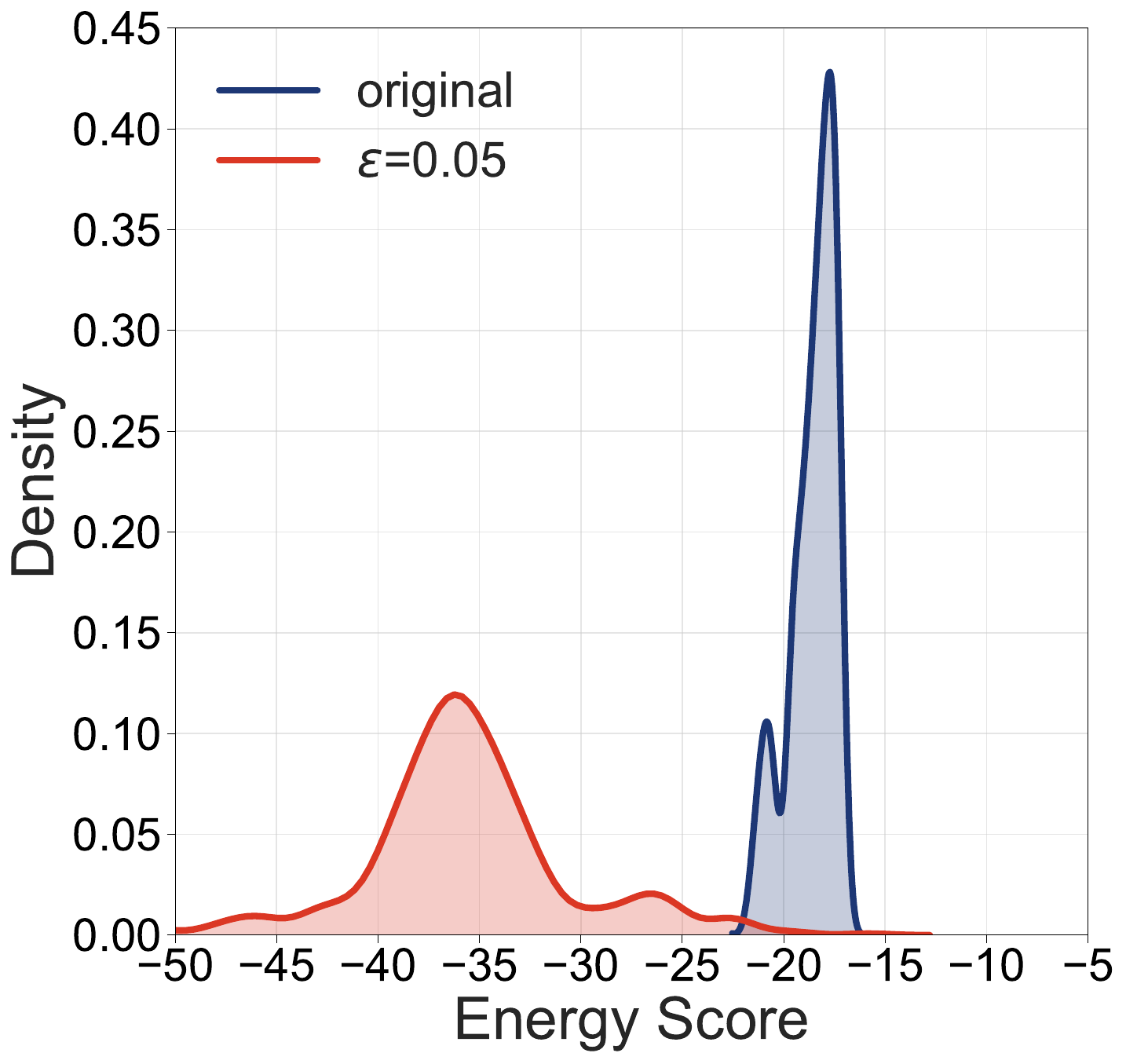}
  \caption{Verification on the effect of extrapolation via Energy Score. The density of the energy score distributions of the $\epsilon=0.5$ extrapolated outliers under different numbers of outlier samples, 100000 for left panel, 50000 for left-middle panel, 10000 for right-middle panel, 1000 for right panel.}
  \label{fig7}
  %\vspace{-4mm}
\end{figure}

\begin{figure}[htbp]
    \centering
	\begin{minipage}{0.45\linewidth}
		\centering
		%\vspace{-0.6cm}
		\includegraphics[width=0.8\linewidth]{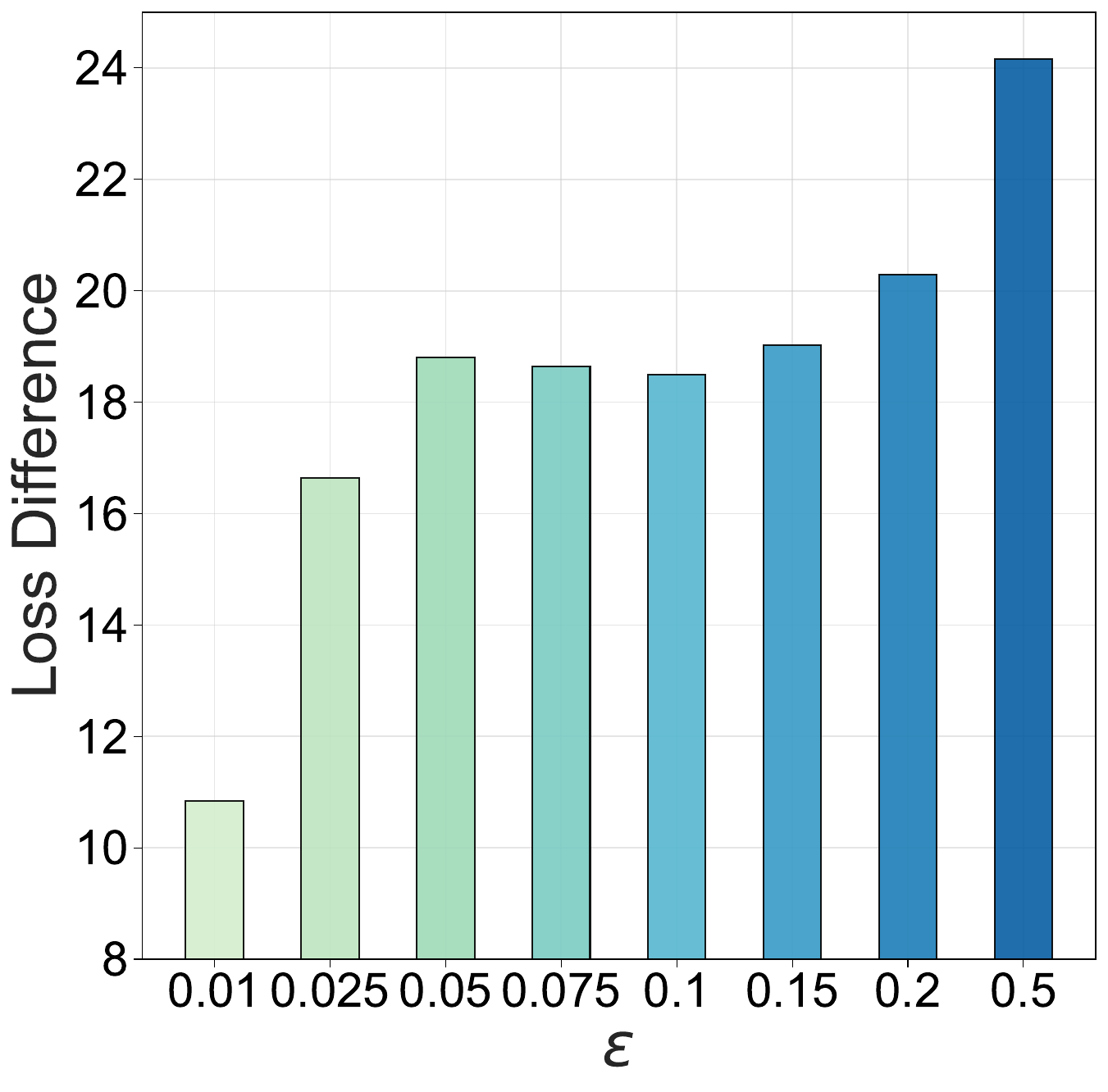}
		\caption{OOD loss difference (i.e., the maximization part in Eq.~(\ref{eq:con_obj})) between original outliers and the extrapolated outliers with different $\epsilon$ in DivOE.}
		\label{fig8:loss_difference}
	\end{minipage}
	\hfill
	\begin{minipage}{0.45\linewidth}
		\centering
        \resizebox{0.75\linewidth}{!}{
        \begin{tabular}{c|ccccccccccc}
        \toprule[1.5pt]
          $\epsilon$ & Loss difference\\
        \midrule[0.6pt]
          0.01 & 10.84 \\
          0.025 & 16.64 \\
          0.05 & 18.81 \\
          0.075 & 18.64 \\
          0.1 & 18.50 \\
          0.15 & 19.03 \\
          0.2 & 20.29 \\
          0.5 & 24.16 \\
        \bottomrule[1.5pt]
        \end{tabular}}
        \captionof{table}{OOD loss difference (i.e., the maximization part in Eq.~(\ref{eq:con_obj})) between original outliers and the extrapolated outliers with different $\epsilon$ in DivOE.}
        \label{tab:loss_difference}
	    \end{minipage}
\end{figure}

\newpage
\begin{figure}[p]
  \centering
  \includegraphics[width=1\columnwidth]{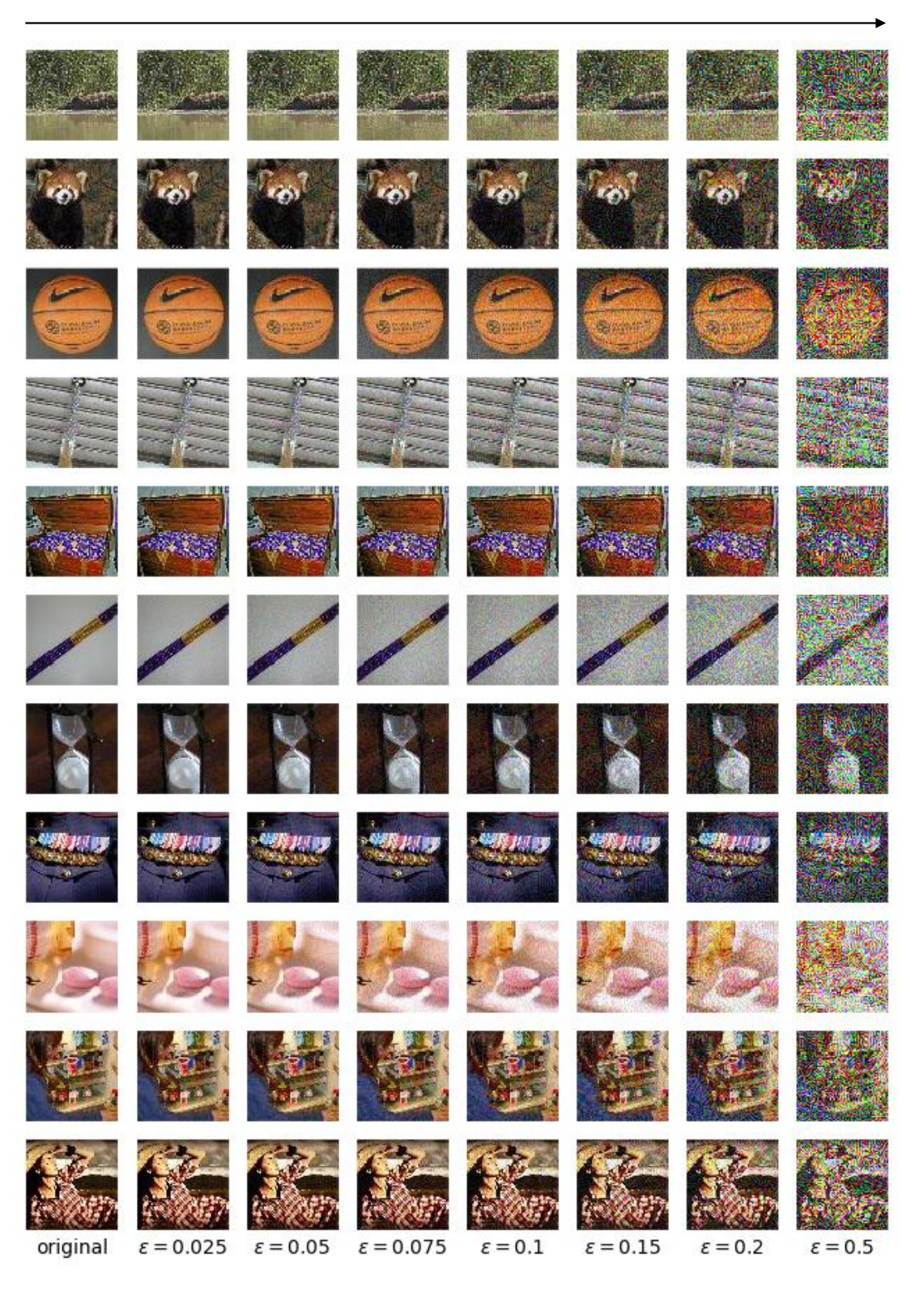}
  \caption{Examples of the extrapolated auxiliary outliers by controlling the $\epsilon$ in Eq.~(\ref{eq:perturbation_obj}) from 0 (i.e., original) to 0.5. By different levels of pixel manipulation, the extrapolated samples can cover more potential decision areas for OOD inputs.}
  \label{fig:visual}
\end{figure}

\clearpage

\end{document}